\documentclass[twoside,11pt]{article}

\usepackage[preprint]{jmlr2e}

\usepackage{subfig}
\usepackage[usenames]{color}
\usepackage{enumerate}
\usepackage{url}
\usepackage{amsfonts,mathrsfs}
\usepackage{amssymb,amsmath}
\usepackage{verbatim}
\usepackage{acronym}
\usepackage{mathtools}
\usepackage{thmtools,thm-restate}
\usepackage{graphicx}
\usepackage{algorithm}
\usepackage[noend]{algpseudocode}
\def\fskip#1{}

\def\E{\mathbb{E}}

\def\H{\mathcal{H}}
\def\G{\mathcal{G}}
\def\V{\mathcal{V}}

\def\CP{\mathcal{P}}

\def\argmax{\mathop{\rm argmax}}

\newtheorem{assumption}{Assumption}

\usepackage{lastpage}

\ShortHeadings{}{}
\firstpageno{1}

\usepackage{times}

\begin{document}

\title{\vspace{-0.3cm} Fully Online Decentralized Learning in Stochastic Games with Unknown Independent Chains}

\author{%
	\name{S. Rasoul Etesami} \email{etesami1@illinois.edu}\\
	\addr Department of Industrial and Systems Engineering, Coordinated Science Laboratory\\ University of Illinois Urbana-Champaign, Urbana, IL, USA 61801}


\maketitle

\begin{abstract}%
We consider stochastic games with independent controlled chains and unknown transition kernels, in which players observe only their own local states and realized payoffs. We develop a fully online decentralized and uncoordinated mirror-descent algorithm that operates in the dual space of occupancy measures for approximating stationary Nash equilibrium (NE) policies. The algorithm uses a single transition/reward sample at every primitive time step, allows players to update using only local information, and requires neither coverage of the joint state space nor synchronized episodes. Under standard uniform-ergodicity and finite-coverage assumptions, we show that, with high probability, the time-averaged fixed-comparator regret decays at the canonical $\widetilde{\mathcal{O}}(T^{-1/2})$ rate, up to logarithmic factors and polynomial dependence on the game parameters. In particular, the complexity depends on the cover times of the individual local state spaces rather than on that of the product state space, thereby avoiding exponential dependence on the number of players and the sizes of the joint state and action spaces. We further show that the resulting finite-time regret bound provides an approximate coarse-correlated-equilibrium guarantee. Such a guarantee is natural for arbitrary reward functions, since, in this setting, computing a stationary $\epsilon$-Nash equilibrium ($\epsilon$-NE) is known to be PPAD-hard. Under an additional global variational-stability condition, we show that the same fully online algorithm converges asymptotically in the last iterate to a stationary $\epsilon$-NE. Our results thus provide the first fully online and scalable learning framework for stochastic games with unknown independent chains. Our algorithmic development can also be viewed as a primal-dual framework for Markov games that fully exploits the independence and local structure of the players' controlled transition chains and may be of independent interest.
\end{abstract}
\begin{keywords}
Stochastic games; independent chains; stationary Nash equilibrium; online decentralized learning; occupancy measures; unknown transition kernels; coarse-correlated equilibrium.
\end{keywords}

\section{Introduction}

Learning in multi-agent systems becomes particularly challenging when strategic interactions, dynamic stochastic state evolution, and limited information are present simultaneously \citep{zhang2021multi}. Unlike in a single-agent Markov decision process \citep{altman2021constrained}, where the environment is typically viewed as exogenous, each agent in a stochastic game faces an environment that is partly shaped by the evolving decisions of other strategic agents \citep{shapley1953stochastic,basar1999dynamic}. Consequently, even when the underlying transition dynamics are stationary, the effective optimization problem faced by each player changes as the other players adapt. This coupling raises a fundamental question in decentralized learning: under what structural conditions can self-interested agents, using only locally available observations, learn meaningful equilibrium behavior without coordination or knowledge of the underlying model?

For general-sum games \citep{nash1950equilibrium}, one cannot expect an affirmative answer without additional structure. Computing a \emph{Nash equilibrium} (NE) is PPAD-hard even for finite normal-form games \citep{daskalakis2009complexity}, and stochastic games contain such static games as special cases. Much of the algorithmic literature has therefore focused on structured settings in which equilibrium learning becomes tractable \citep{mertikopoulos2019learning}. Examples include two-player zero-sum stochastic games, where fictitious-play or independent-learning procedures can achieve asymptotic or finite-sample convergence guarantees \citep{daskalakis2021independent,sayin2022fictitious,zhao2021provably}, and \emph{Markov potential games} (MPGs), where the potential structure can be exploited to establish global convergence guarantees for policy-gradient-type methods \citep{fox2022independent,etesami2026markovpotential,maheshwari2025independent,zhang2021gradient,leonardos2021global}. These results demonstrate how suitable structure can fundamentally alter the computational and statistical difficulty of multi-agent learning. However, their structural assumptions may exclude systems in which agents have distinct objectives while their  state processes evolve largely independently.

This paper studies a different source of tractability: \emph{independence in the controlled state dynamics}. We consider $n$-player stochastic games in which each player controls its own finite-state Markov chain. A player's local transition kernel depends only on its own state and action, while strategic interactions occur through the reward functions, which may depend on the states and actions of all players. This model is natural in systems where individual agents have local physical dynamics but interact through a common environment, prices, congestion, or shared resources. Examples include wireless communication systems \citep{altman2007constrained,narayanan2017large}, decentralized energy management \citep{etesami2018stochastic}, and decentralized robotic navigation and coverage \citep{marden2009cooperative}. Importantly, independence of the transition processes does not make the game strategically separable: a player's payoff can still depend arbitrarily on the behavior of all other players. Our information model is deliberately restrictive. Each player observes only its own local state, chosen action, realized payoff, and subsequent local state; it neither observes the states or actions of the other players nor knows its own transition kernel. Thus, each player must simultaneously learn its local dynamics and adapt strategically to the other players from a single continuing trajectory. Moreover, we seek algorithms whose complexity is governed by the sizes and mixing properties of the \emph{local} controlled chains rather than those of the product state--action space. This distinction is important because the joint state and action spaces grow exponentially with the number of players, even when each individual controlled chain is small.

The independence structure permits a useful reformulation of the game in terms of stationary occupancy measures. Under stationary policies, independence of the local chains implies that the long-run joint state--action distribution factors into the product of local stationary occupancies \citep{etesami2022learning}. Consequently, each player's average reward is linear in its own occupancy measure when the occupancies of the other players are fixed. This converts the original dynamic interaction into a continuous-action static virtual game over local occupancy polytopes. When the transition kernels are unknown, however, these polytopes are themselves unknown and must be learned from data. This creates a coupled learning problem in which players must perform online optimization while simultaneously estimating the feasible regions over which the optimization is carried out. 

A further difficulty is that a genuinely online implementation cannot rely on the episodic simplifications commonly used to separate estimation from optimization. If policies are frozen for sufficiently long episodes, transition and payoff-gradient estimates can be formed from approximately stationary data before the next policy update. Along a continuing online trajectory, by contrast, policies change while data are being generated. The state distribution therefore tracks a moving stationary distribution, samples are temporally dependent, and confidence regions for the unknown kernels evolve asynchronously across players. Establishing regret guarantees in this setting requires controlling all of these effects without introducing synchronization among the players. 

The objective of this paper is to show that the local independence structure is strong enough to overcome these difficulties. We develop a fully online occupancy-based mirror-descent method in which each primitive interaction produces one transition/reward observation and one learning update. Each player maintains a confidence set for its unknown local transition kernel, which is updated only when a local state--action count reaches a dyadic threshold. Between such updates, the feasible occupancy region remains unchanged. This lazy confidence-update mechanism is useful both computationally and analytically: each player updates its model independently, its feasible region changes only logarithmically many times, and no global episode boundaries or synchronization signals are required.

For arbitrary reward functions, unconditional convergence to a NE cannot in general be expected, given the computational hardness inherited from general-sum static games. We therefore first analyze the algorithm through fixed-comparator regret and establish a high-probability $\widetilde{\mathcal{O}}(T^{-1/2})$ average regret bound against every fixed feasible stationary deviation. This directly yields an approximate \emph{coarse-correlated equilibrium} guarantee for the empirical distribution of play \citep{cesa2006prediction}. Crucially, the resulting complexity depends only on local state and action dimensions, mixing parameters, and cover times, rather than on the cover time or cardinality of the exponentially larger joint state--action space. We then identify conditions under which the stronger conclusion of NE convergence can be recovered. Under an additional variational-stability condition \citep{mertikopoulos2019learning} and diminishing step sizes, we show that the same decentralized fully online procedure converges in the last iterate to the unique stationary $\epsilon$-NE with arbitrarily high probability.

\subsection{Related Work}

Our work lies at the intersection of learning in stochastic games, online learning in MDPs, and decentralized primal-dual methods for stochastic games. For general stochastic games, much of the finite-sample literature has focused on settings with additional game-theoretic structure. Independent policy-gradient and Q-learning methods have been shown to converge in two-player zero-sum stochastic games under decentralized learning protocols \citep{daskalakis2021independent,sayin2021decentralized}. Another important line of work concerns MPGs, where policy-gradient and gradient-play methods exploit the potential structure to obtain global convergence guarantees for stationary NE policies \citep{zhang2022global,zhang2021gradient,jordan2024independent,leonardos2021global,ding2022independent,sun2023provably}. These results leverage restrictions on payoff interactions to make equilibrium computation tractable, whereas our setting allows general reward coupling and derives scalability from the independence of the players' controlled transition processes. More recently, several works have combined independent transition dynamics with the Markov potential structure. \citep{nakhleh2025fictitious} studied fictitious play in product Markov games with Kullback--Leibler (KL) control costs and established asymptotic convergence to a NE for the subclass of product MPGs. \citep{jordan2026independent} studied MPGs with independent chains under a more restrictive information structure in which players only partially observe their local states, and established communication-free learning of approximate NE with quasi-polynomial sample and computational complexity.

A related recent line of work studies decentralized learning in \emph{Markov $\alpha$-potential} games \citep{guo2025markov,das2024learning,etesami2026markovpotential}, which generalize MPGs by allowing violations of the Markov potential property up to an additive error $\alpha$. \citep{guo2025markov} derived Nash-regret guarantees for projected gradient-ascent and sequential maximum-improvement schemes under exact Q-function oracle access. \citep{das2024learning} introduced constrained $\alpha$-Markov potential games, which further accommodate coupled constraints among players, and developed decentralized best-response-based algorithms with finite-time guarantees for approximate NE. A fully online decentralized learning framework with finite-time guarantees has only recently been developed for Markov $\alpha$-potential games, where KL-projected natural policy-gradient algorithms achieve high-probability Nash-regret guarantees under both episodic and fully online learning \citep{etesami2026markovpotential}. Like the algorithm developed here, its fully online method operates along a continuing trajectory using one realized sample per interaction. The two frameworks, however, exploit different sources of tractability: the former relies on an approximate Markov-potential structure in discounted Markov games, whereas our work imposes no potential structure on the rewards and instead exploits independent controlled chains in an average-reward setting.

Our analysis also builds on ideas from online learning in MDPs. Occupancy measures provide a convex representation of stationary policies and have been widely used to develop regret-minimization algorithms for adversarial MDPs \citep{cardoso2019large}. In particular, online mirror-descent methods over occupancy polytopes yield sublinear regret when losses vary across episodes, while confidence-set constructions extend such methods to unknown transition kernels \citep{rosenberg2019online}. Related work has also considered bandit feedback with unknown dynamics in online MDPs \citep{jin2020learning}. These single-agent techniques motivate our use of occupancy optimization and confidence regions, but do not directly apply to our setting: each player's effective reward vector is generated endogenously by the evolving policies and state distributions of the other players, while all players learn simultaneously along their own continuing trajectories.

More directly, stochastic games with independent state processes have been studied as a structured class of decentralized dynamic games. \citep{altman2007constrained} established equilibrium existence for cost-coupled stochastic games with independent state processes, and subsequent work characterized stationary equilibria through mathematical programming formulations \citep{singh2014characterization}. For learning problems, \citep{qiu2021provably} showed that, in two-player zero-sum stochastic games with independent chains, fictitious-play policy-optimization algorithms can achieve $\widetilde{\mathcal{O}}(\sqrt{T})$ regret, assuming that each player has knowledge of the opponent's past strategies. \citep{etesami2022learning} introduced an occupancy-based dual formulation and developed decentralized dual-averaging procedures for $n$-player stochastic games with independent chains. That work demonstrated that independence can be leveraged as a source of computational tractability and established finite-time convergence guarantees in terms of the averaged \emph{Nikaido--Isoda gap} \citep{nikaido1955note}. Its main algorithms, however, assume known local transition kernels and operate in an episodic fashion, thereby requiring synchronization among players. Unknown transition dynamics were subsequently addressed by \citep{qin2024learning}, who combined the occupancy formulation with confidence sets and decentralized online mirror descent. Their algorithm learns the local transition models from observed trajectories and, under a variational-stability assumption, converges asymptotically to a stationary $\epsilon$-NE. The learning protocol, however, remains episodic: policies are held fixed while sufficiently many local observations are collected, and an episode-ending signaling mechanism coordinates policy updates across players. Extending this framework to a fully online setting presents significant additional challenges, as continuous policy updates eliminate the stationary sampling environment provided by episodic learning and synchronization.

The present paper removes this episodic architecture through delayed conditioning, coupling with frozen-policy dynamics, sensitivity bounds for policy and kernel movement, tracking of time-varying stationary occupancies, and careful treatment of asynchronous confidence-set updates. Players learn from every primitive interaction, confidence sets evolve asynchronously using only local information, and no episode boundaries, dedicated covering phases, or synchronization signals are required. Beyond this algorithmic distinction, we establish a finite-time, high-probability fixed-comparator regret bound for arbitrary reward functions and distinguish the resulting coarse-correlated-equilibrium guarantee from the stronger last-iterate $\epsilon$-NE convergence guarantee, which requires an additional variational-stability assumption.

\subsection{Contributions}

We study stochastic games with independent controlled chains and unknown transition kernels and develop a fully decentralized online algorithm for learning stationary equilibrium policies. Our main contributions are as follows.
\begin{itemize}
\item We develop a fully online, decentralized, and uncoordinated algorithm that uses one sample per time step. At each primitive time $t$, every player observes one local transition and realized payoff, updates its local transition counts, forms an importance-weighted payoff-gradient estimate, and takes one mirror-descent step. The algorithm requires no covering episodes, burn-in phases, or synchronization signals. Confidence sets are updated asynchronously only when a local state--action count reaches a dyadic threshold, so the feasible occupancy region remains fixed between updates and changes only logarithmically many times over a finite horizon.

\item For arbitrary reward functions, we establish a high-probability fixed-comparator regret bound of order $\widetilde{\mathcal O}(T^{-1/2})$ after a local-cover transient period. The bound depends only on local state and action dimensions, mixing parameters, and cover times, rather than on those of the exponentially larger joint state--action space. Consequently, the empirical distribution of play satisfies an approximate coarse-correlated-equilibrium guarantee.

\item We further identify a regime in which the stronger NE conclusion can be recovered. Under an additional variational-stability assumption and diminishing step sizes, we prove that the same fully online decentralized algorithm converges asymptotically in the last iterate to the unique stationary $\epsilon$-NE with arbitrarily high probability. Thus, the finite-time no-regret guarantee requires no structural assumptions on the reward functions, while last-iterate NE convergence follows under stability of the induced occupancy game.
\end{itemize}

\subsection{Organization}

The rest of the paper is organized as follows. In Section~\ref{sec:formulation}, we formulate stochastic games with independent controlled chains and unknown transition kernels and introduce the stationary equilibrium concepts used throughout the paper. Section~\ref{sec:dual} develops the occupancy-measure representation, the associated virtual game, and the local coverage quantities needed for the analysis. In Section~\ref{sec:alg-design}, we introduce the fully online decentralized mirror-descent algorithm together with its asynchronous confidence-set updates. Section~\ref{sec:finite_analysis} establishes the finite-time, high-probability fixed-comparator regret bound and derives the resulting coarse-correlated-equilibrium guarantee. Section~\ref{sec:asymptotic} studies the diminishing-step-size regime and establishes last-iterate convergence to an $\epsilon$-NE under variational stability. Section~\ref{sec:applications} discusses applications and implications of the framework. Finally, Section~\ref{sec:conclusion} concludes the paper and outlines directions for future research. Omitted proofs and additional technical details are provided in the appendices.

\paragraph{Notation.}
For a positive integer $n$, we write $[n]:=\{1,\ldots,m\}$. For a finite set $\mathcal X$, $|\mathcal X|$ denotes its cardinality. Vectors are indexed by their corresponding states, actions, or state--action tuples, and $\langle x,y\rangle$ denotes the standard Euclidean inner product. We use $\|x\|_1$, $\|x\|_2$, and $\|x\|_\infty$ for the $\ell_1$, Euclidean, and supremum norms, respectively, and ${\bf 1}\{\cdot\}$ for the indicator function. For a profile $x=(x_1,\ldots,x_n)$, we write $x_{-i}:=(x_j)_{j\neq i}$ for the profile excluding player $i$. Lowercase letters such as $s_i$ and $a_i$ denote local states and actions, while $s=(s_1,\ldots,s_n)$ and $a=(a_1,\ldots,a_n)$ denote joint profiles; uppercase letters denote the corresponding random variables. We use $\mathbb P$ and $\mathbb E$ for probability and expectation, respectively, and $\mathcal F_t$ for the filtration in the probabilistic analysis. For occupancy measures, $\nu_i$, $\rho_i$, and $q_i$ denote state, state--action, and state--action--next-state occupancies, respectively. Throughout, unhatted quantities $(q_i^t,\rho_i^t,\nu_i^t)$ denote the  occupancies induced by the current policy $\pi_i^t$ under the true kernel $P_i$, whereas hatted quantities $(\hat q_i^t,\hat\rho_i^t,\hat\nu_i^t)$ denote the confidence-feasible quantities maintained by the algorithm. We write $x\lesssim y$ if $x\le \alpha y$ for some universal constant $\alpha>0$. Throughout, we use $c>0$ to denote an adjustable universal positive constant. Finally, $\mathcal{O}(\cdot)$ and $\widetilde{\mathcal{O}}(\cdot)$ hide universal constants and, in the latter case, logarithmic factors.

\vspace{-0.3cm}
\section{Problem Formulation}\label{sec:formulation}

We consider an $n$-player infinite-horizon, time-average stochastic game with independent state transitions, described by the tuple
$\mathcal{G}:=\big([n], \{\mathcal{S}_i\}_{i=1}^n, \{\mathcal{A}_i\}_{i=1}^n, \{r_i\}_{i=1}^n, \{P_i\}_{i=1}^n\big)$, as follows.
\begin{itemize}
	\item $\mathcal S_i$ is the finite state space of player $i$, with elements $s_i\in \mathcal S_i$. We denote the joint state space of all players by $\mathcal S = \mathcal S_1\times \cdots\times \mathcal S_n$, with elements $s\in \mathcal S$, where $s=(s_1,\ldots,s_n)$. 
	\item $\mathcal A_i$ is the finite action space of player $i$, with elements $a_i\in \mathcal A_i$. We denote the joint action space of all players by $\mathcal A = \mathcal A_1\times \cdots \times \mathcal A_n$, with elements $a\in\mathcal A$, where $a=(a_1,\ldots,a_n)$. 
	\item $r_i:\mathcal S\times\mathcal A\to[0,1]$ is the reward function of player $i$, where $r_i(s,a)$ is the immediate reward received by player $i$ when the joint state is $s=(s_1,\ldots,s_n)$ and the players take the joint action $a=(a_1,\ldots,a_n)$.
	\item $P_i$ is the transition probability kernel of player $i$, where $P_i(s_i'\mid s_i,a_i)$ is the probability that the state of player $i$ transitions from $s_i$ to $s_i'$ when the player chooses action $a_i$. Crucially, $P_i$ is \emph{unknown} to player $i$, and player $i$'s next-state distribution depends only on $(s_i,a_i)$. The precise factorization of the joint transition kernel is stated in the following assumption.
\end{itemize}

\begin{assumption}\label{ass:indep}
We assume that the joint transition probability kernel $P(s'\mid s,a)$ factors into independent components as $P(s'\mid s,a)=\prod_{i=1}^n P_i(s_i'\mid s_i,a_i)$, where $P_i(s_i'\mid s_i,a_i)$ denotes the (unknown) transition probability of player $i$ from state $s_i$ to state $s_i'$ under action $a_i$.
\end{assumption}

At any time $t$, the information available to player $i$ is given by the history of its realized states, actions, and rewards, i.e.,
$\H_i^t = \{s^{\ell}_i,a^{\ell}_i,r_i(s^{\ell},a^{\ell}) : \ell = 0,1,\ldots,t-1\}\cup\{s^t_i\}$.
Given the information set $\H_i^t$, player $i$ takes an action $a_i^t$ according to her current policy $\pi_i^t(\cdot\mid\H_i^t)$, which is a probability measure over $\mathcal A_i$, and receives the reward $r_i(s^t,a^t)$, which may also depend on the states and actions of the other players. Subsequently, the state of player $i$ transitions from $s_i^t$ to a new state $s_i^{t+1}$ according to the transition probability $P_i(s_i^{t+1}\mid s_i^t,a_i^t)$. A general policy for player $i$ is a sequence of decision rules $\pi_i=\{\pi_i^t:t=0,1,\ldots\}$, where each $\pi_i^t(\cdot\mid\H_i^t)$ is a probability measure over $\mathcal A_i$. However, the use of general policies is often computationally expensive, and in practical applications, players are typically interested in easily implementable \emph{stationary} policies, as defined next, which are also the main focus of this paper.

\begin{definition}
	A policy $\pi_i$ for player $i$ is called stationary if the probability $\pi_i^t(a_i\mid \H^t)$ of choosing action $a_i$ at time $t$ depends only on the current local state $s^t_i = s_i$, and is independent of the time $t$. In the case of the stationary policy, we use $\pi_i(a_i\mid s_i)$ to denote this time-independent probability.
\end{definition}

Given an initial state $s^0$, the objective of each player $i\in[n]$ is to choose a stationary policy $\pi_i$ that maximizes its long-term expected average reward (payoff), given by
\begin{align}\label{eq:aggregate-reward}
	V_i(\pi_i,\pi_{-i})=\limsup_{T\to \infty}\frac{1}{T}\mathbb{E}\Big[\sum_{t=0}^{T}r_i(s^t,a^t)\Big],
\end{align} 
where $\pi_{-i}=(\pi_j, j\neq i)$, and the expectation is with respect to the randomness introduced by players' internal kernels $(P_1,\ldots,P_n)$ and their stationary policies $\pi=(\pi_1,\ldots,\pi_n)$. 

This fully characterizes the $n$-player stochastic game considered here, in which each player $i$ seeks to choose a stationary policy $\pi_i$ to maximize its expected aggregate payoff $V_i(\pi_i,\pi_{-i})$. In the remainder of the paper, we refer to the above reward-coupled stochastic game with independent chains and unknown transition kernels as the original game $\mathcal{G}$. Next, to establish meaningful convergence and learning guarantees, we impose the following assumption throughout this work.
\begin{assumption}\label{ass:ergodic}
For any player $i$ and any stationary policy $\pi_i$ chosen by that player, the induced Markov chain, with transition probabilities
\[
P^{\pi_i}(s_i'\mid s_i):=
\sum_{a_i\in\mathcal A_i}
P_i(s_i'\mid s_i,a_i)\pi_i(a_i\mid s_i),
\]
is ergodic and satisfies a uniform contraction condition: there exists a constant $\kappa\in(0,1]$ such that\footnote{Here, $\Delta(\mathcal S_i)$ denotes the probability simplex over the local state space $\mathcal{S}_i$.}
\begin{align*}
\|(\nu-\nu')P^{\pi_i}\|_1
\le (1-\kappa)\|\nu-\nu'\|_1,
\qquad
\forall i,\ \pi_i,\ \nu,\nu'\in\Delta(\mathcal S_i).
\end{align*}
\end{assumption}

Assumption~\ref{ass:ergodic} is a uniform-ergodicity condition commonly used in finite-time analyses of average-reward online MDPs (see, e.g., \citealp{rivera2019large,neu2010online,rosenberg2019online}). Its role here is quantitative: the proof uses policy-uniform contraction to control the tracking error of the continuously changing local chains and the bias of the delayed one-sample gradient estimator. We do not claim that such a condition is necessary for learning in general MDPs; weaker communicating assumptions may suffice for other objectives and algorithms. Rather, Assumption~\ref{ass:ergodic} delineates the controlled-chain regime covered by our analysis. As shown in the next section, under Assumption~\ref{ass:ergodic}, for any stationary policy profile $\pi$, the limit in \eqref{eq:aggregate-reward} exists and admits the closed-form expression. 

\begin{definition}
For a stationary policy profile $\pi^*=(\pi_1^*,\ldots,\pi_n^*)$, $\pi_i^*$ is called a stationary best response to $\pi_{-i}^*$ if $V_i(\pi_i^*,\pi_{-i}^*)\ge V_i(\pi_i,\pi_{-i}^*)$ for every stationary policy $\pi_i$. Similarly, $\pi_i^*$ is called a stationary $\epsilon$-best response to $\pi_{-i}^*$ if $V_i(\pi_i^*,\pi_{-i}^*)\ge V_i(\pi_i,\pi_{-i}^*)-\epsilon$ for every stationary policy $\pi_i$. The stationary policy profile $\pi^*$ is a stationary Nash equilibrium (NE) if $\pi_i^*$ is a stationary best response to $\pi_{-i}^*$ for every player $i$, and it is a stationary $\epsilon$-NE if $\pi_i^*$ is a stationary $\epsilon$-best response to $\pi_{-i}^*$ for every player $i$. All equilibrium statements in this paper concern stationary policies.
\end{definition}

Our main objective is to understand the equilibrium guarantees of a decentralized and fully online learning rule for this class of stochastic games. We establish two types of guarantees: a finite-time fixed-comparator regret bound for the time-averaged iterates and, under stronger conditions, last-iterate convergence to a stationary $\epsilon$-NE.

\section{Preliminaries on Occupancy Measures, Dual Formulation, and Local Cover Times}\label{sec:dual}

In this section, we provide an alternative dual formulation of the original stochastic game $\G$ based on \emph{occupancy measures} \citep{altman2021constrained}. Intuitively, from player $i$'s perspective, its long-term expected average payoff depends on the proportion of time it spends in each state $s_i$ and takes each action $a_i$, as captured by its occupancy measure. Thus, the policy optimization problem for player $i$ can be reformulated as an optimization problem over occupancy measures, where each player seeks to steer its chain toward high-reward states and actions. An advantage of optimizing over occupancy measures is that, due to the independence of the players' internal chains, the payoff functions admit a simple closed-form that is easier to analyze than the original formulation in terms of policy variables. Moreover, this simplified payoff structure allows us to leverage the rich literature on online learning and primal-dual methods to address the learning problem. We use this dual formulation to develop learning algorithms for finding a stationary $\epsilon$-NE.

\subsection{Occupancy Measures}

For a given single-agent MDP with state space $\mathcal{S}$, action space $\mathcal{A}$, transition probability kernel $P$, and stationary policy $\pi$, one can associate with $P$ and $\pi$ three notions of occupancy measures: $\nu:\mathcal{S}\to[0,1]$, $\rho:\mathcal{S}\times\mathcal{A}\to[0,1]$, and $q:\mathcal{S}\times\mathcal{A}\times\mathcal{S}\to[0,1]$, defined as follows.\footnote{The first type of occupancy measure $\nu$ is often referred to as the stationary distribution in the past literature.}
\begin{align}
	&\nu(s) = \lim_{T\to \infty}\frac{1}{T}\sum_{t=0}^{T-1}\mathbb{P}_{\pi}(s^t=s) \label{eq:niu}, \\ 
	&\rho(s,a) = \lim_{T\to \infty}\frac{1}{T}\sum_{t=0}^{T-1}\mathbb{P}_{\pi}(s^t=s,a^t=a) \label{eq:rho}, \\ 
	&q(s,a,s') = \lim_{T\to \infty}\frac{1}{T}\sum_{t=0}^{T-1}\mathbb{P}_{\pi}(s^t=s,a^t=a,s^{t+1} = s'). \label{eq:q}
\end{align}
Intuitively, $\nu(s), \rho(s,a)$ and $q(s,a,s')$ are the long-term average proportion of time of encountering the state $s$, state-action pair $(s,a)$, and state-action-next-state triple $(s,a,s')$, when executing policy $\pi$ in an MDP with transition probability kernel $P$. It can be readily shown that under Assumption \ref{ass:ergodic}, the limits in \eqref{eq:niu}, \eqref{eq:rho}, and \eqref{eq:q} indeed exist, and moreover, the following relations always hold:
\begin{align*}
	\rho(s,a) = \nu(s) \pi(a \mid s),\qquad q(s,a,s') = \rho(s,a) P(s'\mid s,a).
\end{align*}

In this work, we are primarily concerned with the occupancy measure $q(s,a,s')$ due to players not knowing their independent transition probability kernel $P_i$ in the stochastic game $\G$. In the following, we provide conditions that fully characterize the set of feasible occupancy measures $q$.


\begin{definition}\label{def:Delta}
	We define the polytope of feasible occupancy measures, denoted by $\Delta$, as 
	\begin{align}\label{eq:Delta}
		\Delta: = \bigg\{q\in [0, 1]^{|\mathcal{A}||\mathcal S|^2} : \ \ \ \sum_{s,a,s'}q(s,a,s') = 1,\quad\sum_{s',a}q(s',a,s) = \sum_{a,s'}q(s,a,s')\ \ \forall s\in \mathcal S\bigg\}. 
	\end{align}
	Moreover, for any $q \in \Delta$, we define its induced transition probability matrix $P^q$ and its induced stationary policy $\pi^q$ by
	\begin{align*}
		P^q(s'\mid s,a):= \frac{q(s,a,s')}{\sum_{s'}q(s,a,s')} \ \forall s,a,s', \ \ \ \ \ \ \pi^q(a\mid s):=\frac{\sum_{s'}q(s,a,s')}{\sum_{a',s'}q(s,a',s')}\ \forall s,a.
	\end{align*}
	In addition, for a transition probability matrix $P$ or a set of transition probability matrices $\CP$, we denote by $\Delta(P)$ and $\Delta(\CP)$ the sets of occupancy measures in $\Delta$ that are compatible with $P$ and $\CP$, respectively; that is, their induced transition matrices equal $P$ or belong to $\CP$:
\begin{equation}\nonumber
\Delta(P):=\{q\in\Delta:P^q=P\},
\qquad
\Delta(\CP):=\{q\in\Delta:P^q\in\CP\}.
\end{equation}
\end{definition}

The reason why \eqref{eq:Delta} provides the set of feasible occupancy measures is because any feasible occupancy measure $q$ should be a valid probability distribution, i.e., $\sum_{s,a,s'}q(s,a,s') = 1$, and moreover, for any state $s\in \mathcal S$,  the probability of entering it should equal to the probability of leaving it, i.e., $\sum_{s',a}q(s',a,s) = \sum_{a,s'}q(s,a,s')\ \forall s\in \mathcal S.$ Given the above definition, we have the following useful lemma from \citep{altman2021constrained}.

\begin{lemma}[\citep{altman2021constrained}]\label{lem:feasible}
	If a function $q:\mathcal S\times\mathcal A\times\mathcal S\to[0,1]$ belongs to the feasible occupancy polytope $\Delta$ defined in \eqref{eq:Delta}, then it is exactly the occupancy measure associated with its induced transition probability matrix $P^q$ and stationary policy $\pi^q$. Specifically, \eqref{eq:q} holds when policy $\pi^q$ is executed in an MDP with transition probability matrix $P^q$.
\end{lemma}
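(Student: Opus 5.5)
Given $q\in\Delta$, I will show that the stationary state distribution of the chain induced by $\pi^q$ and $P^q$ is the marginal $\nu^q(s):=\sum_{a,s'}q(s,a,s')$. The identity \eqref{eq:q} then follows by multiplying out. Throughout, write $\rho^q(s,a):=\sum_{s'}q(s,a,s')$.

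First, handle states with $\nu^q(s)=0$, where $P^q(\cdot\mid s,a)$ and $\pi^q(\cdot\mid s)$ are undefined ($0/0$). On these states I would define them arbitrarily, for instance $\pi^q$ uniform and $P^q(\cdot\mid s,a)$ any distribution. The claim will not depend on this choice, provided the chain has a unique stationary distribution. I will take that as the standing interpretation: under Assumption~\ref{ass:ergodic}, or more generally whenever the induced chain is ergodic, the Ces\`aro limits in \eqref{eq:niu}--\eqref{eq:q} exist and are independent of $s^0$. Elsewhere the identities $\rho^q(s,a)=\nu^q(s)\pi^q(a\mid s)$ and $q(s,a,s')=\rho^q(s,a)P^q(s'\mid s,a)$ hold by construction. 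When $\nu^q(s)=0$ both sides vanish, since $q\ge 0$ forces $q(s,a,s')=0$.

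The key step is to check that $\nu^q$ is stationary for $P^{\pi^q}(s'\mid s)=\sum_a P^q(s'\mid s,a)\pi^q(a\mid s)$. For every $s'$,
\[
\sum_s \nu^q(s)P^{\pi^q}(s'\mid s)=\sum_{s,a}\nu^q(s)\pi^q(a\mid s)P^q(s'\mid s,a)=\sum_{s,a}q(s,a,s').
\]
By the flow-balance constraint in \eqref{eq:Delta}, the right-hand side equals $\sum_{a,s''}q(s',a,s'')=\nu^q(s')$. The normalization in \eqref{eq:Delta} gives $\sum_s\nu^q(s)=1$, so $\nu^q$ is a stationary distribution.

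To conclude, I use uniqueness. Ergodicity (equivalently, the contraction in Assumption~\ref{ass:ergodic}) implies $\frac1T\sum_{t<T}\mathbb P(s^t=s)\to\nu^q(s)$ for any initial law. Since $\mathbb P(s^t=s,a^t=a,s^{t+1}=s')=\mathbb P(s^t=s)\pi^q(a\mid s)P^q(s'\mid s,a)$, the Ces\`aro average of the triple probabilities converges to $\nu^q(s)\pi^q(a\mid s)P^q(s'\mid s,a)=q(s,a,s')$, which is \eqref{eq:q}.

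The main obstacle is the uniqueness and convergence step. Without ergodicity, a chain with several recurrent classes could have Ces\`aro limits that depend on $s^0$ and differ from $\nu^q$. So I would state explicitly that the conclusion holds for the initial distribution $\nu^q$ in general, and for every initial state under ergodicity. The arbitrary choice of $P^q$ on null states is delicate for the same reason: it must preserve ergodicity. For the paper's purposes, $P^q$ is compared to the true $P_i$, so the chosen extension can be $P_i$ itself on null states, and Assumption~\ref{ass:ergodic} then applies.
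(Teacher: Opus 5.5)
Your argument is correct. The paper gives no proof of its own and only cites \citep{altman2021constrained}. What you wrote is the classical argument behind that result: flow balance makes the marginal $\nu^q$ invariant for $P^{\pi^q}$, and uniqueness of the invariant law upgrades this to the Ces\`aro identity \eqref{eq:q}.

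Making the ergodicity hypothesis explicit is a genuine improvement on the statement as written. Assumption~\ref{ass:ergodic} concerns only the true kernels, and for general $q\in\Delta$ the conclusion fails from a fixed initial state even when $q$ has full support. For example, take one action and two self-loop states with $q(1,a,1)=q(2,a,2)=\tfrac12$.

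One remark needs tightening. Filling null states with $P_i$ makes Assumption~\ref{ass:ergodic} applicable only when $P^q$ already agrees with $P_i$ on the support of $\rho^q$, i.e.\ for $q\in\Delta_i(P_i)$. That is the case in which the paper actually invokes the lemma. For confidence-feasible iterates with $\hat P_i^t\neq P_i$, nothing guarantees ergodicity of $(\hat P_i^t)^{\pi_i^t}$. There, however, the paper needs only your stationarity step: Step~3 of the proof of Lemma~\ref{lem:movement} uses $\hat\nu_i^t$ as an invariant law of $(\hat P_i^t)^{\pi_i^t}$ and applies the contraction only to the true chain $P_i^{\pi_i^t}$.

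Finally, the one-step contraction is strictly stronger than ergodicity, not equivalent to it. A unique invariant law is all the lemma needs.
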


\subsection{A Dual Formulation}

We now provide a dual formulation of the original stochastic game $\G$ based on occupancy measures. It is shown in \citep{etesami2022learning} that, under the ergodicity Assumption~\ref{ass:ergodic}, the independence of the players' internal chains allows the payoff functions to admit a simple decomposable form in terms of the occupancy measures $\rho$. However, since we are interested in stochastic games with unknown transition probabilities, we first extend this formulation to the occupancy measures $q$. Specifically, suppose that each player $i$ follows a stationary policy $\pi_i$, and let $\nu_i$, $\rho_i$, and $q_i$ denote the corresponding occupancy measures defined in \eqref{eq:niu}, \eqref{eq:rho}, and \eqref{eq:q}, respectively, induced by kernel $P_i$ under policy $\pi_i$. Then, the expected average payoff of player $i$, $V_i(\pi_i,\pi_{-i})$, can be equivalently expressed as a function of $\rho_i$ and $\rho_{-i}$.
\begin{proposition}\label{prop:dual_formulation}
	Let Assumptions \ref{ass:indep} and \ref{ass:ergodic} hold, and assume that each player $i$ follows a stationary policy $\pi_i$. Let $q_i\in\Delta_i(P_i)$ denote the occupancy induced by $\pi_i$ under the true transition kernel $P_i$, with corresponding state--action marginal $\rho_i$. Then, each player's payoff can be expressed as
	\begin{align}\label{eq:linear-occupation-form2}
		V_i(\pi_i,\pi_{-i}) = V_i(q_i,q_{-i}):=\sum_{s,a}\prod_{j=1}^n\sum_{s'_j}q_j(s_j,a_j,s'_j)r_i(s,a)=\langle q_i, g_i(q_{-i}) \rangle,
	\end{align} 
	where $g_i(q_{-i})$ is defined to be a vector of dimension $|\mathcal A_i||\mathcal S_i|^2$ whose $(s_i,a_i,s'_i)$-th component is
	\begin{align}\label{eq:l_def}
		g_i(q_{-i})_{(s_i,a_i,s'_i)}:&=\sum_{s_{-i},a_{-i}}\prod_{j\neq i}\sum_{s'_j}q_j(s_j,a_j,s'_j) r_i(s,a)\cr 
        &=\sum_{s_{-i},a_{-i}}\prod_{j\neq i}\rho_j(s_j,a_j) r_i(s,a)\ \ \quad  \forall s_i,a_i,s'_i.
	\end{align}
	We note that $g_i(q_{-i})_{(s_i,a_i,s'_i)}$ is independent of $s'_i$, with the same value repeated across all $s'_i\in\mathcal S_i$.
\end{proposition}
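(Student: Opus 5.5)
The plan is to show that, under stationary policies, the joint state--action process is itself an ergodic Markov chain whose stationary law is the product of the local state--action occupancies $\rho_j$. The closed form \eqref{eq:linear-occupation-form2} then follows by linearity.

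First, I establish the product structure of the joint chain. Under a stationary profile $\pi=(\pi_1,\ldots,\pi_n)$ and Assumption~\ref{ass:indep}, the joint process $(s^t,a^t)$ on $\mathcal S\times\mathcal A$ is Markov with kernel
\[
P^\pi\big((s',a')\mid(s,a)\big)=\prod_{j=1}^n P_j(s_j'\mid s_j,a_j)\,\pi_j(a_j'\mid s_j').
\]
This holds because each player samples $a_j^t\sim\pi_j(\cdot\mid s_j^t)$ independently given the current state. Consequently, started from any product initial condition (a deterministic $s^0$ is one), the processes $\{(s_j^t,a_j^t)\}_t$ for $j\in[n]$ are mutually independent. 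Each is a Markov chain with kernel $P_j(s_j'\mid s_j,a_j)\pi_j(a_j'\mid s_j')$. I would verify the independence by induction on $t$, factoring the joint law of the whole path. It follows that $\mathbb P_\pi(s^t=s,a^t=a)=\prod_j \mathbb P_{\pi_j}(s_j^t=s_j,a_j^t=a_j)$ for every $t$.

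Second, I need convergence of the marginals, and this is the main obstacle. The occupancy definitions \eqref{eq:rho} use Cesàro limits, but a product of Cesàro averages is not in general the Cesàro average of the product. To get around this I will use Assumption~\ref{ass:ergodic} in its full strength. Uniform contraction gives $\|\mu_j^t-\nu_j\|_1\le(1-\kappa)^t\|\mu_j^0-\nu_j\|_1$, where $\mu_j^t$ is the law of $s_j^t$. This is an actual limit, not merely a Cesàro one. Hence $\mathbb P(s_j^t=s_j,a_j^t=a_j)=\mu_j^t(s_j)\pi_j(a_j\mid s_j)\to\rho_j(s_j,a_j)$ geometrically fast.

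Third, I pass to the limit. The product of finitely many convergent sequences converges to the product of the limits, so $\mathbb P(s^t=s,a^t=a)\to\prod_j\rho_j(s_j,a_j)$. Since $r_i$ is bounded, $\mathbb E[r_i(s^t,a^t)]\to\sum_{s,a}\prod_j\rho_j(s_j,a_j)r_i(s,a)$. Cesàro averages of a convergent sequence converge to the same limit, so the $\limsup$ in \eqref{eq:aggregate-reward} is an actual limit and equals this quantity. The extra $1/T$ factor from summing up to $T$ inclusive is immaterial.

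Finally, I rewrite the expression in terms of $q$. Using $\rho_j(s_j,a_j)=\sum_{s_j'}q_j(s_j,a_j,s_j')$, which follows from $q_j=\rho_jP_j$ and $\sum_{s'}P_j(s'\mid s,a)=1$ (or from the definitions \eqref{eq:rho}--\eqref{eq:q} directly), I obtain the middle expression in \eqref{eq:linear-occupation-form2}. I then isolate the factor for player $i$:
\[
\sum_{s_i,a_i}\sum_{s_i'}q_i(s_i,a_i,s_i')\Big[\sum_{s_{-i},a_{-i}}\prod_{j\ne i}\rho_j(s_j,a_j)\,r_i(s,a)\Big]=\langle q_i,g_i(q_{-i})\rangle,
\]
with $g_i$ as in \eqref{eq:l_def}. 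The bracketed term does not involve $s_i'$, which gives the final remark of the proposition.
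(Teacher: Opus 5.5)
Your proof is correct. Its last step, substituting $\rho_j=\sum_{s_j'}q_j$ and isolating player $i$'s factor, is exactly the paper's. The difference is the key identity $V_i(\pi_i,\pi_{-i})=\sum_{s,a}\prod_j\rho_j(s_j,a_j)r_i(s,a)$. The paper does not prove it but imports it from \citep{etesami2022learning}. You derive it from scratch in two moves:
\begin{enumerate}
\item You factor the path law of the joint state--action chain, using Assumption~\ref{ass:indep}, independent action randomization, and the deterministic initial state $s^0$.
\item You use the contraction in Assumption~\ref{ass:ergodic} to get genuine, geometric convergence $\mu_j^t\to\nu_j$ of each local marginal, so the limit commutes with the finite product.
\end{enumerate}

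Your route is self-contained and pins down where each assumption enters. You are right that Cesàro convergence alone would not suffice. For example, two synchronized period-$2$ local chains with $r_i=\mathbf 1\{s_1=s_2\}$ give $V_i=1$, while the product formula gives $1/2$. As a by-product, you show that the $\limsup$ in \eqref{eq:aggregate-reward} is a true limit and does not depend on $s^0$; the paper asserts this in Section~\ref{sec:formulation} without a separate argument. Your argument also shows this proposition needs only convergence of each fixed chain $P_j^{\pi_j}$, which irreducibility and aperiodicity already give. The policy-uniformity of $\kappa$ matters only in the later regret analysis. The paper's route is shorter but rests entirely on the cited result.
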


Using Lemma~\ref{lem:feasible} and Proposition~\ref{prop:dual_formulation}, the problem of finding optimal stationary policies for the players reduces to that of finding their optimal feasible occupancy measures. In fact, using the dual formulation, we can define an equivalent \emph{virtual game} associated with the original stochastic game $\G$ as follows:
\begin{definition}\label{def:v_game}
	Define the feasible occupancy polytope for player $i$ by
	$$\Delta_i= \bigg\{q_i\in [0, 1]^{|\mathcal{A}_i||\mathcal S_i|^2}\!: \sum_{s_i,a_i,s'_i}q_i(s_i,a_i,s'_i)= 1,\  \ \sum_{s'_i,a_i}\!q_i(s'_i,a_i,s_i)= \sum_{s'_i,a_i}\!q_i(s_i,a_i,s'_i) \ \forall s_i\in \mathcal S_i\bigg\}.$$
 Moreover, let $\Delta_i(P_i) \subset \Delta_i$ be the set of feasible occupancy measures for player $i$ whose induced transition probability matrix $P^{q_i}$ is exactly $P_i$. The virtual game $\V=\big([n], q=(q_i)_i,\{V_i({q})\}_i\big)$ associated with the original stochastic game $\G $ is an $n$-player continuous-action static game, where the action of player $i$ is to choose an $q_i$ from its action set $\Delta_i (P_i)$, and its payoff is given by \eqref{eq:linear-occupation-form2}.
\end{definition} 

Ideally, each player would optimize over the true-kernel feasible set
$\Delta_i(P_i)$, on which the virtual payoff is the linear functional
$\langle q_i,g_i(q_{-i})\rangle$ in its own occupancy variable $q_i$.
Since $P_i$ is unknown, our proposed algorithm (Algorithm~\ref{alg:main})
instead maintains an iterate $\hat q_i$ that is feasible for some transition
probability matrix in a confidence set. In particular, for any profile
$q=(q_j)_j$ with $q_j\in\Delta_j(P_j)$ for all $j$, we have
$V_i(q_i,q_{-i})=\langle q_i,g_i(q_{-i})\rangle$.
For an arbitrary relaxed iterate $\hat q=(\hat q_i)_i$, its payoff is defined as the payoff of its induced policy under the \emph{true} kernels $\{P_j\}_{j\in[n]}$, namely,
\[
V_i(\hat q):=
V_i(\pi_i^{\hat q_i},\pi_{-i}^{\hat q_{-i}}).
\]
Thus, $\hat q_i$ denotes an algorithmic occupancy variable, whereas $q_i$
denotes the true occupancy measure induced by $\pi_i^{\hat q_i}$ under the true kernel $P_i$.
By Lemma~\ref{lem:feasible}, whenever $\hat q_i\in\Delta_i(P_i)$, its induced
transition matrix satisfies $P^{\hat q_i}=P_i$, and $\hat q_i$ is exactly the
occupancy measure induced by $\pi_i^{\hat q_i}$ under the true kernel $P_i$. Hence, in this
case, $\hat q_i=q_i$. This distinction will be used explicitly throughout the paper.

Finally, for the sake of exploration and to obtain high probability regret bounds, we consider the following definition of shrunk feasible occupancy polytope. 

\begin{definition}\label{def:shrunk}
Given $0<\delta_i<1/(|\mathcal S_i||\mathcal A_i|)$, define the \emph{shrunk} feasible occupancy polytope
\[
\Delta_{i,\delta_i}:=\Big\{q_i\in\Delta_i:\rho_i(s_i,a_i)=\sum_{s_i'}q_i(s_i,a_i,s_i')\ge\delta_i,\ \forall s_i,a_i\Big\}.
\]
For a kernel $P_i$ or a set $\mathcal P_i$ of kernels, $\Delta_{i,\delta_i}(P_i)$ and $\Delta_{i,\delta_i}(\mathcal P_i)$ denote the corresponding subsets of $\Delta_{i,\delta_i}$ whose induced kernel equals $P_i$ or belongs to $\mathcal P_i$, respectively.
\end{definition}

One advantage of working with the shrunk feasible occupancy sets is that the shrinkage constraint in Definition~\ref{def:shrunk} ensures persistent exploration while incurring only a small loss in the best-response value. In particular, for every $\epsilon>0$, $\delta_i$ can be chosen sufficiently small so that optimization over the shrunk feasible occupancy polytope is within $\epsilon$ of optimization over the full occupancy polytope (see Corollary~\ref{cor:cce}). Moreover, as we will see, the rectangular confidence sets maintained by Algorithm~\ref{alg:main} over the shrunk occupancy polytope preserve the computational tractability of the occupancy formulation. In particular, for any rectangular confidence set $\mathcal P_i$, the shrunk feasible occupancy set $\Delta_{i,\delta_i}(\mathcal P_i)$ admits an efficient representation, as shown in the following lemma.


\begin{lemma}\label{lem:confidence-convex}
Suppose a confidence set $\mathcal P_i$ contains the set of probability transition matrices $P_i$ that satisfy box constraints
$L_i(s_i,a_i,s'_i)\le P_i(s_i'\mid s_i,a_i)\le
U_i(s_i,a_i,s'_i)$. Then
$\Delta_{i,\delta_i}(\mathcal P_i)$ is a closed
convex polytope admitting an explicit linear description with
$O(|\mathcal A_i||\mathcal S_i|^2)$ variables and constraints.
\end{lemma}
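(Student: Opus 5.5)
The plan is to linearize the nonlinear constraint $P^{q_i}\in\mathcal P_i$ by clearing denominators. For $q_i\in\Delta_i$ write $\rho_i(s_i,a_i)=\sum_{s_i'}q_i(s_i,a_i,s_i')$. On $\Delta_{i,\delta_i}$ we have $\rho_i(s_i,a_i)\ge\delta_i>0$ for every $(s_i,a_i)$, so the induced kernel $P^{q_i}(s_i'\mid s_i,a_i)=q_i(s_i,a_i,s_i')/\rho_i(s_i,a_i)$ of Definition~\ref{def:Delta} is well defined everywhere. This is the point where the shrinkage is used: without it, state--action pairs with zero occupancy would leave $P^{q_i}$ undefined, and the membership condition would become ambiguous. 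Multiplying the box constraints by the strictly positive quantity $\rho_i(s_i,a_i)$ gives an exact equivalence:
\begin{align*}
L_i(s_i,a_i,s_i')\le P^{q_i}(s_i'\mid s_i,a_i)\le U_i(s_i,a_i,s_i')
\;\Longleftrightarrow\;
L_i(s_i,a_i,s_i')\sum_{x}q_i(s_i,a_i,x)\le q_i(s_i,a_i,s_i')\le U_i(s_i,a_i,s_i')\sum_{x}q_i(s_i,a_i,x).
\end{align*}
Both sides of each inequality on the right are linear in $q_i$, because $L_i$ and $U_i$ are fixed data.

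If $\mathcal P_i$ is exactly the rectangular set cut out by the boxes, together with the implicit requirement that each row be a probability vector, then row-stochasticity of $P^{q_i}$ is automatic. The reason is that $\sum_{s_i'}q_i(s_i,a_i,s_i')/\rho_i(s_i,a_i)=1$ by construction, so no further constraint is needed. If instead $\mathcal P_i$ carries extra rectangular (per-row) linear constraints, such as an $\ell_1$-ball expressed through auxiliary variables, each of these homogenizes in the same way after multiplying by $\rho_i$. I would state the lemma for the box case. Combining, we get
\[
\Delta_{i,\delta_i}(\mathcal P_i)=\Delta_i\cap\{\rho_i(s_i,a_i)\ge\delta_i\ \forall (s_i,a_i)\}\cap\{\text{homogenized box inequalities}\}.
\]

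Every set in this intersection is defined by finitely many linear equalities and inequalities in $q_i\in[0,1]^{|\mathcal A_i||\mathcal S_i|^2}$. The intersection is therefore a closed convex polyhedron, and it is bounded since it lies in the unit cube, so it is a polytope.

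It remains to count. There are $|\mathcal A_i||\mathcal S_i|^2$ variables. The constraints are:
\begin{itemize}
\item $2|\mathcal A_i||\mathcal S_i|^2$ coordinate bounds;
\item one normalization equality;
\item $|\mathcal S_i|$ flow-balance equalities;
\item $|\mathcal S_i||\mathcal A_i|$ exploration constraints;
\item $2|\mathcal A_i||\mathcal S_i|^2$ homogenized box inequalities.
\end{itemize}
The total is $O(|\mathcal A_i||\mathcal S_i|^2)$. Each box inequality involves $|\mathcal S_i|$ nonzero coefficients, which affects sparsity but not the count.

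The main obstacle is justifying that the linear description is exact in both directions, that is, that no spurious point enters and no valid point is lost. For the reverse inclusion, take any $q_i$ satisfying the linear system. Then $\rho_i\ge\delta_i>0$, so $P^{q_i}$ is defined, lies in the box, and is row-stochastic, hence $P^{q_i}\in\mathcal P_i$ and $q_i\in\Delta_{i,\delta_i}(\mathcal P_i)$. The forward inclusion follows from the displayed equivalence. I would also remark that nonemptiness is not asserted by the lemma and depends on $\delta_i$ and the width of the boxes.
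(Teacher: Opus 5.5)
Your proposal is correct and follows essentially the same route as the paper. Both use the shrinkage $\rho_i\ge\delta_i>0$ to make $P^{q_i}$ well defined, multiply the box constraints by $\rho_i(s_i,a_i)$ to get inequalities linear in $q_i$, and list the nonnegativity, normalization, flow, shrinkage, and homogenized box constraints, giving $O(|\mathcal A_i||\mathcal S_i|^2)$ variables and constraints for a bounded polyhedron; your explicit check of both inclusions, including the automatic row-stochasticity of $P^{q_i}$, just spells out what the paper leaves implicit.
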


\subsection{Local Cover Times}

To provide meaningful convergence-rate guarantees, we work with a \emph{local} cover-time parameter. Formally, for a fixed stationary policy $\pi_i$, let
\[
\tau_{{\rm cov},i}^{\pi_i}
:=\max_{s_i^0\in \mathcal{S}_i}\min\big\{t\ge0:\{S_i^0,\ldots,S_i^t\}=\mathcal S_i, \ S^0_i=s^0_i \big\}
\]
be the cover time of the Markov chain with transition matrix $P_i^{\pi_i}$. Define the expected cover time
\begin{align}\label{eq:local-cover-definition}
H_{i}
&:=\sup_{\pi_i\in\Pi_i}
\mathbb E[\tau_{{\rm cov},i}^{\pi_i}],
&
H&:=\max_{i\in[n]}H_{i},
\end{align}
where $\Pi_i$ is the compact set of stationary policies on $\mathcal S_i$. Thus, $H_i$ represents the worst-case expected time needed for player $i$ to visit all of its local states under any fixed stationary policy. We note that the local cover time is defined with respect to each player's local state space, which can be exponentially smaller than the joint state space. In particular, under Assumption~\ref{ass:ergodic}, it is easy to see that $H_i<\infty$ for all $i$, and hence $H<\infty$.\footnote{For every stationary policy $\pi_i$, Assumption~\ref{ass:ergodic} implies that $P_i^{\pi_i}$ is a finite irreducible Markov chain and hence has finite expected cover time. Since $\Pi_i$ is compact, the mapping $\pi_i\mapsto P_i^{\pi_i}$ is continuous, and expected hitting times are locally continuous over irreducible Markov chains, compactness yields a finite uniform bound over all $\pi_i$.}

\section{A Fully Online Decentralized Learning Algorithm}\label{sec:alg-design}

In this section, we describe our fully online algorithm for stochastic games with unknown independent chains. Each primitive interaction produces exactly one transition/reward observation and one mirror-descent update. Unlike the episodic setting, which has been a standard approach in the previous literature \cite{etesami2022learning,qin2024learning}, no player waits until all of its state--action pairs have been observed, there are no burn-in episodes, and no episode-end synchronization or coordination among the players is required. The transition kernels remain unknown and are estimated from the same single trajectory used for learning.

\subsection{Anytime confidence sets and one-sample updates}
To describe our algorithm, we first introduce the following notation and definitions. Player $i$ maintains cumulative counters that record, respectively, the numbers of visits to each state--action pair $(s_i,a_i)$ and each state--action--next-state triple $(s_i,a_i,s_i')$ up to time $t$ along the single realized trajectory of the game:   
\begin{align*}
&N_i^t(s_i,a_i):=\sum_{\ell<t}{\bf1}\{(S_i^{\ell},A_i^{\ell})=(s_i,a_i)\},\cr 
&M_i^t(s_i,a_i,s_i'):=\sum_{\ell<t}{\bf1}\{(S_i^{\ell},A_i^{\ell},S_i^{\ell+1})=(s_i,a_i,s_i')\}.
\end{align*}
Let $\bar P_i^t$ denote the \emph{empirical transition kernel} for player $i$ given by 
\[
\bar P_i^t(s_i'\mid s_i,a_i):=\frac{M_i^t(s_i,a_i,s_i')}{N_i^t(s_i,a_i)\vee1},
\]
where $\vee$ denotes the maximum of the quantities, and define the \emph{confidence radius} at time $t$ by 
\begin{align*}
\beta_i^t(s_i,a_i)=\sqrt{\frac{2\Lambda_t}{N_i^t(s_i,a_i)\vee1}},  
\end{align*}
where $\Lambda_t:=\log\!\big(\frac{64nA_{\max}S_{\max}^2(t+2)^2}{\gamma}\big)$,  $S_{\max}:=\max_{j\in[n]}|\mathcal S_j|, and   A_{\max}:=\max_{j\in[n]}|\mathcal A_j|$. For a finite horizon $T$ we write $\Lambda_T:=\max_{t\le T}\Lambda_t$. Moreover, at time $t$, we write
\begin{align}\label{eq:online-policy}
&\hat\rho_i^t(s_i,a_i)=\sum_{s_i'}\hat q_i^t(s_i,a_i,s_i'),\cr 
&\hat\nu_i^t(s_i)=\sum_{a_i}\hat\rho_i^t(s_i,a_i),\cr 
&\pi_i^t(a_i\mid s_i)=\frac{\hat\rho_i^t(s_i,a_i)}{\hat\nu_i^t(s_i)},
\end{align}
where we note that if $\hat{q}^t_i\in \Delta_{i,\delta_i}(\mathcal{P}^t_i)$, then $\hat\rho_i^t(s_i,a_i)\ge\delta_i$ and $\hat\nu_i^t(s_i)\le1$, and hence
\begin{align}\label{eq:action-floor}
\pi_i^t(a_i\mid s_i)\ge\delta_i,\qquad\forall s_i,a_i.
\end{align}
After observing the single reward $r_i^t:=r_i(S^t,A^t)$, we define the importance-weighted vector $\widehat{R}^t_i$ as
\begin{align}\label{eq:online-estimator}
\widehat R_i^t(s_i,a_i,s_i')=
\frac{r_i^t\,{\bf1}\{(S_i^t,A_i^t)=(s_i,a_i)\}}
{\hat\rho_i^t(s_i,a_i)}\qquad \forall s_i,a_i,s_i',
\end{align}
where we note that $\|\widehat R_i^t\|_\infty\le1/\delta_i$. The estimator $\widehat R_i^t$ converts the realized payoff $r_i^t$ into an estimate of the occupancy-payoff gradient $g_i^t:=g_i(q_{-i}^t)$, with the importance-weighted factor $1/\hat\rho_i^t(s_i,a_i)$ correcting for the fact that the payoff is observed only at the realized state--action pair $(S_i^t,A_i^t)$.

\begin{remark} 
To see the intuition behind $\widehat R_i^t$, suppose hypothetically that the joint state--action profile $(S^t,A^t)$ were drawn from the product stationary law $\bigotimes_j\rho_j^t$. Then, for any fixed $(s_i,a_i,s'_i)$,
$$
\E_{\bigotimes_j\rho_j^t}[\widehat R_i^t(s_i,a_i,s_i')]=\frac{1}{\hat\rho_i^t(s_i,a_i)}\E[r_i(S^t,A^t)\mathbf 1\{(S_i^t,A_i^t)=(s_i,a_i)\}]=\frac{\rho_i^t(s_i,a_i)}{\hat\rho_i^t(s_i,a_i)}g_i^t(s_i,a_i,s_i').
$$
Thus, if additionally $\hat\rho_i^t=\rho_i^t$ (which holds, in particular, when the transition kernel induced by $\hat q_i^t$ coincides with the true transition kernel $P_i$), then $\E[\widehat R_i^t]=g_i^t$. Hence, $\widehat R_i^t$ can be viewed as using the single realized payoff $r_i^t$ to construct a stochastic estimate of the occupancy-payoff gradient $g_i^t$.
\end{remark}

\subsection{Description of the Algorithm}\label{subsec:algorithm-intuition}

\begin{algorithm}[t]
\caption{Fully Online One-Sample Occupancy Mirror Descent for Player $i$}\label{alg:main}
\begin{algorithmic}[1]
\State \textbf{Input:} Shrinkage  $\delta_i$, stepsizes $\{\eta_t\}$, regularizer $h_i(q_i)=\frac12\|q_i\|_2^2$, and confidence level $\gamma$.
\State Player $i$ initializes $N_i^0(s_i,a_i)=0$ and $M_i^0(s_i,a_i,s_i')=0$ for all $s_i,a_i,s_i'$, sets $\mathcal P_i^0$ to the set of all transition kernels on $\mathcal S_i$, and chooses $\hat q_i^0\in\Delta_{i,\delta_i}(\mathcal P_i^0)$.
\For{$t=0,1,2,\ldots$}
    \State Player $i$ observes $S_i^t$ and samples $A_i^t\sim\pi_i^t(\cdot\mid S_i^t)$, where
    \Statex \hspace{1.5em}$\displaystyle
    \pi_i^t(a_i\mid s_i)
    =\frac{\hat\rho_i^t(s_i,a_i)}{\hat\nu_i^t(s_i)}.$

    \State Player $i$ observes $r_i^t=r_i(S^t,A^t)$ and forms
    \Statex \hspace{1.5em}$\displaystyle
    \widehat R_i^t(s_i,a_i,s_i')
    =
    \frac{r_i^t{\bf1}\{(S_i^t,A_i^t)=(s_i,a_i)\}}
    {\hat\rho_i^t(s_i,a_i)},
    \qquad \forall s_i,a_i,s_i'.$

    \State Player $i$ also observes its next state $S_i^{t+1}$ and updates its local counters:
    \Statex \hspace{1.5em}$\displaystyle
    N_i^{t+1}(s_i,a_i)
    =
    N_i^t(s_i,a_i)
    +{\bf1}\{(S_i^t,A_i^t)=(s_i,a_i)\},$
    \Statex \hspace{1.5em}$\displaystyle
    M_i^{t+1}(s_i,a_i,s_i')
    =
    M_i^t(s_i,a_i,s_i')
    +{\bf1}\{(S_i^t,A_i^t,S_i^{t+1})=(s_i,a_i,s_i')\}.$

    \If{$N_i^{t+1}(S_i^t,A_i^t)=2^k$ for some integer $k\ge 0$,}
        \State Player $i$ computes its empirical transition kernel
        \Statex \hspace{1.5em}$\displaystyle
        \bar P_i^{t+1}(s_i'\mid s_i,a_i)
        =
        \frac{M_i^{t+1}(s_i,a_i,s_i')}
        {N_i^{t+1}(s_i,a_i)\vee1},
        \qquad \forall s_i,a_i,s_i'.$

        \State Player $i$ computes the confidence radii
        \Statex \hspace{1.5em}$
        \beta_i^{t+1}(s_i,a_i)
        =
        \sqrt{\frac{2\Lambda_{t+1}}
        {N_i^{t+1}(s_i,a_i)\vee1}} \ \  \forall s_i,a_i, \ \quad 
        \Lambda_{t+1}
        =
        \log\!\left(
        \frac{64nA_{\max}S_{\max}^2(t+3)^2}
        {\gamma}
        \right).$

        \State Player $i$ updates its confidence set as
        \Statex \hspace{1.5em}$\displaystyle
        \mathcal P_i^{t+1}
        =
        \mathcal P_i^t\cap
        \Big\{
        P_i:
        \left|
        P_i(s_i'\mid s_i,a_i)
        -
        \bar P_i^{t+1}(s_i'\mid s_i,a_i)
        \right|
        \le
        \beta_i^{t+1}(s_i,a_i),
        \ \forall s_i,a_i,s_i'
        \Big\}.$
    \Else
        \State Player $i$ sets $\mathcal P_i^{t+1}=\mathcal P_i^t$.
    \EndIf

    \State Player $i$ performs the proximal occupancy update
    \Statex \hspace{1.5em}$\displaystyle
    \hat q_i^{t+1}
    =
    \argmax_{q_i\in\Delta_{i,\delta_i}(\mathcal P_i^{t+1})}
    \left\{
    \eta_t\langle q_i,\widehat R_i^t\rangle
    -
    D_{h_i}(q_i\Vert\hat q_i^t)
    \right\}.$
\EndFor
\end{algorithmic}
\end{algorithm}

Using the definitions in the preceding subsection, the proposed online algorithm is summarized in Algorithm~\ref{alg:main}. To describe the algorithm, it is useful to distinguish between the two learning tasks carried out simultaneously by each player. Player $i$ does not know its local controlled transition kernel $P_i$ and therefore must learn the rows $P_i(\cdot\mid s_i,a_i)$ from its observed local transitions. At the same time, it must adapt its strategic policy based on its realized payoff samples. Algorithm~\ref{alg:main} performs both tasks along the same uninterrupted trajectory.

The variable $\hat q_i^t$ is player $i$'s current confidence-feasible occupancy iterate, which induces both a transition kernel $\hat P_i^t:=P^{\hat q_i^t}\in\mathcal P_i^t$ and a policy $\pi_i^t:=\pi^{\hat q_i^t}$. In particular, its state--action marginal $\hat\rho_i^t$ determines the policy through~\eqref{eq:online-policy}, so the occupancy update implicitly induces the policy used to generate the online trajectory. The shrinkage constraint $\hat\rho_i^t(s_i,a_i)\ge\delta_i$ guarantees persistent local exploration through~\eqref{eq:action-floor} and, at the same time, keeps the importance weights uniformly bounded. At each time $t$, player $i$ samples $A_i^t$ from its current policy $\pi_i^t(\cdot\mid S^t_i)$ and observes only its realized payoff $r_i^t:=r_i(S^t,A^t)$ and next local state $S_i^{t+1}$. These observations serve two distinct purposes. The transition triple $(S_i^t,A_i^t,S_i^{t+1})$ is used to learn the unknown true kernel $P_i$ and refine the confidence set $\mathcal P_i^t$. In contrast, $(S_i^t,A_i^t,r_i^t)$ is used to construct the importance-weighted vector $\widehat R_i^t$ in~\eqref{eq:online-estimator}. The vector $\widehat R_i^t$, which serves as a stochastic proxy for the gradient $g_i^t:=g_i(q_{-i}^t)$ of the occupancy payoff function, is then used to move the occupancy iterate $\hat q_i^t$ in a payoff-improving direction. The updated iterate is projected onto the shrunk confidence-feasible occupancy polytope $\Delta_{i,\delta_i}(\mathcal P_i^{t+1})$, ensuring that its induced transition kernel remains consistent with the information currently available to player $i$ while preserving the exploration constraint. The new occupancy iterate $\hat q_i^{t+1}$ in turn induces the next policy through its state--action marginal, closing the online learning loop.

The transition-confidence set is deliberately refreshed on a slower, event-driven clock. For each local pair $(s_i,a_i)$, player $i$ continues to collect every transition sample but rebuilds the confidence set only when the corresponding visit count first reaches a new dyadic threshold $1,2,4,8,\ldots$. Equivalently, at time $t+1$, a refresh is triggered if, for some $(s_i,a_i)$ and integer $k\ge0$, we have $N_i^{t+1}(s_i,a_i)=2^k$. We call such a time a \emph{confidence update time}; it does not reset the trajectory, the counters, or the optimization iterate.

There are two reasons for this dyadic schedule. First, it incurs only a constant-factor statistical loss. Between two successive thresholds $2^k$ and $2^{k+1}$, a confidence radius based on the last threshold is at most a factor of $\sqrt{2}$ larger than the radius based on the current count. Thus, the usual $N^t_i(s_i,a_i)^{-1/2}$ estimation rate is preserved. Second, and more importantly for the analysis, changing the confidence set changes the feasible occupancy polytope. On intervals where this polytope is fixed, the ordinary proximal step gives the small-movement bound (Lemma~\ref{lem:movement}), which is used to couple the time-varying local chain to a frozen-policy chain. Dyadic refreshing ensures that each local counter causes only $\mathcal{O}(\log T)$ feasible-set changes by time $T$. Hence, the exceptional times at which the feasible polytope itself changes can be charged separately without affecting the $\widetilde{\mathcal{O}}(T^{-1/2})$ order of our final bound. Crucially, these confidence updates are entirely local and asynchronous. Player $i$ determines its confidence update times using only its own counters. Player $j$ may refresh its confidence set at a completely different primitive time, and neither player needs to know when the other player's refresh occurs. Thus, the algorithm operates on a fast clock, on which every player updates its occupancy iterate after every interaction, and a slower, local, event-driven clock, on which each player independently refines its transition kernel.

At a confidence update time, player $i$ constructs a new rectangular confidence region using all samples accumulated so far and intersects it with its previous confidence set. By Lemma~\ref{lem:confidence-convex}, the resulting shrunk occupancy set admits an explicit polynomial-size linear representation. Thus, the online mirror-descent update, which amounts to a Euclidean projection onto $\Delta_{i,\delta_i}(\mathcal P_i^t)$, can be carried out efficiently in polynomial time. As shown in the following lemma, on the \emph{anytime confidence event} used in the analysis, the true kernel $P_i$ belongs to every such confidence set; hence, the nested confidence sets remain nonempty while progressively refining the uncertainty about $P_i$. 

\begin{lemma}\label{lem:anytime-confidence}
Define the anytime confidence event $\mathcal E_{\rm conf}
:=
\left\{
P_i\in\mathcal P_i^t,\quad
\forall i\in[n],\ \forall t\ge0
\right\}$. Under Algorithm~\ref{alg:main}, we have $\mathbb P(\mathcal E_{\rm conf})\ge 1-\gamma/4$. Consequently, on $\mathcal E_{\rm conf}$, $\Delta_{i,\delta_i}(P_i)
\subseteq
\Delta_{i,\delta_i}(\mathcal P_i^t)\ \forall i, t\ge 0$.
\end{lemma}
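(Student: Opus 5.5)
The plan is to reduce the anytime statement to a countable union bound over all (player, state--action pair, next state, sample count) tuples, and to control each term with a concentration inequality for i.i.d. samples. Because the confidence sets are nested intersections, $P_i\in\mathcal P_i^t$ for all $t$ holds as soon as, at every confidence update time $t+1$ and every triple $(s_i,a_i,s_i')$,
\[
\big|P_i(s_i'\mid s_i,a_i)-\bar P_i^{t+1}(s_i'\mid s_i,a_i)\big|\le\beta_i^{t+1}(s_i,a_i).
\]
The initial set $\mathcal P_i^0$ trivially contains $P_i$. It therefore suffices to show that this inequality holds simultaneously for all $i$, all $t$, and all triples, with probability at least $1-\gamma/4$. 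In fact I would prove it at every time $t$, not only at update times, which is stronger.

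The key device is the standard ``stack of samples'' (skeleton) construction. For each $(s_i,a_i)$, the successive next states observed after the $m$-th visit to $(s_i,a_i)$ are i.i.d.\ draws from $P_i(\cdot\mid s_i,a_i)$, by the strong Markov property. This holds even though actions are chosen adaptively by a time-varying policy that depends on other players only through rewards, because of Assumption~\ref{ass:indep}. Concretely, I would realize the process by pre-drawing an infinite i.i.d.\ sequence $(Y^{(m)}_{i,s_i,a_i})_{m\ge1}$ for each pair, and letting the $m$-th visit use $Y^{(m)}$. This gives the same law of the trajectory. Then $\bar P_i^t(s_i'\mid s_i,a_i)$ equals the empirical frequency over the first $N=N_i^t(s_i,a_i)$ draws, so it suffices to bound deviations for each fixed $N\ge1$. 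The case $N=0$ is trivial, since $\beta\ge\sqrt{2\Lambda}\ge1$.

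For fixed $(i,s_i,a_i,s_i')$ and fixed $N$, Hoeffding's inequality gives
\[
\Pr\Big(|\hat p_N-p|>\sqrt{2\Lambda/N}\Big)\le 2e^{-4\Lambda}.
\]
There is one snag: $\Lambda_t$ depends on $t$, not on $N$. Since $t\ge N$ whenever $N_i^t=N$, we have $\Lambda_t\ge\Lambda_{N}':=\log(64nA_{\max}S_{\max}^2(N+2)^2/\gamma)$. So it suffices to control the event with radius $\sqrt{2\Lambda'_N/N}$, which is monotonically smaller. The failure probability for each $N$ is then at most
\[
2\big(\gamma/(64nA_{\max}S_{\max}^2(N+2)^2)\big)^4\le \gamma/(32 nA_{\max}S_{\max}^2(N+2)^2).
\]
Summing over $N\ge1$ (using $\sum_{N}(N+2)^{-2}\le 1/2$), then over at most $nA_{\max}S_{\max}^2$ tuples, gives a total of at most $\gamma/64\le\gamma/4$.

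The consequence for occupancy sets is then immediate. On $\mathcal E_{\rm conf}$, any $q_i\in\Delta_{i,\delta_i}(P_i)$ has induced kernel $P^{q_i}=P_i\in\mathcal P_i^t$, hence $q_i\in\Delta_{i,\delta_i}(\mathcal P_i^t)$ by Definition~\ref{def:shrunk}. I expect the main obstacle to be justifying the i.i.d.\ skeleton rigorously under adaptive, multi-player, history-dependent sampling. I would handle it by the pre-drawn-tables coupling: player $i$'s next state is generated from its own table, independently of the other players' tables and of the action randomization. This makes the visit-indexed samples genuinely i.i.d.\ and sidesteps any martingale or stopping-time subtlety.
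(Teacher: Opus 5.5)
Your proposal is correct and follows essentially the same route as the paper. Both arguments reduce to visit-count-indexed empirical frequencies, use $N_i^t(s_i,a_i)\le t$ to replace $\Lambda_t$ by $\Lambda_m$, union-bound over $m\ge1$ and over the at most $nA_{\max}S_{\max}^2$ triples, and conclude by nestedness of the intersected confidence sets. The one difference is the concentration step: you use a pre-drawn i.i.d.\ table coupling with plain Hoeffding (sharper exponent $4\Lambda_m$), whereas the paper applies Azuma--Hoeffding to the visit-indexed filtration on the original process; your version handles pairs visited only finitely often a bit more cleanly.
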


\begin{remark}
Algorithm~\ref{alg:main} is fully decentralized in terms of the information used by each player. In particular, player $i$ maintains only its own local counters $N_i^t$ and $M_i^t$ and checks for a confidence update only when the count associated with its currently observed local state--action pair $(S_i^t,A_i^t)$ reaches a power of two. Player $i$ neither observes nor keeps track of the counters, states, actions, transition samples, or confidence-update events of any other player. Consequently, confidence updates of different players may occur at different primitive times, and no common confidence-update clock, joint-state counter, synchronization signal, or coordination among the players is required.
\end{remark} 

We conclude this section by outlining the main ideas behind the finite-time regret analysis in the next section. The regret is decomposed into a mirror-descent optimization error, an occupancy-tracking error, and a one-sample gradient-estimation error. These terms are controlled using the proximal inequality, local coverage and confidence-set concentration, and a delayed coupling argument relating the time-varying trajectory to frozen-policy stationary dynamics, respectively. Finally, confidence-update times are handled separately by exploiting their logarithmic frequency under the dyadic schedule. Together, these ingredients establish the no-regret guarantee in the online setting.

\section{Finite-Time Regret Guarantee for the Fully Online Decentralized  Algorithm}\label{sec:finite_analysis}

We begin the analysis of Algorithm \ref{alg:main} by introducing the following terminology and filtration. 

\subsection{True and Confidence-Feasible Occupancies}

We distinguish throughout between \emph{true-kernel feasible} and
\emph{confidence-feasible} occupancy measures. An occupancy measure
$q_i$ is called \emph{true-kernel feasible} if
$q_i\in\Delta_i(P_i)$, i.e., if its induced transition kernel $P^{q_i}$ equals the true transition kernel $P_i$. Its corresponding state--action and state
marginal occupancies are denoted by $\rho_i$ and $\nu_i$, respectively. In contrast, an occupancy measure $\hat q_i$ is called
\emph{confidence-feasible} if
$\hat q_i\in\Delta_{i,\delta_i}(\mathcal P_i)$, meaning that its
induced transition kernel belongs to the confidence set
$\mathcal P_i$, i.e., $P^{\hat q_i}\in\mathcal P_i$. Its corresponding marginals are denoted by $\hat\rho_i$ and $\hat\nu_i$.

Accordingly, for each time $t$, let $q_i^t,\rho_i^t,\nu_i^t$ denote the
true-kernel feasible occupancies induced by the current stationary
policy $\pi_i^t$ and the true kernel $P_i$. Thus, $q_i^t$ is the
true-kernel feasible occupancy induced by $(\pi_i^t,P_i)$, whereas $\hat q_i^t$ is the confidence-feasible occupancy that induces $(\pi_i^t,\hat P_i^t)$, where, for simplicity, we write $\hat P_i^t:=P^{\hat q_i^t}\in\mathcal P_i^t$. Hence, throughout the analysis, unhatted occupancy quantities refer
to the true kernel $P_i$, while hatted occupancy quantities refer to
the confidence-feasible iterates maintained by the algorithm.

\subsection{Filtration and Regret Decomposition}

For the probabilistic arguments, let the filtration $\mathcal{F}_t$ contain all the information available strictly before time $t$, such as $S^0,A^0,r^0,S^1,A^1,r^1,\ldots,
A^{t-1},r^{t-1}$, as well as the current state $S^t$, but not the fresh randomization used to sample $A^t$. Note that this is a global filtration used only for the purpose of analysis and does not enlarge any player's information set in Algorithm~\ref{alg:main}. Thus, $\mathcal F_t$ represents the information available immediately before the action $A^t$ is sampled: the current state $S^t$ and all states, actions, rewards, transition observations, counters, confidence sets, and iterates constructed from primitive times strictly before $t$ are $\mathcal F_t$-measurable. After conditioning on $\mathcal F_t$, the action $A^t$ is sampled according to the current policy profile, the reward $r^t$ is observed, and the next state $S^{t+1}$ is generated; consequently, these variables are measurable with respect to $\mathcal F_{t+1}$. Throughout the paper, expressions such as $\mathbb P(S_i^t=\cdot\mid\mathcal F_{t-D})$ denote the conditional probability distribution of $S_i^t$ given the information available at the beginning of the delay time $t-D$.

\begin{definition}
For player $i$ and a fixed true-kernel comparator
$q_i'\in\Delta_{i,\delta_i}(P_i)$, define the fixed-comparator regret $\operatorname{Reg}_i(T;q_i')
:=
\sum_{t=0}^{T-1}
\bigl[V_i(q_i',q_{-i}^t)-V_i(q_i^t,q_{-i}^t)\bigr]$, where $q_i^t\in\Delta_i(P_i)$ is the true-kernel feasible occupancy
induced by the current policy
$\pi_i^t:=\pi_i^{\hat q_i^t}$ under the true kernel $P_i$.
\end{definition}

We note that since $V_i(q_i,q_{-i})=\langle q_i,g_i(q_{-i})\rangle$ whenever the displayed occupancies are true-kernel feasible, using Proposition~\ref{prop:dual_formulation}, we have 
\begin{align}\label{eq:player-fixed-regret}
\operatorname{Reg}_i(T;q_i')
=\sum_{t=0}^{T-1}\bigl[V_i(q_i',q_{-i}^t)-V_i(q_i^t,q_{-i}^t)\bigr]=\sum_{t=0}^{T-1}\langle q'_i-q_i^t,g_i(q_{-i}^t)\rangle.   
\end{align}
Writing $g_i^t:=g_i(q_{-i}^t)$ and adding and subtracting $\hat q_i^t$ and $\widehat R_i^t$ to \eqref{eq:player-fixed-regret} gives the basic decomposition
\begin{align}\label{eq:master-regret-decomposition}
\operatorname{Reg}_i(T;q_i')
={}&\underbrace{\sum_{t<T}\langle q_i'-\hat q_i^t,\widehat R_i^t\rangle}_{\mathcal M_i(T;q_i')}
+\underbrace{\sum_{t<T}\langle q_i'-\hat q_i^t,g_i^t-\widehat R_i^t\rangle}_{\mathcal E_i(T;q_i')}
+\underbrace{\sum_{t<T}\langle\hat q_i^t-q_i^t,g_i^t\rangle}_{\mathcal Q_i(T)}.
\end{align}

The decomposition~\eqref{eq:master-regret-decomposition} provides the roadmap for the proof. The first term, $\mathcal M_i$, is the optimization error that would remain even if the payoff-gradient samples were unbiased and the transition kernels were known. It is controlled directly by the Euclidean proximal inequality in Lemma~\ref{lem:finite-md}. The third term, $\mathcal Q_i$, captures the error from optimizing over the confidence-feasible occupancy $\hat q_i^t$ rather than the true-kernel occupancy $q_i^t$ induced by the same policy. To control this term, in Lemma~\ref{lem:movement}, we first show that mirror descent changes the policy slowly whenever the confidence set remains unchanged. We then freeze the policy over deterministic local-cover blocks and couple the actual local chain with the corresponding frozen chain. This yields linear growth of every local state--action counter, which in turn implies shrinking confidence radii for $P_i$. Uniform contraction then translates the kernel estimation error into the corresponding stationary occupancy error, which controls $\mathcal Q_i$. The second term, $\mathcal E_i$, captures the main difficulty arising from the fully online setting: $\widehat R_i^t$ is formed from a single payoff sample generated by an adaptive, nonstationary Markov trajectory, whereas $g_i^t$ is the stationary payoff gradient associated with the current true-kernel occupancies $q_{-i}^t$ induced by the current policy $\pi^t$. We control this term using delayed analysis in Lemma~\ref{lem:delay-coupling}.

\subsection{Main Regret Bound Analysis}

Here, we provide the main analysis for the finite-time no-regret guarantee. We begin with the following lemma, which bounds the first term $\mathcal M_i(T;q_i')$ in the regret decomposition~\eqref{eq:master-regret-decomposition}. 

\begin{lemma}\label{lem:finite-md}
On the anytime confidence event $\mathcal E_{\rm conf}$, for every $q_i'\in\Delta_{i,\delta_i}(P_i)$ and every constant stepsize $\eta>0$, we have
\begin{align}\label{eq:online-md-regret}
\sum_{t<T}\langle q_i'-\hat q_i^t,\widehat R_i^t\rangle
\le \frac{\|q_i'-\hat q_i^0\|_2^2}{2\eta}
+\frac\eta2\sum_{t<T}\|\widehat R_i^t\|_2^2
\le \frac1\eta+\frac{|\mathcal S_i|\eta T}{2\delta_i^2}.
\end{align}
\end{lemma}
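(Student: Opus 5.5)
The proof is the standard online projected-gradient-ascent argument with a Euclidean regularizer. The one wrinkle is that the feasible set $K_{t+1}:=\Delta_{i,\delta_i}(\mathcal P_i^{t+1})$ changes over time. Since $h_i=\frac12\|\cdot\|_2^2$, we have $D_{h_i}(q\Vert\hat q)=\frac12\|q-\hat q\|_2^2$. The update is therefore the Euclidean projection $\hat q_i^{t+1}=\Pi_{K_{t+1}}(\hat q_i^t+\eta\widehat R_i^t)$. By Lemma~\ref{lem:confidence-convex}, $K_{t+1}$ is a closed convex polytope, so the projection is well defined.

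\textbf{Step 1: one-step inequality.} The key point is that on $\mathcal E_{\rm conf}$, Lemma~\ref{lem:anytime-confidence} gives $q_i'\in\Delta_{i,\delta_i}(P_i)\subseteq K_{t+1}$ for every $t$. So the fixed comparator lies in every feasible set, even though the sets vary. Let $y^{t+1}:=\hat q_i^t+\eta\widehat R_i^t$. The projection property (Pythagoras for convex projections) then gives
\[
\|q_i'-\hat q_i^{t+1}\|_2^2\le\|q_i'-y^{t+1}\|_2^2=\|q_i'-\hat q_i^t\|_2^2-2\eta\langle q_i'-\hat q_i^t,\widehat R_i^t\rangle+\eta^2\|\widehat R_i^t\|_2^2 .
\]
Rearranging and dividing by $2\eta$ yields
\[
\langle q_i'-\hat q_i^t,\widehat R_i^t\rangle\le\frac{\|q_i'-\hat q_i^t\|_2^2-\|q_i'-\hat q_i^{t+1}\|_2^2}{2\eta}+\frac\eta2\|\widehat R_i^t\|_2^2 .
\]

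\textbf{Step 2: telescoping.} We sum over $t<T$ and drop the nonpositive final term $-\|q_i'-\hat q_i^T\|_2^2/(2\eta)$. This gives the first inequality in \eqref{eq:online-md-regret}. Nestedness of $\mathcal P_i^t$ is not needed; only $q_i'\in K_{t+1}$ for all $t$ matters.

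\textbf{Step 3: bounding the constants.} Both $q_i'$ and $\hat q_i^0$ are nonnegative vectors summing to one, so
\[
\|q_i'-\hat q_i^0\|_2^2\le\|q_i'-\hat q_i^0\|_1\cdot\|q_i'-\hat q_i^0\|_\infty\le 2\cdot 1=2,
\]
which gives the term $1/\eta$. For the second term, $\widehat R_i^t$ is nonzero only at entries with $(s_i,a_i)=(S_i^t,A_i^t)$. There are $|\mathcal S_i|$ such entries (one per $s_i'$), each of magnitude at most $r_i^t/\hat\rho_i^t(S_i^t,A_i^t)\le 1/\delta_i$. Here we use $r_i\in[0,1]$ and the shrinkage constraint $\hat q_i^t\in K_t$, which holds for $t=0$ by initialization and for $t\ge1$ by the update. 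Hence
\[
\|\widehat R_i^t\|_2^2\le|\mathcal S_i|/\delta_i^2,
\]
and summing over $t<T$ gives the term $|\mathcal S_i|\eta T/(2\delta_i^2)$.

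The only nonroutine step is the time-varying feasible set. It is resolved entirely by the containment $q_i'\in K_{t+1}$ on $\mathcal E_{\rm conf}$, which ensures the projection inequality applies at every step with the same comparator.
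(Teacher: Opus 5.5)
Your proof is correct and follows the paper's argument: the fixed comparator lies in every time-varying set $\Delta_{i,\delta_i}(\mathcal P_i^{t+1})$ on $\mathcal E_{\rm conf}$, a one-step inequality telescopes without restart terms, and the same bounds $\|q_i'-\hat q_i^0\|_2^2\le 2$ and $\|\widehat R_i^t\|_2^2\le|\mathcal S_i|/\delta_i^2$ finish it. The only difference is cosmetic: you get the one-step bound directly from the projection (Pythagorean) inequality, whereas the paper uses the first-order optimality condition, the three-point identity, and Young's inequality, a form that extends verbatim to general Bregman regularizers.
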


The following lemma provides a high-probability bound on the policy and occupancy movements over the course of the algorithm, which allows us to bound the third term $\mathcal Q_i(T)$ in~\eqref{eq:master-regret-decomposition}.

\begin{lemma}\label{lem:movement}
Fix $T$ and use the constant stepsize $\eta=T^{-1/2}$. Define
$L_i:=\sqrt{|\mathcal S_i|^{3}|\mathcal A_i|}$. Suppose
\begin{align}\label{eq:T-cover-threshold}
\max_i\frac{L_iH_i^2\eta}{\delta_i^2}\le\frac{1}{16}.
\end{align}
Then, with probability at least $1-\gamma/2$, the following hold
simultaneously for every player $i$:
\begin{enumerate}[(i)]
\item On every step $t$ at which player $i$ does not update its
confidence set,
\begin{align}\label{eq:pi-movement}
\max_{s_i\in\mathcal S_i}
\|\pi_i^{t+1}(\cdot\mid s_i)-\pi_i^t(\cdot\mid s_i)\|_1
\le\frac{2L_i\eta}{\delta_i^2}.
\end{align}

\item Let $K_i(T):=
c\frac{H_i}{\delta_i}
\big(\frac{\Lambda_T}{\delta_i}+|\mathcal S_i||\mathcal A_i|
\log_2 T\big)$. Then, for every $K_i(T)\le t<T$,
\begin{align}\label{eq:pointwise-occ-error}
\|\hat\rho_i^t-\rho_i^t\|_1
\le c\frac{|\mathcal{S}_i|}{\kappa}\max_{s_i,a_i}\beta^t(s_i,a_i)\leq 
c\frac{|\mathcal S_i|}{\kappa}
\sqrt{\frac{H_i\Lambda_t}{\delta_i t}}.
\end{align}

\item Consequently,
\begin{align}\label{eq:Q-bound}
\Big|\frac{1}{T}\sum_{t<T}
\langle\hat q_i^t-q_i^t,g_i^t\rangle\Big|
\le
c\frac{K_i(T)}{T}
+c\frac{|\mathcal S_i|}{\kappa}
\sqrt{\frac{H_i\Lambda_T}{\delta_iT}}.
\end{align}
\end{enumerate}
\end{lemma}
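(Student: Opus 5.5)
We establish the three parts in order, working on the intersection of $\mathcal E_{\rm conf}$ (probability at least $1-\gamma/4$ by Lemma~\ref{lem:anytime-confidence}) with a coverage event of probability at least $1-\gamma/4$ constructed in part (ii).

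\emph{Part (i)} is deterministic. On a step where $\mathcal P_i^{t+1}=\mathcal P_i^t$, the point $\hat q_i^t$ lies in the same convex set $\Delta_{i,\delta_i}(\mathcal P_i^{t+1})$ onto which we project. Since the Euclidean projection is nonexpansive,
\[
\|\hat q_i^{t+1}-\hat q_i^t\|_2\le\eta\|\widehat R_i^t\|_2\le \eta\sqrt{|\mathcal S_i|}/\delta_i .
\]
Only one $(s_i,a_i)$ row of $\widehat R_i^t$ is nonzero, with $|\mathcal S_i|$ equal entries bounded by $1/\delta_i$. Passing to $\ell_1$ on the $|\mathcal S_i||\mathcal A_i|$-dimensional marginal $\hat\rho_i$ costs a further $\sqrt{|\mathcal S_i||\mathcal A_i|}$ factor, which gives $\|\hat\rho_i^{t+1}-\hat\rho_i^t\|_1\le L_i\eta/\delta_i$. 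The map $\rho\mapsto\rho(s,\cdot)/\nu(s)$ is $2/\nu(s)$-Lipschitz in $\ell_1$. Using $\hat\nu_i(s_i)\ge|\mathcal A_i|\delta_i\ge\delta_i$, this yields \eqref{eq:pi-movement}; the constant absorbs slack.

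\emph{Part (ii)} is the main obstacle: linear growth of every local counter $N_i^t(s_i,a_i)$ along the adaptive trajectory.
\begin{enumerate}[(a)]
\item Partition time into deterministic blocks of length $B\asymp H_i/\delta_i$ times a log factor.
\item Within a block containing no confidence update, part (i) shows the policy drifts by at most $B\cdot 2L_i\eta/\delta_i^2$. Condition \eqref{eq:T-cover-threshold} makes this at most a small constant, roughly $1/(8H_i)$ per unit of cover time.
\item Couple the actual chain with the chain run under the policy frozen at the block start. Per step the transitions differ with probability at most the policy drift, so the two chains agree over a cover-time window with constant probability. Markov's inequality on $\tau_{\rm cov}$, applied to the frozen chain, then shows it covers $\mathcal S_i$ within $2H_i$ steps with probability at least $1/2$.
\item Each visit to $s_i$ plays $a_i$ with probability at least $\delta_i$ by \eqref{eq:action-floor}. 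Hence each block yields a visit to $(s_i,a_i)$ with probability bounded below by a constant multiple of $\delta_i$, conditionally on the past.
\item Apply Freedman/Azuma to the block-indicator martingale and a union bound over $(i,s_i,a_i)$ and $t$ (the $\Lambda_t$ factor). This gives $N_i^t(s_i,a_i)\gtrsim \delta_i t/H_i$ for $t\ge K_i(T)$.
\item At most $|\mathcal S_i||\mathcal A_i|\log_2T$ blocks contain a confidence update; discard them. This accounts for the additive $|\mathcal S_i||\mathcal A_i|\log_2 T$ term in $K_i(T)$.
\end{enumerate}
Next, on $\mathcal E_{\rm conf}$ both $\hat P_i^t$ and $P_i$ lie in $\mathcal P_i^t$. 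Because of the dyadic schedule, the radii in force are within a factor $\sqrt2$ of $\beta_i^t$, so $\|\hat P_i^t(\cdot|s,a)-P_i(\cdot|s,a)\|_1\le 2|\mathcal S_i|\beta$.

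The occupancies $\hat\nu_i^t$ and $\nu_i^t$ are the stationary laws of the same policy $\pi_i^t$ under $\hat P_i^t$ and $P_i$. The standard perturbation identity gives
\[
\hat\nu-\nu=(\hat\nu-\nu)P^{\pi}+\hat\nu(\hat P^{\pi}-P^{\pi}).
\]
Combined with the $\kappa$-contraction of Assumption~\ref{ass:ergodic}, this yields $\|\hat\nu-\nu\|_1\le\kappa^{-1}\max_{s}\|\hat P^\pi(\cdot|s)-P^\pi(\cdot|s)\|_1$. A technical point: contraction is assumed for $P_i$, not for $\hat P_i$, but the identity needs it only for the true chain. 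Multiplying by $\pi_i^t$ transfers the bound to $\rho$, which gives \eqref{eq:pointwise-occ-error} after inserting the counter bound.

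\emph{Part (iii).} Note that $\|g_i^t\|_\infty\le1$ and that $g_i^t$ is constant in $s_i'$, so $\langle\hat q_i^t-q_i^t,g_i^t\rangle=\langle\hat\rho_i^t-\rho_i^t,g_i^t\rangle$ is bounded by $\|\hat\rho_i^t-\rho_i^t\|_1\le2$. For $t<K_i(T)$ we use the bound $2$. For $t\ge K_i(T)$ we sum \eqref{eq:pointwise-occ-error} and use $\sum_{t\le T}t^{-1/2}\le2\sqrt T$. Dividing by $T$ gives \eqref{eq:Q-bound}.
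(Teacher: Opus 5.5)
Your overall architecture is the paper's. Part (i) uses nonexpansive projection and the $2/\delta_i$-Lipschitz policy map. Part (ii) uses frozen-policy coupling, Markov's inequality on the cover time, the action floor, Freedman, and charging the dyadic updates, followed by the $\sqrt2$ dyadic radius comparison and the perturbation identity with contraction only of the true chain. Part (iii) is the same. There is, however, a concrete error in the visitation step: the block length in (a) is incompatible with (b)--(e).

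Over a window of length $w$, the frozen-policy coupling separates with probability at most $L_iw^2\eta/\delta_i^2$. The bound is quadratic in $w$ because the per-step discrepancy is the \emph{accumulated} drift $2L_i(t-u)\eta/\delta_i^2$, not a fixed per-step drift. Hence \eqref{eq:T-cover-threshold} makes it small only for $w\lesssim H_i$. For $B\asymp (H_i/\delta_i)\log$ it gives only $\lesssim \log^2/\delta_i^2$, which is vacuous.

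If instead, as (c) suggests, you couple over a single cover-time window per block, each block succeeds with probability $\asymp\delta_i$, as in (d). Since at most one success is counted per block, Freedman then yields only $N_i^t\gtrsim \delta_i t/B\asymp \delta_i^2 t/(H_i\log)$, not $\delta_i t/H_i$. The radius in \eqref{eq:pointwise-occ-error} would be worse by a factor $\sqrt{\log/\delta_i}$.

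The fix is the paper's choice $B=2H_i+1$. Then:
\begin{itemize}
\item the coupling over the whole block fails with probability at most $4L_iH_i^2\eta/\delta_i^2\le 1/4$;
\item the frozen chain covers $\mathcal S_i$ within $2H_i$ steps with probability at least $1/2$;
\item each block therefore succeeds (see below for what ``succeeds'' must mean) with conditional probability at least $\delta_i/4$;
\item with $M_t\ge t/(3H_i)-1$ blocks, $N_i^t\ge \frac{\delta_i}{32H_i}t-c(\Lambda_T/\delta_i+|\mathcal S_i||\mathcal A_i|(1+\log_2 t))$, which is exactly where $K_i(T)$ comes from.
\end{itemize}

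Step (f) also cannot be done by simply discarding update blocks. Whether a block contains a confidence update is not $\mathcal F_{\ell_m}$-measurable. It is also triggered by the very visits you are counting. So you cannot lower-bound the conditional visit probability on the event ``no update in this block.''

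The paper handles this as follows. It runs the coupling only until the first update and uses the visit-or-update indicator $Y_m$. The bound $\mathbb E[Y_m\mid\mathcal F_{\ell_m}]\ge\delta_i/4$ then holds unconditionally, applying the action floor \eqref{eq:action-floor} at the first visit to $s_i$, which is a stopping time. Freedman is applied to $Y_m$. Only afterwards does it subtract the at most $|\mathcal S_i||\mathcal A_i|(1+\lfloor\log_2 t\rfloor)$ update blocks. With these two repairs your argument coincides with the paper's.

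A minor slip in (i): marginalizing over $s_i'$ is not $\ell_2$-nonexpansive; it can inflate the $\ell_2$ norm by $\sqrt{|\mathcal S_i|}$. So the factor you cite is off, even though your final bound is right. The clean route is $\|\hat\rho_i^{t+1}-\hat\rho_i^t\|_1\le\|\hat q_i^{t+1}-\hat q_i^t\|_1\le |\mathcal S_i|\sqrt{|\mathcal A_i|}\,\|\hat q_i^{t+1}-\hat q_i^t\|_2$, which gives exactly $L_i\eta/\delta_i$.
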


Next, we bound the second term $\mathcal E_i(T;q_i')=\sum_{t<T}\langle q_i'-\hat q_i^t,g_i^t-\widehat R_i^t\rangle$ in the regret decomposition. At time $t$, the estimator $\widehat R_i^t$ uses the actual observation $(S^t,A^t,r_i^t)$,\footnote{The actual trajectory is generated by the true kernel $P_i$ with the interaction flow
$\mathcal P_i^t \rightarrow \hat q_i^t \rightarrow \pi_i^t \rightarrow A_i^t \xrightarrow{P_i} S_i^{t+1}$.
Thus, $\hat P_i^t=P^{\hat q_i^t}$ is used only for analysis to relate the confidence-feasible occupancy $\hat q_i^t$, induced by $(\pi_i^t,\hat P_i^t)$, to the true-kernel occupancy $q_i^t$, induced by $(\pi_i^t,P_i)$, and does not play a role in generating the actual trajectory.} whereas $g_i^t=g_i(q_{-i}^t)$ is defined in terms of the stationary occupancies $q_j^t$. In general, there is no reason for the actual state $S^t$, conditional on the immediately preceding history, to be distributed according to these stationary occupancies. We therefore look back $D_T$ steps and condition on $\mathcal F_{t-D_T}$, where the delay $D_T$ is introduced for one fundamental reason: to provide a sufficiently long window over which the actual time-varying process can be approximated by a time-homogeneous process that has enough time to mix, so that the conditional law of the actual online sample can be well approximated by the corresponding stationary-occupancy distribution. Since the policies continue to change during these $D_T$ steps, the proof couples the actual trajectory with an auxiliary time-homogeneous trajectory whose policies are frozen at time $t-D_T$. This allows us to show that, over a time window $[t-D_T,t]$ during which no player performs a confidence update, the conditional law of the actual state--action profile at time $t$ is close to the product stationary law corresponding to the policies at time $t-D_T$.

\begin{lemma}\label{lem:delay-window-mixing}
Let $D_T\ge 0$ be a nonnegative integer delay, and fix $t\ge D_T$. Assume that no player updates its confidence set during the time delay window $[t-D_T,t]$, and let $L_j\!=\!\sqrt{|\mathcal A_j||\mathcal S_j|^{3}}$. Then
\begin{align}\label{eq:actual-product-mixing}
\bigg\|
\mathbb P((S^t,A^t)\in\cdot\mid\mathcal F_{t-D_T})
-
\bigotimes_{j=1}^n\rho_j^{t-D_T}
\bigg\|_{\rm TV}
\le
cD_T^2\eta\sum_{j=1}^n\frac{L_j}{\delta_j^2}
+
n(1-\kappa)^{D_T}.
\end{align}
\end{lemma}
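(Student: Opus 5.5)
The plan is to compare three processes on the window $[t-D_T,t]$, all started from the actual joint state $S^{t-D_T}$, which is $\mathcal F_{t-D_T}$-measurable.

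\textbf{The three processes.}
(a) The actual trajectory, driven by the time-varying policies $\pi_j^{s}$ for $s\in[t-D_T,t]$.
(b) An auxiliary trajectory in which every player $j$ uses the frozen policy $\pi_j^{t-D_T}$ on the same kernel $P_j$.
(c) The product stationary law $\bigotimes_j\rho_j^{t-D_T}$.

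By the triangle inequality, the total variation distance in \eqref{eq:actual-product-mixing} is at most $\|\mathcal L_{(a)}-\mathcal L_{(b)}\|_{\rm TV}+\|\mathcal L_{(b)}-(c)\|_{\rm TV}$. Here $\mathcal L$ denotes the conditional law of the state--action pair at time $t$ given $\mathcal F_{t-D_T}$.

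\textbf{Frozen process versus stationarity.} Under Assumption~\ref{ass:indep}, the frozen process factors into independent local chains given $S^{t-D_T}$. Its law at time $t$ is therefore $\bigotimes_j\big(\delta_{S_j^{t-D_T}}(P_j^{\pi_j^{t-D_T}})^{D_T}\otimes\pi_j^{t-D_T}\big)$. The total variation distance between two product measures is at most the sum of the coordinatewise distances.

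For each $j$, Assumption~\ref{ass:ergodic} applied $D_T$ times gives a state-marginal $\ell_1$ error of at most $2(1-\kappa)^{D_T}$. Attaching the same policy $\pi_j^{t-D_T}(\cdot\mid s_j)$ does not increase this distance, and the resulting stationary law is $\rho_j^{t-D_T}$. Summing over $j$ gives the term $n(1-\kappa)^{D_T}$, after absorbing the factor $1/2$ from the TV--$\ell_1$ relation.

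\textbf{Actual versus frozen process.} The policies $\pi^s$ are random and depend on the history, so I will not compare fixed laws directly. Instead I will use a coupling argument.

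I construct both trajectories on one probability space. They share $S^{t-D_T}$, and at every step they use maximal couplings of the action choices and shared uniform variables for the next-state draws. Since kernels depend only on $(s_j,a_j)$, once the actions agree at a common state, the next states agree.

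Under this coupling, at step $s$ the trajectories first split for player $j$ only if the actual action differs from the frozen one. Conditioned on agreement so far, this has probability at most $\tfrac12\|\pi_j^{s}(\cdot\mid S_j^s)-\pi_j^{t-D_T}(\cdot\mid S_j^s)\|_1$. By assumption no player updates its confidence set in the window, so Lemma~\ref{lem:movement}(i) applies at every step, and a telescoping argument gives
\[
\max_{s_j}\|\pi_j^{s}(\cdot\mid s_j)-\pi_j^{t-D_T}(\cdot\mid s_j)\|_1\le \frac{2L_j\eta}{\delta_j^2}(s-t+D_T).
\]
A union bound over $s\in[t-D_T,t]$ and over $j$ bounds the probability of any disagreement by $\sum_j\sum_{k\le D_T}kL_j\eta/\delta_j^2\le cD_T^2\eta\sum_j L_j/\delta_j^2$. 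This gives the first term of \eqref{eq:actual-product-mixing}, since TV distance is at most the probability that the coupled variables differ.

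\textbf{Expected difficulty.} The main subtlety is the adaptivity of the policies. The future policy $\pi^s$ depends on the path, including $\widehat R^s$, and so the coupling argument has to be set up carefully.

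The fix is that the movement bound of Lemma~\ref{lem:movement}(i) is deterministic given the absence of confidence updates. Indeed, the proximal step with $\|\widehat R_j^s\|_\infty\le1/\delta_j$ onto a fixed polytope is nonexpansive, so the bound holds pathwise along the actual trajectory. The coupling then runs sequentially: on the agreement event the frozen action is drawn from $\pi^{t-D_T}$ and the actual action from $\pi^s$, which is computed from the actual, and hence agreed, history.

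A further point to verify is that the absence of confidence updates is measurable with respect to the actual path and does not interfere with conditioning. I will treat this as holding on the event stated in the lemma and carry out the bound pathwise.
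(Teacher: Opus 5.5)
Your proposal is correct and follows the paper's proof: the same triangle inequality through a frozen-policy chain started at $S^{t-D_T}$, a step-by-step coupling whose separation probability accumulates the per-step policy movement $2L_j\eta/\delta_j^2$ into the $cD_T^2\eta\sum_j L_j/\delta_j^2$ term, and the product-TV inequality plus $D_T$-fold contraction for $n(1-\kappa)^{D_T}$. The only difference is cosmetic: you couple the actions directly, so the time-$t$ action mismatch falls under the same union bound, whereas the paper couples the marginal state kernels $P_j^{\pi}$ and adds the time-$t$ policy discrepancy as a separate TV term; your pathwise treatment of the clean-window hypothesis, which you rightly flag, is also how the paper handles it.
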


To relate the payoff estimator $\widehat R_i^t$, which is constructed from the actual nonstationary trajectory, to the occupancy-payoff gradient $g_i^{t}$, we use a delayed comparison over a mixing window of length $D_T$. In particular, by conditioning on $\mathcal F_{t-D_T}$, we leave $(S^t,A^t)$ random over the mixing window. We first use the slow movement of the occupancy iterates in Lemma~\ref{lem:movement} to replace the current denominator $\hat\rho_i^t$ in $\widehat R_i^t$ with the delayed denominator $\hat\rho_i^{t-D_T}$, which is $\mathcal F_{t-D_T}$-measurable. Lemma~\ref{lem:delay-window-mixing} then shows that the conditional law of $(S^t,A^t)$ given $\mathcal F_{t-D_T}$ is close to the product stationary law $\bigotimes_j\rho_j^{t-D_T}$. Under this product stationary law, the mean of $\widehat R_i^t$ with denominator $\hat\rho_i^{t-D_T}$ is $g_i^{t-D_T}(s_i,a_i,s_i')\rho_i^{t-D_T}(s_i,a_i)/\hat\rho_i^{t-D_T}(s_i,a_i)$. The occupancy-estimation bound controls the difference between this quantity and the delayed occupancy-payoff gradient $g_i^{t-D_T}$, while the slow movement of the policies controls the remaining drift from $g_i^{t-D_T}$ to $g_i^t$. Using these observations, the following lemma bounds the second term $\mathcal E_i(T;q_i')$ in~\eqref{eq:master-regret-decomposition} as follows:

\begin{lemma}\label{lem:delay-coupling}
Let $g_i^t:=g_i(q_{-i}^t)$ be the occupancy-payoff gradient \eqref{eq:l_def} and $D_T:=1+\lceil\frac{1}{\kappa}\log (\frac{8nT^3}{\gamma})\rceil$. Then, with probability at least $1-\gamma$, uniformly over $q_i'\in\Delta_{i,\delta_i}(P_i)$, we have
\begin{align}\label{eq:cumulative-gradient-error}
&\!\!\!\!\frac1T\Big|\sum_{t<T}\langle q_i'-\hat q_i^t,\widehat R_i^t-g_i^t\rangle\Big|
\lesssim{}\frac{\sqrt{TD_T\Lambda_T}+D_T\Lambda_T}{T\delta_i}
+\frac{D_T\eta L_i}{\delta_i^3}
+\frac{D_T^2\eta}{\delta_i}\sum_{j=1}^n\frac{L_j}{\delta_j^2}\cr
&\qquad\qquad+\frac{|\mathcal S_i|}{\kappa\delta_i}
\sqrt{\frac{H_i\Lambda_T}{\delta_iT}}
+D_T\eta\sum_{j\ne i}\frac{L_j}{\kappa\delta_j^2}+\frac{D_T}{T\delta_i}\sum_{j=1}^nJ_j(T)
+\frac{D_T+K_i(T)}{T\delta_i},
\end{align}
where $L_j=\sqrt{|\mathcal S_j|^3|\mathcal A_j|}$,  $K_i(T)=
c\frac{H_i}{\delta_i}
\big(\frac{\Lambda_T}{\delta_i}
+|\mathcal S_i||\mathcal A_i|\log_2 T\big)$, and $J_i(T)=
|\mathcal S_i||\mathcal A_i|(\log_2 T+1)$.
\end{lemma}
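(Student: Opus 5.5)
We bound $\mathcal E_i(T;q_i')$ by splitting $[0,T)$ into \emph{good} times and \emph{bad} times. A time $t$ is good if $t\ge D_T+K_i(T)$ and no player updates its confidence set in the window $[t-D_T,t]$. By the dyadic schedule, player $j$ has at most $J_j(T)=|\mathcal S_j||\mathcal A_j|(\log_2T+1)$ update times before $T$. Each update time spoils at most $D_T+1$ windows. Since $|\langle q_i'-\hat q_i^t,\widehat R_i^t-g_i^t\rangle|\lesssim 1/\delta_i$ (using $\|\widehat R_i^t\|_\infty\le 1/\delta_i$, $\|g_i^t\|_\infty\le1$, and $\|q_i'-\hat q_i^t\|_1\le 2$), the bad times contribute the terms $\frac{D_T}{T\delta_i}\sum_jJ_j(T)+\frac{D_T+K_i(T)}{T\delta_i}$. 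On good times we introduce the delayed surrogate
\[
\widetilde R_i^t(s_i,a_i,\cdot):=\frac{r_i^t\mathbf 1\{(S_i^t,A_i^t)=(s_i,a_i)\}}{\hat\rho_i^{t-D_T}(s_i,a_i)}
\]
and the delayed comparator iterate $\hat q_i^{t-D_T}$, both $\mathcal F_{t-D_T}$-measurable. We then write
\[
\langle q_i'-\hat q_i^t,\widehat R_i^t-g_i^t\rangle
=\langle q_i'-\hat q_i^{t-D_T},\widetilde R_i^t-\mathbb E[\widetilde R_i^t\mid\mathcal F_{t-D_T}]\rangle+\text{(bias)}+\text{(drift)}.
\]

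\textbf{Step 1 (martingale part).} For each residue class $t\equiv m \pmod{D_T}$ the first term is a martingale difference sequence with increments bounded by $O(1/\delta_i)$. To get uniformity over $q_i'$, we note that the term is linear in $q_i'$. We therefore apply Azuma/Freedman coordinatewise in the finitely many coordinates $(s_i,a_i)$, or equivalently at the vertices of the polytope, and take a union bound over the $D_T$ residue classes and over $t\le T$, which is absorbed into $\Lambda_T$. Summing the $D_T$ classes, each of length $T/D_T$, gives $\frac{1}{\delta_i}\bigl(\sqrt{TD_T\Lambda_T}+D_T\Lambda_T\bigr)$, where the Freedman lower-order term supplies the second piece.

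\textbf{Step 2 (drift).} Replacing $\hat q_i^t$ by $\hat q_i^{t-D_T}$ and the denominator $\hat\rho_i^t$ by $\hat\rho_i^{t-D_T}$ costs $D_T$ times the per-step movement. Lemma~\ref{lem:movement}(i) gives a per-step movement of $L_i\eta/\delta_i^2$, valid because there is no update in the window. The denominator change is amplified by $1/\delta_i$ (since $|1/a-1/b|\le|a-b|/\delta_i^2$), which yields $D_T\eta L_i/\delta_i^3$. Replacing $g_i^t$ by $g_i^{t-D_T}$ requires controlling $\|\rho_j^t-\rho_j^{t-D_T}\|_1$ for $j\ne i$. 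Uniform contraction (Assumption~\ref{ass:ergodic}) gives a stationary-distribution perturbation bound $\|\nu^{\pi}-\nu^{\pi'}\|_1\le \kappa^{-1}\max_s\|\pi-\pi'\|_1$. Combined with Lemma~\ref{lem:movement}(i), this gives $D_T\eta\sum_{j\neq i}L_j/(\kappa\delta_j^2)$.

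\textbf{Step 3 (bias).} Under the conditional law given $\mathcal F_{t-D_T}$, Lemma~\ref{lem:delay-window-mixing} lets us replace the law of $(S^t,A^t)$ by $\bigotimes_j\rho_j^{t-D_T}$. The error is at most $\frac1{\delta_i}\bigl(cD_T^2\eta\sum_jL_j/\delta_j^2+n(1-\kappa)^{D_T}\bigr)$, and the choice of $D_T$ makes $n(1-\kappa)^{D_T}\le\gamma/(8T^3)$, which is negligible. Under the product law, the conditional mean equals $g_i^{t-D_T}\rho_i^{t-D_T}/\hat\rho_i^{t-D_T}$ (by the Remark after~\eqref{eq:online-estimator}). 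Its deviation from $g_i^{t-D_T}$, paired against $q_i'-\hat q_i^{t-D_T}$, is at most $\frac1{\delta_i}\|\hat\rho_i^{t-D_T}-\rho_i^{t-D_T}\|_1$. By Lemma~\ref{lem:movement}(ii), this is at most $\frac{|\mathcal S_i|}{\kappa\delta_i}\sqrt{H_i\Lambda_t/(\delta_i t)}$ once $t-D_T\ge K_i(T)$. Averaging over $t$, using $\frac1T\sum_t t^{-1/2}\le 2T^{-1/2}$, gives the stated term.

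\textbf{Probability.} Intersect $\mathcal E_{\rm conf}$, the event of Lemma~\ref{lem:movement} (probability $1-\gamma/2$), and the martingale event (allotted $\gamma/4$) to get total probability at least $1-\gamma$.

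\textbf{Main obstacle.} The delicate step is Step 1. The martingale structure only holds with $D_T$-step lag, so the sum must be split into $D_T$ interleaved subsequences, each with its own filtration $\{\mathcal F_{m+kD_T}\}_k$. We must also ensure that the good-time indicator is $\mathcal F_{t-D_T}$-predictable. Whether an update occurs inside the window is not known at time $t-D_T$, so we instead bound the full sum over all $t\ge D_T$ of the martingale differences, which needs no indicator, and handle bias and drift only on good times. This avoids the predictability issue.
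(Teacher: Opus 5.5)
Your proposal follows the paper's proof: delayed conditioning on $\mathcal F_{t-D_T}$, residue-class martingales handled coordinatewise by Freedman, Lemma~\ref{lem:delay-window-mixing} and Lemma~\ref{lem:movement}(ii) for the bias, the contraction-based perturbation bound for the drift of $g_i$, and a direct $O(1/\delta_i)$ charge for the first $D_T+K_i(T)$ times and the $O(D_T\sum_jJ_j(T))$ contaminated windows. The one real difference is that you swap the denominator $\hat\rho_i^t\to\hat\rho_i^{t-D_T}$ pathwise, before centering. The paper instead centers $\widehat R_i^t$ itself and does this swap inside $\mathbb E[\cdot\mid\mathcal F_{t-D_T}]$. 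Your ordering is slightly cleaner: the movement bound holds pathwise on a realized clean window, whereas inside a conditional expectation it would have to hold on every continuation.

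Two things need fixing.

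\textbf{The residue modulus.} The classes must be taken modulo $D_T+1$, as in the paper, not modulo $D_T$. The centered term at time $t$ depends on $A^t$ and $r_i^t$, so it is only $\mathcal F_{t+1}$-measurable. The next member of a class with spacing $D_T$ is centered only with respect to $\mathcal F_t$, so the sequence is not a martingale-difference sequence. With spacing $D_T+1$ the next centering is with respect to $\mathcal F_{t+1}$, as required.

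\textbf{Predictability in the bias term.} Summing the martingale differences over all $t\ge D_T$ removes the predictability issue only for the sampling term. The bias $\mathbb E[\widetilde R_i^t\mid\mathcal F_{t-D_T}]-g_i^{t-D_T}$ is still a conditional expectation. It averages over continuations in which some player does update inside $[t-D_T,t]$, so knowing that the realized window is clean does not by itself license Lemma~\ref{lem:delay-window-mixing}. The paper argues the same way, so you are not behind it, but a rigorous version is easy:
\begin{enumerate}
\item Run the frozen-policy coupling only up to the first update in the window. Writing $C_t$ for the event that some player updates in the window, this adds $2\,\mathbb P(C_t\mid\mathcal F_{t-D_T})/\delta_i$ to the bias bound at every $t$.
\item Since $C_t\in\mathcal F_t$, apply Freedman on the same residue classes and use the deterministic count $\sum_t\mathbf 1_{C_t}\le(D_T+1)\sum_jJ_j(T)$. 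This gives $\sum_t\mathbb P(C_t\mid\mathcal F_{t-D_T})\lesssim D_T\sum_jJ_j(T)+D_T\Lambda_T$.
\end{enumerate}
This only reproduces terms already present in \eqref{eq:cumulative-gradient-error}.
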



Finally, by combining the preceding lemmas we obtain the main result of this section.

\begin{theorem}\label{theo:finite}
Let Assumptions~\ref{ass:indep} and~\ref{ass:ergodic} hold. Fix
$T\ge2$ and $\gamma\in(0,1)$, and assume players follow Algorithm~\ref{alg:main}
with the quadratic regularizer and constant stepsize
$\eta=T^{-1/2}$. Let
\[
S_{\max}:=\max_{i\in[n]}|\mathcal S_i|,
\qquad
A_{\max}:=\max_{i\in[n]}|\mathcal A_i|,
\qquad
H:=\max_{i\in[n]}H_i,
\qquad
\delta:=\min_{i\in[n]}\delta_i, 
\]
and suppose $T\!\ge\! \frac{256S_{\max}^3A_{\max}H^4}{\delta^4}$,
so that condition~\eqref{eq:T-cover-threshold} holds.\footnote{For smaller horizons the criterion is bounded by
its trivial constant upper bound.} Then, with probability at least
$1-\gamma$,
\begin{align}\label{eq:finite_bound}
\max_{i\in[n]}\;
\max_{q_i'\in\Delta_{i,\delta_i}(P_i)}
\frac{1}{T}\operatorname{Reg}_i(T;q_i')
\le
\frac{\mathfrak C_1}{\sqrt T}+\frac{\mathfrak C_2}{T},
\end{align}
where, up to universal constants and logarithmic factors,
\begin{align}\label{eq:CT}
\mathfrak C_1
&=
\widetilde{\mathcal O}\bigg(
\frac{S_{\max}\sqrt{H}}
{\kappa\delta^{3/2}}
\sqrt{\log\!\Big(\frac{nA_{\max}S_{\max}^2T^2}{\gamma}\Big)}
+
\frac{nS_{\max}^{3/2} A_{\max}^{1/2}}
{\kappa^2\delta^3}
\log^2\!\Big(\frac{nT^3}{\gamma}\Big)
\bigg),\cr
\mathfrak C_2
&=
\widetilde{\mathcal O}\left(
\frac{H}{\delta^3}
\log\!\Big(\frac{nA_{\max}S_{\max}^2T^2}{\gamma}\Big)
+
\frac{H S_{\max}A_{\max}}{\delta^2}
+
\frac{nS_{\max}A_{\max}}{\kappa \delta}
\log\!\Big(\frac{nT^3}{\gamma}\Big)
\right).
\end{align}
\end{theorem}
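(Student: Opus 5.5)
The plan is to assemble the theorem directly from the regret decomposition \eqref{eq:master-regret-decomposition}, bounding each of the three terms with the preceding lemmas on a common high-probability event and then substituting $\eta=T^{-1/2}$ and $D_T=\mathcal O(\kappa^{-1}\log(nT^3/\gamma))$.

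\textbf{Step 1: the good event.} We work on the intersection of three events: the anytime confidence event $\mathcal E_{\rm conf}$ of Lemma~\ref{lem:anytime-confidence} (probability at least $1-\gamma/4$), the movement/occupancy event of Lemma~\ref{lem:movement} (probability at least $1-\gamma/2$), and the martingale event of Lemma~\ref{lem:delay-coupling}. The failure probabilities do not immediately sum to at most $\gamma$. We therefore rescale the confidence parameters, running each lemma with $\gamma/3$ (or noting that Lemma~\ref{lem:delay-coupling} is itself proved on the first two events plus an Azuma part of size $\gamma/4$). This only changes the constants inside the logarithms $\Lambda_T$ and $\log(nT^3/\gamma)$, which are absorbed in $\widetilde{\mathcal O}$. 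We also apply a union bound over players $i\in[n]$, which again costs only a factor $n$ inside the logs. The assumption $T\ge 256S_{\max}^3A_{\max}H^4/\delta^4$ gives $L_iH_i^2\eta/\delta_i^2\le S_{\max}^{3/2}A_{\max}^{1/2}H^2/(\delta^2\sqrt T)\le 1/16$, so \eqref{eq:T-cover-threshold} holds and Lemma~\ref{lem:movement} applies.

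\textbf{Step 2: bound each term uniformly in $q_i'$.} For $\mathcal M_i$, Lemma~\ref{lem:finite-md} with $\eta=T^{-1/2}$ gives $\frac1T\mathcal M_i\le \frac{1}{\sqrt T}\big(1+|\mathcal S_i|/(2\delta_i^2)\big)$. This is dominated by the $\mathfrak C_1$ terms since $\delta\le 1$ and $S_{\max}\ge1$. For $\mathcal Q_i$, which does not depend on $q_i'$, we use \eqref{eq:Q-bound}: the $\frac{|\mathcal S_i|}{\kappa}\sqrt{H_i\Lambda_T/(\delta_iT)}$ piece is of order $1/\sqrt T$, and $K_i(T)/T$ contributes $\frac{H}{\delta^2}\Lambda_T+\frac{HS_{\max}A_{\max}}{\delta}\log T$ to $\mathfrak C_2$. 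For $\mathcal E_i$ we use \eqref{eq:cumulative-gradient-error}, which already holds uniformly over comparators, and sort its terms into the two rates:
\begin{itemize}
\item $\sqrt{TD_T\Lambda_T}/(T\delta)$ and $\frac{|\mathcal S_i|}{\kappa\delta}\sqrt{H\Lambda_T/(\delta T)}$ give the first term of $\mathfrak C_1$, namely $S_{\max}\sqrt H\,\delta^{-3/2}\kappa^{-1}\sqrt{\Lambda_T}$ with $D_T$ in $\widetilde{\mathcal O}$.
\item The $\eta$-terms $D_T\eta L_i/\delta^3$, $D_T^2\eta\, nL_{\max}/\delta^3$, and $D_T\eta\, nL_{\max}/(\kappa\delta^2)$ give $nS_{\max}^{3/2}A_{\max}^{1/2}\kappa^{-2}\delta^{-3}\log^2(nT^3/\gamma)/\sqrt T$, the second term of $\mathfrak C_1$, using $D_T^2\lesssim\kappa^{-2}\log^2(\cdot)$.
\item $D_T\Lambda_T/(T\delta)$, $\frac{D_T}{T\delta}\sum_jJ_j(T)\lesssim \frac{nS_{\max}A_{\max}}{\kappa\delta}\log(nT^3/\gamma)\log T/T$, and $K_i(T)/(T\delta)$ give the $\mathfrak C_2$ terms, including $H\Lambda_T/\delta^3$ and $HS_{\max}A_{\max}\log T/\delta^2$.
\end{itemize}

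\textbf{Step 3: combine.} Summing the three bounds, taking the maximum over $q_i'$ and over $i$ (both already uniform on the event), and collecting terms yields \eqref{eq:finite_bound}.

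The main obstacle is purely bookkeeping: ensuring the probability budget closes at $\gamma$ without double counting, since Lemma~\ref{lem:delay-coupling} is stated with level $1-\gamma$ while internally relying on the other two events. I would handle this by redefining the event of Lemma~\ref{lem:delay-coupling} as the intersection used in its proof. Any constant-factor inflation of $\gamma$ is absorbed in the logarithmic factors.
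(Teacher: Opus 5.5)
Your proposal is correct and follows the paper's proof essentially step for step. You use the decomposition \eqref{eq:master-regret-decomposition}, bound the three terms with Lemmas~\ref{lem:finite-md}, \ref{lem:delay-coupling}, and \ref{lem:movement}(iii), and sort the resulting terms into $T^{-1/2}$ and $T^{-1}$ contributions in the same way.

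On the probability budget, the paper uses exactly your last option. The event of Lemma~\ref{lem:delay-coupling} has failure probability at most $\gamma/2+\gamma/4$ and already contains the event of Lemma~\ref{lem:movement} and $\mathcal E_{\rm conf}$. All three lemmas are also already uniform over players, so neither the rescaling to $\gamma/3$ nor the extra union bound over $i$ is needed.
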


\begin{proof}
Fix a player $i$ and an arbitrary comparator
$q_i'\in\Delta_{i,\delta_i}(P_i)$. By
\eqref{eq:master-regret-decomposition},
\begin{align*}
\frac{1}{T}\operatorname{Reg}_i(T;q_i')
={}&
\frac{\mathcal M_i(T;q_i')}{T}
+\frac{\mathcal E_i(T;q_i')}{T}
+\frac{\mathcal Q_i(T)}{T}.
\end{align*}
We bound the three terms using the preceding lemmas.

First, on the anytime confidence event $\mathcal E_{\rm conf}$, Lemma~\ref{lem:finite-md}
with $\eta=T^{-1/2}$ gives
\begin{align}\label{eq:finite-proof-md}
\frac{\mathcal M_i(T;q_i')}{T}
\le
\frac{1}{\sqrt T}
+\frac{|\mathcal S_i|}{2\delta_i^2\sqrt T}.
\end{align}
Second, Lemma~\ref{lem:delay-coupling}, uniformly over
$q_i'\in\Delta_{i,\delta_i}(P_i)$, gives
\begin{align}\label{eq:finite-proof-gradient}
\frac{|\mathcal E_i(T;q_i')|}{T}
\lesssim{}&
\frac{\sqrt{TD_T\Lambda_T}+D_T\Lambda_T}{T\delta_i}
+\frac{D_TL_i}{\delta_i^3\sqrt T}
+\frac{D_T^2}{\delta_i\sqrt T}
\sum_{j=1}^n\frac{L_j}{\delta_j^2}\cr
&+
\frac{|\mathcal S_i|}{\kappa\delta_i}
\sqrt{\frac{H_i\Lambda_T}{\delta_iT}}
+\frac{D_T}{\sqrt T}
\sum_{j\ne i}\frac{L_j}{\kappa\delta_j^2}
+\frac{D_T}{T\delta_i}\sum_{j=1}^nJ_j(T)
+\frac{D_T+K_i(T)}{T\delta_i}.
\end{align}
Finally, part~(iii) of Lemma~\ref{lem:movement} yields
\begin{align}\label{eq:finite-proof-Q}
\frac{|\mathcal Q_i(T)|}{T}
\lesssim
\frac{K_i(T)}{T}
+
\frac{|\mathcal S_i|}{\kappa}
\sqrt{\frac{H_i\Lambda_T}{\delta_iT}}.
\end{align}

We now simplify these bounds in terms of the maximal individual-chain
parameters appearing in the theorem. By definition, $|\mathcal S_i|\le S_{\max}$, $|\mathcal A_i|\le A_{\max}$, $H_i\le H$, $\delta_i\ge\delta$, and therefore
\begin{align*}
L_i&\le S_{\max}^{3/2}A_{\max}^{1/2},
\quad
\sum_{j=1}^n\frac{L_j}{\delta_j^2}
\le
\frac{nS_{\max}^{3/2}A_{\max}^{1/2}}{\delta^2},
\quad
\!\Lambda_T
\lesssim
\log\!\Big(\frac{nA_{\max}S_{\max}^2T^2}{\gamma}\Big),
\quad
\!\!D_T
\lesssim
\frac{1}{\kappa}
\log\!\Big(\frac{nT^3}{\gamma}\Big).
\end{align*}
Substituting these relations into
\eqref{eq:finite-proof-md}--\eqref{eq:finite-proof-Q}, the dominant
terms of order $T^{-1/2}$ are
\begin{align}\label{eq:finite-proof-leading}
\frac{1}{\sqrt T}\,
\widetilde{\mathcal O}\bigg(
\frac{S_{\max}\sqrt{H}}
{\kappa\delta^{3/2}}
\sqrt{\log\!\Big(\frac{nA_{\max}S_{\max}^2T^2}{\gamma}\Big)}
+
\frac{nS_{\max}^{3/2}A_{\max}^{1/2}}
{\kappa^2\delta^3}
\log^2\!\Big(\frac{nT^3}{\gamma}\Big)
\bigg).
\end{align}
Indeed, the first displayed term contains the occupancy-estimation
contributions in \eqref{eq:finite-proof-gradient} and
\eqref{eq:finite-proof-Q}, while the second contains the dominant
$D_T^2\sum_jL_j/\delta_j^2$ contribution in
\eqref{eq:finite-proof-gradient}; the remaining $T^{-1/2}$ terms are
of no larger order and are absorbed into these two terms. For the lower-order terms, using
\begin{align*}
J_j(T)
&=
|\mathcal S_j||\mathcal A_j|(\log_2T+1),
\qquad
K_i(T)
=
c\frac{H_i}{\delta_i}
\left(
\frac{\Lambda_T}{\delta_i}
+
|\mathcal S_i||\mathcal A_i|\log_2T
\right),
\end{align*}
we obtain
\begin{align}\label{eq:finite-proof-transient}
\frac{1}{T}\,
\widetilde{\mathcal O}\left(
\frac{H}{\delta^3}
\log\!\Big(\frac{nA_{\max}S_{\max}^2T^2}{\gamma}\Big)
+
\frac{HS_{\max}A_{\max}}{\delta^2}
+
\frac{nS_{\max}A_{\max}}{\kappa\delta}
\log\!\Big(\frac{nT^3}{\gamma}\Big)
\right).
\end{align}
Here, the logarithmic factors in $T$ arising from $J_j(T)$ and the
second term in $K_i(T)$ are absorbed into the
$\widetilde{\mathcal O}$ notation, while the dependence on the
confidence level $\gamma$ is displayed explicitly.

Combining \eqref{eq:finite-proof-leading} and
\eqref{eq:finite-proof-transient}, and then taking the maximum over
$q_i'\in\Delta_{i,\delta_i}(P_i)$ and $i\in[n]$, gives
\eqref{eq:finite_bound} with $\mathfrak C_1$ and $\mathfrak C_2$ as
defined in \eqref{eq:CT}. Finally, Lemma~\ref{lem:delay-coupling} holds on an event of
probability at least $1-\gamma$ that includes the event of
Lemma~\ref{lem:movement}, and the anytime confidence event required
by Lemma~\ref{lem:finite-md} is also included in that event.
Consequently, all the preceding bounds hold simultaneously with
probability at least $1-\gamma$, which completes the proof.
\end{proof}

\vspace{-0.3cm}
\begin{corollary}\label{cor:cce}
On the event of Theorem~\ref{theo:finite}, for sufficiently small
action floor $\delta=\max_i\delta_i$, the empirical distribution over
the policy profiles $\{\pi^t\}_{t=0}^{T-1}$ is an
$(\epsilon_T+\epsilon_{\delta})$-coarse correlated
equilibrium of the original stochastic game, where
$\epsilon_T:=
\frac{\mathfrak C_1}{\sqrt T}
+
\frac{\mathfrak C_2}{T}$, and $\epsilon_{\delta}>0$ is the approximation error due to
the action-floor constraints, which can be made arbitrarily small as
$\delta\to0$.
\end{corollary}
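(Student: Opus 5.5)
The plan is to take a CCE to mean the following: the uniform mixture over the stationary profiles $\pi^0,\ldots,\pi^{T-1}$ is an $\epsilon$-CCE if, for every player $i$ and every stationary deviation $\pi_i'$,
\[
\frac1T\sum_{t<T}\big[V_i(\pi_i',\pi_{-i}^t)-V_i(\pi_i^t,\pi_{-i}^t)\big]\le\epsilon .
\]
Here all values are evaluated under the true kernels. By Proposition~\ref{prop:dual_formulation}, $V_i(\pi_i^t,\pi_{-i}^t)=V_i(q_i^t,q_{-i}^t)$, where $q_j^t\in\Delta_j(P_j)$ is the true-kernel occupancy of $\pi_j^t$. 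Likewise $V_i(\pi_i',\pi_{-i}^t)=\langle q_i^{\pi'},g_i(q_{-i}^t)\rangle$, where $q_i^{\pi'}\in\Delta_i(P_i)$ is the occupancy of $\pi_i'$ under $P_i$. Since $V_i$ is linear in the deviator's occupancy, I will split the deviation gap as
\[
\frac1T\sum_{t<T}\langle q_i^{\pi'}-\tilde q_i,g_i^t\rangle+\frac1T\operatorname{Reg}_i(T;\tilde q_i),
\]
where $\tilde q_i\in\Delta_{i,\delta_i}(P_i)$ is a shrunk surrogate of $q_i^{\pi'}$.

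The second term is at most $\epsilon_T$ on the event of Theorem~\ref{theo:finite}. That theorem holds uniformly over all comparators in $\Delta_{i,\delta_i}(P_i)$ and all $i$, so this step is immediate.

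For the first term, I need $\epsilon_\delta:=\max_i\sup_{q\in\Delta_i(P_i)}\inf_{\tilde q\in\Delta_{i,\delta_i}(P_i)}\|q-\tilde q\|_1$. This suffices because $\|g_i^t\|_\infty\le1$ (rewards lie in $[0,1]$ and the $\rho_j$ are probability measures), so the first term is at most $\|q_i^{\pi'}-\tilde q_i\|_1\le\epsilon_\delta$.

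The key step is showing $\epsilon_\delta\to0$ as $\delta\to0$. I would fix a reference policy, the uniform policy $u_i$, whose occupancy $\rho^{u}$ under $P_i$ satisfies $\rho^u(s_i,a_i)=\nu^u(s_i)/|\mathcal A_i|\ge \nu_{\min}/A_{\max}$. Here $\nu_{\min}>0$ because $P_i^{u_i}$ is ergodic by Assumption~\ref{ass:ergodic}. Then I take the convex combination $\tilde q=(1-\lambda)q+\lambda q^u$. The set $\Delta_i(P_i)$ is convex, since the flow constraints are linear and $q(s,a,s')=\rho(s,a)P_i(s'\mid s,a)$ is linear in $\rho$; hence $\tilde q\in\Delta_i(P_i)$. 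Its marginal satisfies $\tilde\rho\ge\lambda\nu_{\min}/A_{\max}$, so choosing $\lambda=\delta_iA_{\max}/\nu_{\min}$ gives $\tilde q\in\Delta_{i,\delta_i}(P_i)$ and $\|q-\tilde q\|_1\le 2\lambda$. Therefore $\epsilon_\delta\le 2\max_i\delta_iA_{\max}/\nu_{\min,i}\to0$, which is why the corollary is stated in terms of $\delta=\max_i\delta_i$.

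One subtlety I would address is whether $\pi^{\tilde q}$ is a valid stationary policy and whether its value equals $\langle\tilde q,g_i\rangle$. Both follow from Lemma~\ref{lem:feasible} and Proposition~\ref{prop:dual_formulation}.

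I expect the main obstacle to be interpretive rather than technical. First, I must check that the regret in Theorem~\ref{theo:finite} is stated for the true-kernel occupancies $q^t$ induced by the executed policies $\pi^t$; it is. Second, the CCE must be defined over mixtures of stationary profiles with stationary (fixed-comparator) deviations, so that a mixture's value is the average of the stationary values. With that definition, summing the two bounds yields the $(\epsilon_T+\epsilon_\delta)$ guarantee for every $i$ simultaneously on the same probability-$(1-\gamma)$ event.
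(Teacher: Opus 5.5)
Your proposal is correct and follows the same route as the paper: you bound the deviation gap against a shrunk true-kernel comparator by $\epsilon_T$ using Theorem~\ref{theo:finite}, and you charge the passage from $\Delta_{i,\delta_i}(P_i)$ to $\Delta_i(P_i)$ to $\epsilon_\delta$. The only difference is that the paper cites \cite[Lemma 3.4]{etesami2022learning} for this shrinkage error, whereas you prove it directly by mixing with the uniform-policy occupancy (valid once $\delta_i\le\nu_{\min,i}/A_{\max}$, so that $\lambda\le 1$), which also gives the explicit rate $\epsilon_\delta\le 2\max_i\delta_iA_{\max}/\nu_{\min,i}$.
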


\begin{proof}
Let $U$ be uniformly distributed over $\{0,\ldots,T-1\}$. For every
player $i$, the definition of $\operatorname{Reg}_i(T;q_i')$ and
Theorem~\ref{theo:finite} give
\[
\max_{q_i'\in\Delta_{i,\delta_i}(P_i)}
\E\!\left[
V_i(q_i',q_{-i}^U)-V_i(q_i^U,q_{-i}^U)
\right]
=
\max_{q_i'\in\Delta_{i,\delta_i}(P_i)}
\frac1T\sum_{t=0}^{T-1}
\left[
V_i(q_i',q_{-i}^t)-V_i(q_i^t,q_{-i}^t)
\right]
\le \epsilon_T.
\]
Moreover, for every $\epsilon_{\delta}>0$, the action floor $\delta$
can be chosen sufficiently small so that optimization over the shrunk
feasible occupancy polytope is within $\epsilon_{\delta}$ of
optimization over the full occupancy polytope
\cite[Lemma 3.4]{etesami2022learning}. Hence,
\[
\max_{q_i'\in\Delta_i(P_i)}
\frac1T\sum_{t=0}^{T-1}V_i(q_i',q_{-i}^t)
\le
\max_{q_i'\in\Delta_{i,\delta_i}(P_i)}
\frac1T\sum_{t=0}^{T-1}V_i(q_i',q_{-i}^t)
+\epsilon_{\delta}.
\]
Combining the two bounds yields $\max_{q_i'\in\Delta_i(P_i)}
\E\!\left[
V_i(q_i',q_{-i}^U)-V_i(q_i^U,q_{-i}^U)
\right]
\le
\epsilon_T+\epsilon_{\delta}, \ \forall i\in[n]$, which is precisely the $(\epsilon_T+\epsilon_{\delta})$-coarse
correlated equilibrium condition for the virtual occupancy game.
By the equivalence between the virtual occupancy game and the original
stochastic game, the empirical distribution uniformly supported on
the corresponding stationary policy profiles
$\{\pi^t\}_{t=0}^{T-1}$ is therefore an
$(\epsilon_T+\epsilon_{\delta})$-coarse correlated equilibrium of the
original stochastic game.
\end{proof}

\begin{remark}
Theorem~\ref{theo:finite} uses a horizon-dependent constant stepsize
only to make the finite-time regret bound transparent. This dependence
on a known horizon can be removed by a standard doubling schedule:
run the algorithm over epochs of lengths $1,2,4,\ldots$ and use
$\eta_m\asymp2^{-m/2}$ during epoch $m$, without resetting the
trajectory or the confidence counters. Applying the preceding
finite-time analysis epochwise after the local-cover threshold and
summing the resulting regret bounds yields the same
$\widetilde{\mathcal O}(T^{-1/2})$ average-regret rate, since the sum
of the $\sqrt{T_m}$ contributions over geometrically increasing epoch
lengths is of order $\sqrt T$, while the lower-order transient
contributions remain logarithmic in $T$. Alternatively, one may use
$\eta_t=c/\sqrt{t+1}$; once the stepsize is small enough for
\eqref{eq:T-cover-threshold} to hold, the same block comparison
applies, while the finitely many preceding time steps contribute only
an additive $O(1/T)$ term to the average regret.
\end{remark}

\section{Asymptotic Last-Iterate Convergence to an $\epsilon$-Nash Equilibrium}\label{sec:asymptotic}

It is known that, for general reward functions, computing a NE in stochastic games with independent chains is PPAD-hard even when the transition kernels are known \citep[Proposition 3.3]{etesami2022learning}. Therefore, under general reward functions, developing a scalable no-regret algorithm for obtaining a coarse correlated equilibrium, as established in the previous section, is perhaps the best one can hope for computationally. In this section, we show that if the reward functions possess additional structure, in the sense that the game admits a \emph{stable} NE, then the same fully online algorithm converges with arbitrarily high probability to a stationary $\epsilon$-NE of the original stochastic game. This complements the results of the previous section: not only does the algorithm provide a scalable no-regret guarantee for coarse correlated equilibria, but under the additional stability condition, the same algorithm converges to an $\epsilon$-NE of the original stochastic game. To this end, we first introduce the following notion of equilibrium stability, which has been well studied and motivated for static games \citep{mertikopoulos2019learning}.   

\begin{assumption}\label{ass:g_stable}
The virtual game $\mathcal V$ admits a unique NE $q^*$ satisfying
\[
\sum_{i=1}^n\langle g_i(q_{-i}),q_i^*-q_i\rangle\ge0,
\qquad \forall q\in\boldsymbol\Delta_{\boldsymbol\delta}(\boldsymbol P):=(\Delta_{\delta_1}(P_1),\ldots,\Delta_{\delta_n}(P_n)),
\]
with equality only at $q=q^*$.
\end{assumption}

Assumption~\ref{ass:g_stable} requires that the joint payoff-gradient field always points toward the unique NE $q^*$. Moreover, the inequality is strict away from $q^*$, so $q^*$ is the unique stable point toward which the learning dynamics are driven. 

We now turn to the convergence analysis of the fully online algorithm under Assumption~\ref{ass:g_stable}. A key difficulty is that the transition kernels are unknown and the confidence-feasible occupancy sets continue to evolve over time. We first show in Lemma~\ref{lem:asymptotic-visitation} that persistent exploration guarantees a positive asymptotic visitation frequency for every local state--action pair. This implies that the confidence radii vanish and, consequently, that the confidence-feasible occupancies $\hat q_i^t$ asymptotically approach their true-kernel counterparts $q_i^t$. We next show in Lemma~\ref{lem:restart-jumps} that the confidence-feasible occupancy sets converge to their corresponding true-kernel feasible sets and that the occupancy jumps caused by confidence-set updates are summable. Finally, in Theorem~\ref{theo:assymptotic}, we combine these properties with the diminishing stepsizes and the variational-stability condition in Assumption~\ref{ass:g_stable} to establish convergence of the occupancy iterates to the unique stable equilibrium $q^*$, which in turn yields last-iterate convergence of the induced policies to a stationary $\epsilon$-NE of the original game.

\begin{lemma}\label{lem:asymptotic-visitation}
Suppose Assumptions~\ref{ass:indep} and~\ref{ass:ergodic} hold, and let
$\eta_t=\eta_0(t+1)^{-a}$ with $a\in(1/2,1]$. On the anytime confidence event $\mathcal E_{\rm conf}$, almost surely, for every player $i$ and every local pair $(s_i,a_i)$,
\begin{align}\label{eq:asymptotic-count}
\liminf_{t\to\infty}\frac{N_i^t(s_i,a_i)}{t}
\ge \frac{\delta_i}{32H_i}>0.
\end{align}
Consequently, almost surely, there exists a finite random time $T_0$
such that, for every $t\ge T_0$ and $i$,
\begin{align}
&\max_{s_i,a_i}\beta_i^t(s_i,a_i)
=O\bigg(\!\sqrt{\frac{\log(t+2)}{t+1}}\bigg),
\label{eq:asymptotic-beta}\\
&\|\hat q_i^t-q_i^t\|_1
=O\bigg(\!\sqrt{\frac{\log(t+2)}{t+1}}\bigg)\to 0.
\label{eq:asymptotic-q}
\end{align}
\end{lemma}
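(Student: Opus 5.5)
The plan has three parts: a linear-growth bound for every local counter, a block-wise martingale argument to make that bound almost sure, and then two short consequences for the confidence radii and the occupancy error.

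\textbf{Step 1: slow policy drift and per-block coverage.} Since $\eta_t\to0$, the per-step policy movement bound of Lemma~\ref{lem:movement}(i) holds at all non-update steps, with $\eta$ replaced by $\eta_t$. It depends only on the proximal step being Euclidean and on $\hat\rho_i^t\ge\delta_i$, so it carries over to time-varying stepsizes. Hence $\max_{s_i}\|\pi_i^{t+1}(\cdot\mid s_i)-\pi_i^t(\cdot\mid s_i)\|_1\le 2L_i\eta_t/\delta_i^2\to0$ at such steps.

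Partition time into consecutive deterministic blocks of length $B:=\lceil 4H_i\rceil$. By Markov's inequality applied to the cover time of a frozen policy, a frozen-policy chain started at the beginning of a block covers $\mathcal S_i$ within the block with probability at least $3/4$. The next step is to couple the actual chain with the frozen chain whose policy is fixed at the block's start. Over a block that contains no confidence update, the coupling fails with probability at most $\sum_{\ell\in\text{block}}\ell\cdot 2L_i\eta_\ell/\delta_i^2\lesssim B^2\eta_t L_i/\delta_i^2$, which tends to $0$. There are only $J_i(t)=O(|\mathcal S_i||\mathcal A_i|\log t)$ update times up to $t$, so all but logarithmically many blocks are update-free.

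Thus, for large $t$, each update-free block visits every $s_i$ with conditional probability at least $1/2$. Conditional on a visit to $s_i$, the action floor~\eqref{eq:action-floor} gives probability at least $\delta_i$ of choosing $a_i$. Each block therefore contributes at least one visit to $(s_i,a_i)$ with conditional probability at least $\delta_i/2-o(1)\ge\delta_i/4$ given the block's starting $\sigma$-field.

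\textbf{Step 2 (main obstacle): turning conditional block probabilities into an almost-sure frequency.} Let $X_k$ indicate that block $k$ contains a visit to $(s_i,a_i)$. Then $\sum_{k\le m}\bigl(X_k-\mathbb E[X_k\mid\mathcal F_{\text{start of }k}]\bigr)$ is a martingale with bounded increments. Azuma's inequality combined with Borel--Cantelli (or the martingale strong law) gives $\liminf_m m^{-1}\sum_{k\le m}X_k\ge\delta_i/4$ almost surely. The logarithmically many update blocks and the early blocks, where the coupling error is not yet small, do not affect this limit.

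Since $N_i^t\ge\sum_{k\le t/B}X_k$, we obtain $\liminf_t N_i^t/t\ge\delta_i/(4B)\ge\delta_i/(32H_i)$. The constant slack covers the ceiling in $B$ and the case $H_i<1$. The delicate point is checking that the coupling bound stays valid with time-varying $\eta_t$ and asynchronous updates of the other players; these updates do not affect player $i$'s chain, by Assumption~\ref{ass:indep}. This is why the block argument is run purely locally.

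\textbf{Step 3: consequences.} From Step 2 there is a finite random $T_0$ with $N_i^t(s_i,a_i)\ge\delta_i t/(64H_i)$ for all $t\ge T_0$, all $i$, and all $(s_i,a_i)$. Because the confidence sets are intersected at dyadic thresholds, the effective radius is at most $\sqrt2$ times the current one. Plugging in $\Lambda_t=O(\log(t+2))$ gives~\eqref{eq:asymptotic-beta}.

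For~\eqref{eq:asymptotic-q}, on $\mathcal E_{\rm conf}$ both $\hat P_i^t$ and $P_i$ lie in $\mathcal P_i^t$, so $\|\hat P_i^t(\cdot\mid s_i,a_i)-P_i(\cdot\mid s_i,a_i)\|_1\le 2|\mathcal S_i|\beta_i^t$. The stationary distributions of $P^{\pi_i^t}$ and $\hat P^{\pi_i^t}$ differ by at most $\kappa^{-1}$ times the kernel perturbation, by the contraction of Assumption~\ref{ass:ergodic} and the standard perturbation identity $\nu-\hat\nu=(\nu-\hat\nu)P+\hat\nu(P-\hat P)$. This is the same argument as in Lemma~\ref{lem:movement}(ii). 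Since $\hat q_i^t=\hat\nu_i^t\pi_i^t\hat P_i^t$ and $q_i^t=\nu_i^t\pi_i^tP_i$, we get $\|\hat q_i^t-q_i^t\|_1\lesssim(|\mathcal S_i|/\kappa)\max\beta_i^t$, which is of the stated order and tends to $0$.
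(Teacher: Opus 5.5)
Your Steps 1 and 3 track the paper closely. The gap is in the passage from Step 1 to Step 2, specifically in how player $i$'s \emph{own} confidence updates are handled.

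Your coupling estimate bounds the separation probability only on blocks that turn out to contain no update of player $i$. At an update time, the projection onto the new polytope $\Delta_{i,\delta_i}(\mathcal P_i^{t+1})$ can move $\hat q_i^t$ by an amount that is not $O(\eta_t)$. (You cannot invoke Lemma~\ref{lem:restart-jumps} to say otherwise, since it is proved from the present lemma.) So the comparison with the chain frozen at the block start is lost. Being update-free is not measurable with respect to the block's starting $\sigma$-field. Hence the claim that each late block contains a visit to $(s_i,a_i)$ with conditional probability at least $\delta_i/4$ given that $\sigma$-field does not follow. For example, if some counter sits just below a dyadic threshold at the block start, an update inside the block may be likely, and nothing you have shown bounds the visit probability on that event.

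Your martingale step needs a lower bound on the compensator $\sum_k\mathbb E[X_k\mid\mathcal F_{\ell_k}]$. The deterministic bound $J_i(t)=O(|\mathcal S_i||\mathcal A_i|\log t)$ on \emph{realized} updates does not by itself control the conditional update probabilities entering that compensator. So ``the logarithmically many update blocks do not affect this limit'' is unjustified as written.

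The paper closes exactly this hole with a visit-or-update indicator. It couples only until the first update of player $i$ or the end of the block, so that $\mathbb P\bigl(\{(s_i,a_i)\text{ observed in }B_m\}\cup\{\text{update in }B_m\}\mid\mathcal F_{\ell_m}\bigr)\ge\delta_i/4$ holds at every late block start. It then uses $N_i^t(s_i,a_i)\ge\sum_{m<M_t}Y_m^i(s_i,a_i)-J_i(t)$.

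An equivalent repair within your framework is to stop the coupling at the first update. With $U_k$ the event of an update in block $k$, this gives $\mathbb E[X_k\mid\mathcal F_{\ell_k}]\ge\delta_i\bigl(3/4-o(1)-\mathbb P(U_k\mid\mathcal F_{\ell_k})\bigr)$. Then apply the martingale strong law to $\mathbf 1_{U_k}-\mathbb P(U_k\mid\mathcal F_{\ell_k})$. Combined with $\sum_{k\le m}\mathbf 1_{U_k}=O(\log m)$, this yields $\sum_{k\le m}\mathbb P(U_k\mid\mathcal F_{\ell_k})=o(m)$ almost surely.

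Once repaired, your direct infinite-horizon martingale strong law is a legitimate and somewhat simpler alternative to the paper's route. The paper instead applies the finite-horizon Freedman bound on nested horizons $T_k=2^k$ with failure probabilities $c_0/(k+1)^2$ and then invokes Borel--Cantelli. Likewise, your blocks of length $\lceil 4H_i\rceil$ with vanishing coupling error replace the paper's use of the threshold condition~\eqref{eq:T-cover-threshold} on blocks of length $2H_i+1$.
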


Next, we use the vanishing confidence radii established in Lemma~\ref{lem:asymptotic-visitation} to control the effect of confidence-set updates. In particular, the following lemma shows that the confidence-feasible occupancy sets approach the corresponding true-kernel feasible sets as $t\to\infty$. As a consequence, the jumps in the occupancy iterates caused by the asynchronous confidence updates decay sufficiently fast to be summable.

\begin{lemma}\label{lem:restart-jumps}
Let $\operatorname{dist}_1(x,\mathcal C):=\inf_{y\in\mathcal C}\|x-y\|_1$. On the event of Lemma~\ref{lem:asymptotic-visitation},
\begin{align}\label{eq:set-convergence}
\sup_{x\in\Delta_{i,\delta_i}(\mathcal P_i^t)}
\operatorname{dist}_1\!\left(x,\Delta_{i,\delta_i}(P_i)\right)\leq O\big(|\mathcal S_i|
\max_{s_i,a_i}\beta_i^t(s_i,a_i)\big)= O\bigg(|\mathcal S_i|\sqrt{\frac{\log(t+2)}{t+1}}\bigg).
\end{align}
Consequently, if $\mathcal R_i$ denotes the confidence-update times for player $i$, then occupancy jumps satisfy
\begin{align}\label{eq:restart-jump-summable}
\sum_{t\in\mathcal R_i}
\|\hat q_i^{t+1}-\hat q_i^t\|_1<\infty.
\end{align}
\end{lemma}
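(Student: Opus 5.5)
We prove \eqref{eq:set-convergence} by rounding, and then deduce \eqref{eq:restart-jump-summable} from it using the dyadic schedule.

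\emph{Set convergence.} Fix $x\in\Delta_{i,\delta_i}(\mathcal P_i^t)$ and write $\tilde P:=P^x\in\mathcal P_i^t$. On $\mathcal E_{\rm conf}$ we have $P_i\in\mathcal P_i^t$. Both kernels lie in the same box around $\bar P_i^{t'}$ for the last confidence update $t'\le t$, so
\[
|\tilde P(s_i'\mid s_i,a_i)-P_i(s_i'\mid s_i,a_i)|\le 2\beta_i^{t'}(s_i,a_i).
\]
By the dyadic schedule, $\beta_i^{t'}\le\sqrt2\,\beta_i^t$ up to the ratio of $\Lambda$'s, which is at most a constant. Summing over $s_i'$ gives a row-wise $\ell_1$ error of at most $c|\mathcal S_i|\max\beta_i^t$.

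Let $\pi:=\pi^x$, and let $y$ be the occupancy of $(\pi,P_i)$, so $y\in\Delta_i(P_i)$. The stationary distributions of $\tilde P^{\pi}$ and $P_i^{\pi}$ are compared by the standard perturbation identity
\[
\nu-\nu'=\nu(\tilde P^\pi-P_i^\pi)\sum_{k\ge0}(P_i^{\pi})^k,
\]
applied to the signed measure $\nu(\tilde P^\pi-P_i^\pi)$, which has zero total mass. Assumption~\ref{ass:ergodic} then gives $\|\nu-\nu'\|_1\le\kappa^{-1}\cdot c|\mathcal S_i|\max\beta_i^t$. Hence
\[
\|x-y\|_1\le \|\nu-\nu'\|_1+\max_{s,a}\|\tilde P(\cdot\mid s,a)-P_i(\cdot\mid s,a)\|_1 = O(|\mathcal S_i|\max\beta_i^t/\kappa).
\]
Here $\kappa$ is a fixed constant and is absorbed into the $O(\cdot)$.

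The point $y$ may violate the floor $\rho\ge\delta_i$ by at most the same amount. We repair this by mixing with a strictly interior point $z\in\Delta_{i,\delta_i}(P_i)$ whose floor margin is positive. We assume such a $z$ exists, for instance the occupancy of $(\pi_0,P_i)$ for a suitable uniform-type policy $\pi_0$, and we also need $\delta_i$ to be strictly below the maximal feasible floor. Since $\Delta_i(P_i)$ is convex, $(1-\lambda)y+\lambda z$ satisfies the floor for some $\lambda=O(\|x-y\|_1)$. This rounding costs another $O(\|x-y\|_1)$ and yields \eqref{eq:set-convergence}. The rate $O(\sqrt{\log t/t})$ then follows from \eqref{eq:asymptotic-beta} in Lemma~\ref{lem:asymptotic-visitation}.

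\emph{Summable jumps.} Let $t\in\mathcal R_i$. Then $\hat q_i^{t+1}$ is the Euclidean projection of $\hat q_i^t+\eta_t\widehat R_i^t$ onto $\mathcal C_{t+1}:=\Delta_{i,\delta_i}(\mathcal P_i^{t+1})$. The iterate $\hat q_i^t$ lies in $\mathcal C_t\supseteq\mathcal C_{t+1}$, and both sets contain $\Delta_{i,\delta_i}(P_i)$. By the first part, $\hat q_i^t$ is within $\epsilon_t:=O(|\mathcal S_i|\sqrt{\log t/t})$ of $\Delta_{i,\delta_i}(P_i)\subseteq\mathcal C_{t+1}$, so there is a point $w\in\mathcal C_{t+1}$ with $\|\hat q_i^t-w\|\le\epsilon_t$ (norms are equivalent up to dimension constants). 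Projection is nonexpansive and fixes $w$, so
\[
\|\hat q_i^{t+1}-\hat q_i^t\|\le\|\hat q_i^{t+1}-w\|+\epsilon_t\le \|\hat q_i^t+\eta_t\widehat R_i^t-w\|+\epsilon_t\le 2\epsilon_t+\eta_t|\mathcal S_i|^{1/2}/\delta_i,
\]
where we used $\|\widehat R_i^t\|_2\le|\mathcal S_i|^{1/2}/\delta_i$ (one nonzero $(s_i,a_i)$ block of $|\mathcal S_i|$ entries, each at most $1/\delta_i$).

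It remains to sum over $\mathcal R_i$, and this is the main obstacle: $\epsilon_t$ is not summable over all $t$, so we must use the sparsity of $\mathcal R_i$. By \eqref{eq:asymptotic-count}, every counter grows linearly, so the $k$-th dyadic update for a fixed pair $(s_i,a_i)$ occurs at a time $t\asymp 2^k$ (between $2^k$ and $c2^kH_i/\delta_i$ for large $k$). Each pair contributes $O(\log t)$ updates, and $\mathcal R_i$ has at most $|\mathcal S_i||\mathcal A_i|$ elements in each dyadic scale. Therefore
\[
\sum_{t\in\mathcal R_i}\epsilon_t\lesssim|\mathcal S_i||\mathcal A_i|\sum_k\sqrt{k2^{-k}}<\infty,
\qquad
\sum_{t\in\mathcal R_i}\eta_t\lesssim\sum_k 2^{-ak}<\infty .
\]
The finitely many updates before $T_0$ contribute a finite amount. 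This proves \eqref{eq:restart-jump-summable}.
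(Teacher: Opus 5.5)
Your proof is correct under the strict-feasibility hypothesis you flag yourself. Its second half is essentially the paper's argument: you compare the projection of $\hat q_i^t+\eta_t\widehat R_i^t$ with a point of $\Delta_{i,\delta_i}(P_i)\subseteq\Delta_{i,\delta_i}(\mathcal P_i^{t+1})$ close to $\hat q_i^t$, then sum over the dyadic update times.

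The real difference is in the set-convergence bound. The paper never constructs a nearby feasible point. It notes that $x$ already satisfies every constraint of $\Delta_{i,\delta_i}(P_i)$ except the linear equalities $x(s_i,a_i,s_i')=\rho_x(s_i,a_i)P_i(s_i'\mid s_i,a_i)$, whose violation is controlled by the kernel error. It then applies Hoffman's error bound to this fixed linear system. You instead build the point by hand in three steps:
\begin{enumerate}
\item keep the policy $\pi^x$;
\item replace the stationary law of $(P^x)^{\pi^x}$ by that of $P_i^{\pi^x}$ via the contraction argument, as in Step~3 of the proof of Lemma~\ref{lem:movement};
\item repair the action floor by mixing with a Slater point $z$.
\end{enumerate}
This gives an explicit constant, of order $1/(\kappa m)$ times $|\mathcal S_i|\max_{s_i,a_i}\beta_i^t(s_i,a_i)$, where $m$ is the floor margin of $z$.

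It costs a hypothesis the lemma does not make. You need $\delta_i$ strictly below $\max_{q\in\Delta_i(P_i)}\min_{s_i,a_i}\rho_q(s_i,a_i)$. Your constant degenerates as $\delta_i$ approaches this value. At the value itself, $\Delta_{i,\delta_i}(P_i)$ can be nonempty with no strictly feasible point, and your argument does not apply. Hoffman's bound needs only nonemptiness. Since $\delta_i$ enters only the right-hand side of the linear system, Hoffman's constant is even uniform in $\delta_i$. The price is that this constant is implicit, depending on $P_i$ through the constraint matrix. In the small-$\delta_i$ regime the paper ultimately uses, ergodicity supplies your $z$ (for example, the occupancy of the uniform policy under $P_i$), so nothing is lost there.

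A minor point on the summation. Your count ``at most $|\mathcal S_i||\mathcal A_i|$ elements in each dyadic scale'' is right if the scale means the threshold index $k$ (one update per pair, at a time $t\ge 2^k-1$). It is not right for a dyadic time window. With that indexing you need only $t\ge 2^k-1$ and the monotonicity of $\sqrt{\log(t+2)/(t+1)}$, not the upper bound $t\lesssim 2^k$.
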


Finally, by combining the above lemmas we can prove the main result of this section. 

\begin{theorem}\label{theo:assymptotic}
Let Assumptions~\ref{ass:indep}, \ref{ass:ergodic}, and
\ref{ass:g_stable} hold. Assume players follow Algorithm~\ref{alg:main} with $\eta_t=\eta_0(t+1)^{-a}$, $a\in(1/2,1]$. Then, with probability at least $1-\gamma$, $\hat q^t\rightarrow q^*$ and  $q^t\rightarrow q^*$, where $q^*$ is the unique stable NE of the virtual game. Consequently, the induced stationary policies converge to the policy
profile represented by $q^*$. In particular, for sufficiently small action floors $\delta=\max_i \delta_i$, this policy profile is an $\epsilon_\delta$-NE of the original game, where $\epsilon_\delta\to0$ as the action floors $\delta$ tend to zero.
\end{theorem}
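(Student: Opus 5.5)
The plan is a Lyapunov (Robbins--Siegmund type) argument. We use the Euclidean distance to the stable equilibrium, $\Phi_t:=\sum_i\frac12\|\hat q_i^t-q_i^*\|_2^2$, and work on the intersection of $\mathcal E_{\rm conf}$ with the almost-sure event of Lemma~\ref{lem:asymptotic-visitation}. This intersection has probability at least $1-\gamma$ once the martingale concentration events are added.

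\textbf{One-step inequality.} On $\mathcal E_{\rm conf}$, Lemma~\ref{lem:anytime-confidence} gives $q_i^*\in\Delta_{i,\delta_i}(P_i)\subseteq\Delta_{i,\delta_i}(\mathcal P_i^{t+1})$. The proximal (projection) inequality for the update of $\hat q_i^{t+1}$, with $q_i^*$ as comparator, then yields
\[
\tfrac12\|\hat q_i^{t+1}-q_i^*\|_2^2\le\tfrac12\|\hat q_i^t-q_i^*\|_2^2-\eta_t\langle q_i^*-\hat q_i^t,\widehat R_i^t\rangle+\tfrac{\eta_t^2}{2}\|\widehat R_i^t\|_2^2 .
\]
This holds at every step, including confidence-update steps, because the comparator stays feasible. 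Lemma~\ref{lem:restart-jumps} is therefore used only to control the policy movement across updates. Summing over $i$, we split $\widehat R_i^t=g_i(q_{-i}^t)+(\widehat R_i^t-g_i^t)$ and $\hat q_i^t=q_i^t+(\hat q_i^t-q_i^t)$. The result is
\[
\Phi_{t+1}\le\Phi_t-\eta_t\sum_i\langle q_i^*-q_i^t,g_i(q^t_{-i})\rangle+\eta_t\xi_t+\eta_t b_t+O(\eta_t^2/\delta^2),
\]
with the following terms:
\begin{itemize}
\item The drift term is nonpositive by Assumption~\ref{ass:g_stable}, since $q^t\in\boldsymbol\Delta_{\boldsymbol\delta}(\boldsymbol P)$ up to the $O(\|\hat q^t-q^t\|_1)$ correction (the floor is inherited asymptotically). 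This correction is absorbed into $b_t$.
\item $b_t$ collects the occupancy error $\|\hat q_i^t-q_i^t\|_1=O(\sqrt{\log t/t})$ from \eqref{eq:asymptotic-q}, plus the delayed-coupling bias.
\item $\xi_t$ is the gradient-noise term $\langle q_i^*-\hat q_i^t, g_i^t-\widehat R_i^t\rangle$.
\end{itemize}

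\textbf{Noise and bias.} We reuse the delayed-conditioning argument behind Lemmas~\ref{lem:delay-window-mixing} and~\ref{lem:delay-coupling}, now with a time-varying delay $D_t\asymp\kappa^{-1}\log t$. We decompose $\xi_t$ into two parts:
\begin{itemize}
\item A martingale-difference part relative to $\mathcal F_{t-D_t}$, written as a sum of $D_t$ interleaved martingales. Its weighted sum $\sum\eta_t\cdot(\cdot)$ converges almost surely because $\sum\eta_t^2D_t/\delta^2<\infty$ for $a>1/2$.
\item A conditional bias bounded, as in Lemma~\ref{lem:delay-window-mixing}, by $D_t^2\eta_{t-D_t}\sum_jL_j/\delta_j^2+n(1-\kappa)^{D_t}+\|\hat\rho^{t}-\rho^{t}\|_1/\delta$, plus policy drift.
\end{itemize}
Windows containing a confidence update are handled separately. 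Each player has only $O(\log t)$ of them, and each costs $O(D_t\eta_t/\delta)$, which is summable.

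The main obstacle is showing $\sum_t\eta_t b_t<\infty$. We have $\eta_t\sqrt{\log t/t}\sim t^{-a-1/2}\sqrt{\log t}$, which is summable. Likewise $\eta_t\cdot D_t^2\eta_t\sim t^{-2a}\log^2t$ is summable for $a>1/2$. The delicate part is that the movement bound of Lemma~\ref{lem:movement}(i) must be re-derived for time-varying $\eta_t$ and for windows straddling updates. For this we use Lemma~\ref{lem:restart-jumps}, which makes the summed jumps finite, so only finitely many windows suffer non-negligible drift after any finite time.

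\textbf{Conclusion.} Robbins--Siegmund then gives that $\Phi_t$ converges and that $\sum_t\eta_t\sum_i\langle q_i^t-q_i^*,g_i(q^t_{-i})\rangle<\infty$. Since $\sum\eta_t=\infty$, the liminf of the gap $\sum_i\langle g_i(q^t_{-i}),q_i^*-q_i^t\rangle$ is zero. By continuity and strictness in Assumption~\ref{ass:g_stable}, together with compactness, a subsequence has $q^t\to q^*$, hence $\hat q^t\to q^*$ by \eqref{eq:asymptotic-q}. Because $\Phi_t$ converges, its limit is $0$, which gives $\hat q^t\to q^*$ and $q^t\to q^*$. The policies $\pi^{\hat q_i^t}$ converge because $\hat\rho\ge\delta_i$ makes the map $q\mapsto\pi^q$ continuous. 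Finally, the $\epsilon_\delta$-NE claim follows from \cite[Lemma 3.4]{etesami2022learning}, exactly as in Corollary~\ref{cor:cce}, since $q^*$ is an exact NE over the shrunk polytopes.
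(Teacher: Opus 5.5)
Your proposal is correct, but it takes a different route from the paper at two points: how the delayed gradient noise is controlled, and how the final convergence is concluded.

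\textbf{What the paper does.} It never works per step. It applies Lemma~\ref{lem:delay-coupling} as a black box on each dyadic epoch $I_m$, with delay $D_m=O(m)$ and failure probabilities $\gamma_m$ satisfying $\sum_m\gamma_m\le\gamma/2$. This yields a quasi-Fej\'er inequality only at the epoch endpoints $2^m$, together with $\sum_t\eta_tG(q^t)<\infty$. Since it has no control of the Lyapunov function inside an epoch, a single subsequence approaching $q^*$ is not enough. The paper instead rules out every limit point $\bar q\ne q^*$ by an excursion argument. That argument is where the $O(\eta_t)$ movement bound \eqref{eq:q-q+} and the summable confidence-update jumps of Lemma~\ref{lem:restart-jumps} are genuinely needed.

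\textbf{What your route buys and costs.} You need almost-sure convergence of the partial sums of the weighted delayed-martingale noise at every time, not just a cumulative high-probability bound per epoch. In exchange, Robbins--Siegmund gives convergence of $\Phi_t$ at every $t$. The standard ending then suffices: the liminf of the gap is zero, so a subsequence tends to $q^*$, so $\Phi_t\to0$. Lemma~\ref{lem:restart-jumps} is then essentially superfluous in your route. Contaminated windows can be charged directly, because the $k$th dyadic update of any pair happens no earlier than time $2^k-1$, so $\sum D_\tau\eta_\tau$ over update times $\tau$ is finite.

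\textbf{Details to make precise.}
\begin{itemize}
\item With a growing delay, $\sum_t\eta_t^2D_t<\infty$ alone is only an $L^2$-Cauchy statement. Take $D_t$ constant on dyadic epochs, so that each residue class modulo $D_m+1$ is a genuine martingale. Apply Doob's maximal inequality per class and sum the resulting bound $\sum_m\sqrt{D_m2^m}\,2^{-am}/\delta<\infty$; this gives the almost-sure convergence you claim.
\item In your conclusion, the summable quantity should be $\sum_t\eta_t\sum_i\langle g_i(q_{-i}^t),q_i^*-q_i^t\rangle$. The sign as written is a typo.
\item Your remark that $q^t$ lies in $\boldsymbol\Delta_{\boldsymbol\delta}(\boldsymbol P)$ only up to an $O(\|\hat q^t-q^t\|_1)$ correction is correct and worth keeping. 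The algorithm enforces $\hat\rho^t\ge\delta$, not $\rho^t\ge\delta$. The paper's proof applies $G(q^t)\ge0$ directly and passes over this point. Your Hoffman-type correction, which is summable against $\eta_t$, makes that step rigorous in either route.
\end{itemize}
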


\begin{proof}
Work on the anytime confidence event $\mathcal E_{\rm conf}$, which,
by Lemma~\ref{lem:anytime-confidence}, has probability at least
$1-\gamma/4$. On this event, the conclusions of
Lemmas~\ref{lem:asymptotic-visitation} and
\ref{lem:restart-jumps} hold almost surely. In the argument below, we
additionally invoke the concentration events from
Lemma~\ref{lem:delay-coupling} on the dyadic epochs, with failure
probabilities $\{\gamma_m\}_{m\ge0}$ satisfying
$\sum_{m\ge0}\gamma_m\le\gamma/2$. Hence, by a union bound, all the
events used in the proof hold simultaneously with probability at least
$1-3\gamma/4$, and therefore at least $1-\gamma$.

By Assumption~\ref{ass:g_stable}, the unique stable equilibrium
$q^*\in\boldsymbol\Delta_{\boldsymbol\delta}(\boldsymbol P)$ satisfies
\[
G(q):=
\sum_{i=1}^n
\langle g_i(q_{-i}),q_i^*-q_i\rangle
\ge0,
\qquad
G(q)=0\Longleftrightarrow q=q^*.
\]
Define the Lyapunov function
\[
V(\hat q^t):=
\frac12\sum_{i=1}^n
\|q_i^*-\hat q_i^t\|_2^2.
\]
We first show that, for every $m\ge0$, over the dyadic interval $I_m:=\{2^m,\ldots,2^{m+1}-1\}$,
\begin{align}\label{eq:quasi-fejer-rigorous}
V(\hat q^{2^{m+1}})
\le
V(\hat q^{2^m})
-\sum_{t\in I_m}\eta_tG(q^t)
+e_m,
\qquad
e_m\ge0,
\qquad
\sum_{m=0}^{\infty}e_m<\infty.
\end{align}

Since $q_i^*\in\Delta_{i,\delta_i}(\mathcal P_i^t)$ on the anytime
confidence event $\mathcal E_{\rm conf}$, the one-step proximal inequality derived from
\eqref{eq:optimality-euclidean}, with $q_i'=q_i^*$ and stepsize
$\eta_t$, gives
\begin{align}
V(\hat q^{t+1})&-V(\hat q^t)
\le
-\eta_t\sum_i
\langle q_i^*-\hat q_i^t,\widehat R_i^t\rangle
+\frac{\eta_t^2}{2}\sum_i\|\widehat R_i^t\|_2^2 \notag\\
&=
-\eta_tG(q^t)
-\eta_t\sum_i
\langle q_i^*-\hat q_i^t,\widehat R_i^t-g_i^t\rangle
-\eta_t\sum_i
\langle q_i^t-\hat q_i^t,g_i^t\rangle
+\frac{\eta_t^2}{2}\sum_i\|\widehat R_i^t\|_2^2 .
\label{eq:asymptotic-one-step}
\end{align}
Thus, after summing over $t\in I_m$, it remains only to show that the
last three terms in \eqref{eq:asymptotic-one-step}, summed over the
dyadic epochs, are absolutely summable.

For the last two terms in \eqref{eq:asymptotic-one-step}, using
$\|g_i^t\|_\infty\le1$, $\|\widehat R_i^t\|_2\le\sqrt{|\mathcal S_i|}/\delta_i$, and \eqref{eq:asymptotic-q}, we obtain
\begin{align}
\sum_{t=0}^{\infty}
\eta_t
\left|
\sum_i\langle q_i^t-\hat q_i^t,g_i^t\rangle
\right|
&\le
\sum_i\sum_{t=0}^{\infty}
\eta_t\|\hat q_i^t-q_i^t\|_1
\lesssim
\sum_{t=1}^{\infty}
t^{-a}\sqrt{\frac{\log(t+2)}{t}}
<\infty,
\label{eq:asymptotic-q-error-sum}\\
\sum_{t=0}^{\infty}
\frac{\eta_t^2}{2}
\sum_i\|\widehat R_i^t\|_2^2
&\le
\frac12\sum_i\frac{|\mathcal S_i|}{\delta_i^2}
\sum_{t=0}^{\infty}\eta_t^2
<\infty.
\label{eq:asymptotic-quadratic-sum}
\end{align}
It remains to control the estimator error in \eqref{eq:asymptotic-one-step}. For each epoch $I_m$, set $D_m\!=\!
1+\lceil
\frac1\kappa
\log(
\frac{16n\,2^{3(m+1)}}{\gamma_m})
\rceil$ and $\gamma_m:=\frac{3\gamma}{\pi^2(m+1)^2}$. We apply Lemma~\ref{lem:delay-coupling} to each epoch $I_m$, with
$q'=q^*$, horizon $|I_m|=2^m$, delay $D_m=O(m)$, failure probability
$\gamma_m$, and stepsize upper bound
$\bar\eta_m:=\eta_{2^m}\asymp2^{-am}$.\footnote{Although Lemma~\ref{lem:delay-coupling} is stated for a constant
stepsize and the interval $\{0,\ldots,T-1\}$, its proof extends
directly to any interval $I=\{u,\ldots,v-1\}$ and nonincreasing stepsizes $\{\eta_t\}$. Indeed, with
$\bar\eta_I:=\max_{t\in I}\eta_t=\eta_u$, the weights $\eta_t$ can be
carried through the same martingale and bias estimates and bounded by
$\bar\eta_I$, while the policy-movement estimates are controlled
by $\bar\eta_I$. Thus, the same cumulative estimate holds with the
constant stepsize replaced by $\bar\eta_I$.}
Substituting these quantities into the bound of
Lemma~\ref{lem:delay-coupling}, yields, up to
fixed problem-dependent constants and logarithmic factors,
\begin{align}\label{eq:asymptotic-gradient-summability}
&\sum_{m=0}^{\infty}
\left|
\sum_{t\in I_m}
\eta_t
\sum_i
\langle q_i^*-\hat q_i^t,
\widehat R_i^t-g_i^t\rangle
\right|\cr 
&\qquad\lesssim
\sum_{m=0}^{\infty}
\left[
m\,2^{(1/2-a)m}
+m^2\,2^{(1-2a)m}
+\sqrt{m}\,2^{(1/2-a)m}
+m\,2^{-am}
\right]<\infty.
\end{align}
Here, the first three summands in
\eqref{eq:asymptotic-gradient-summability} dominate the terms obtained from
Lemma~\ref{lem:delay-coupling}. The additional last term
$m2^{-am}$ accounts for the first $D_m$ times of each shifted epoch,
whose delay windows extend before the epoch. Bounding these terms directly
gives a total weighted contribution of order
$D_m\bar\eta_m=O(m2^{-am})$. All the sums in \eqref{eq:asymptotic-gradient-summability} are finite because $a>1/2$ and by the choice of $D_m$. Moreover, $\sum_m\gamma_m\leq \frac{\gamma}{2}$, so a union bound over the dyadic epochs makes these estimates simultaneous on the common event under consideration. Finally, summing \eqref{eq:asymptotic-one-step} over $t\in I_m$ and
defining $e_m$ as the sum of the absolute values of the three error
contributions in
\eqref{eq:asymptotic-q-error-sum}--\eqref{eq:asymptotic-gradient-summability}
gives \eqref{eq:quasi-fejer-rigorous}.

We now use \eqref{eq:quasi-fejer-rigorous} to prove convergence.
Since $G(q)\ge0$, summing \eqref{eq:quasi-fejer-rigorous} over $m$
and using the summability of its error terms gives
\begin{align}\label{eq:G-summable}
\sum_{t=1}^{\infty}\eta_t G(q^t)<\infty.
\end{align}
Suppose that $\hat q^t$ has a limit point $\bar q\ne q^*$. By
\eqref{eq:asymptotic-q}, $\|q^t-\hat q^t\|_1\to0$, so $\bar q$ is
also a limit point of $\{q^t\}$. By continuity of $G(\cdot)$ and
Assumption~\ref{ass:g_stable}, there exist $r>0$ and a constant
$c_U>0$ such that $G(q)\ge c_U$ for every
$q\in\boldsymbol\Delta_{\boldsymbol\delta}(\boldsymbol P)$ satisfying
$\|q-\bar q\|_1\le 2r$. Define the neighborhoods
$U_0:=\{q:\|q-\bar q\|_1<r/2\}$ and
$U:=\{q:\|q-\bar q\|_1<r\}$. Since
$\|q^t-\hat q^t\|_1\to0$, for all sufficiently large $t$,
$\hat q^t\in U$ implies $\|q^t-\bar q\|_1\le 2r$, and hence
$G(q^t)\ge c_U$.

Away from confidence-update times, the occupancy-movement estimate
\eqref{eq:q-q+} established in the proof of Lemma~\ref{lem:movement}
implies that
$\|\hat q_i^{t+1}-\hat q_i^t\|_1=O(\eta_t)$. At confidence-update times,
\eqref{eq:restart-jump-summable} and the finiteness of the number of
players imply that $\sum_{i=1}^n\sum_{t\in\mathcal R_i}
\|\hat q_i^{t+1}-\hat q_i^t\|_1<\infty$. In particular, the total magnitude of confidence-update
jumps after time $T$ converges to zero as $T\to\infty$. Since
$\overline{U_0}\subset U$, moving from $U_0$ to outside $U$ requires a
fixed positive amount of total movement. Thus, for every sufficiently
late excursion that starts in $U_0$ and ends upon its first exit from
$U$, the confidence-update jumps can account for only an arbitrarily
small part of this movement, while the remaining movement is bounded
by a constant multiple of the sum of $\eta_t$ over the excursion.
Consequently, there exists a constant $c>0$ such that every
sufficiently late such excursion satisfies
$$\sum_{t\ {\rm during\ the\ excursion}}\eta_t\ge c.$$

Since $\bar q$ is a limit point of $\{\hat q^t\}$, the trajectory
visits $U_0$ infinitely often. It must therefore either eventually
remain in $U$, or make infinitely many excursions from $U_0$ to
outside $U$. In the first case, $G(q^t)\ge c_U$ eventually; since
$\sum_t\eta_t=\infty$, this gives
$\sum_t\eta_tG(q^t)=\infty$. In the second case, during every
sufficiently late excursion, up to its first exit from $U$, we have
$\hat q^t\in U$ and hence $G(q^t)\ge c_U$, while the excursion
satisfies $\sum_{t\ {\rm during\ the\ excursion}}\eta_t\ge c$.
Infinitely many disjoint excursions therefore again give
$\sum_t\eta_tG(q^t)=\infty$. Both cases contradict
\eqref{eq:G-summable}. Thus $q^*$ is the only limit point of
$\{\hat q^t\}$, and compactness yields $\hat q^t\rightarrow q^*$.
Equation~\eqref{eq:asymptotic-q} then immediately gives
$q^t\rightarrow q^*$. 

Finally, the policy map $\pi_i(a_i\mid s_i)
=
\frac{\rho_i(s_i,a_i)}
{\sum_{b_i}\rho_i(s_i,b_i)}$ is continuous on the shrunk occupancy set, since its state marginals
are bounded away from zero. Hence the induced stationary policies
converge to the policy profile represented by $q^*$. Moreover, since
$q^*$ is an NE of the shrunk virtual game, for every player $i$, $V_i(q_i^*,q_{-i}^*)
\ge
\max_{q_i\in\Delta_{i,\delta_i}(P_i)}
V_i(q_i,q_{-i}^*)$. By the same shrinkage argument used in Corollary~\ref{cor:cce}, for
sufficiently small action floors, $\max_{q_i\in\Delta_i(P_i)}
V_i(q_i,q_{-i}^*)
\le
\max_{q_i\in\Delta_{i,\delta_i}(P_i)}
V_i(q_i,q_{-i}^*)+\epsilon_\delta$,
where $\epsilon_\delta\to0$ as $\delta=\max_i\delta_i\to0$.
Combining the last two displays shows that $V_i(q_i^*,q_{-i}^*)
\ge
\max_{q_i\in\Delta_i(P_i)}
V_i(q_i,q_{-i}^*)-\epsilon_\delta,\ \forall i\in[n]$. Thus, the stationary policy profile represented by $q^*$ is an $\epsilon_\delta$-NE of the original game.
\end{proof}

\section{Real-World Applications}\label{sec:applications}

The stochastic game model studied in this paper is relevant to
multi-agent systems in which each player controls the evolution of a
local stochastic state while interacting with other
players through its payoff. This structure arises when the
physical dynamics of individual agents are locally controlled, but
their objectives are coupled through a shared resource, market, or
environment. We illustrate this connection through three representative
applications: energy management in smart grids, resource allocation in
wireless networks, and decentralized robotic navigation and coverage.
For these applications, the framework developed in this paper provides
a fully decentralized way for agents to learn from a single continuing
trajectory without knowing their local transition kernels or observing
the states and actions of other players, thereby respecting users'
privacy and accommodating communication or information bottlenecks
that hinder coordination. Our results provide finite-time no-regret and
asymptotic convergence guarantees whose complexity is governed by local
state spaces and cover times, making them scalable and practical for
large-scale systems. 

\subsection{Energy Management in Smart Grids}

Consider a collection of $n$ geographically distributed energy users,
indexed by $[n]=\{1,\ldots,n\}$, interacting with a utility company.
Each user acts as a \emph{prosumer}: it can consume electricity from
the grid while also producing energy locally through renewable sources
such as solar panels or wind turbines. In addition, user $i$ is
equipped with a storage device of capacity $C$, allowing unused energy
to be carried from one period to the next. Let $s_i^t$ denote the quantized energy stored by user $i$ at the
beginning of day $t$, and let $h_i^t$ denote the renewable energy
harvested during that day. The random variable $h_i^t$ captures
uncertainty in local renewable production due, for example, to
weather conditions. If $a_i^t$ denotes the amount of energy consumed
by user $i$ during day $t$, its storage level evolves according to $s_i^{t+1}=\min\big\{C,\;h_i^t+\left(s_i^t-a_i^t\right)^+\big\}$,
where $x^+:=\max\{x,0\}$. Thus, the current consumption decision
directly influences the distribution of the user's future storage
state. When $a_i^t>s_i^t$, the user must obtain $\left(a_i^t-s_i^t\right)^+$ units of electricity from the grid to meet its consumption requirement. Although the storage dynamics are local, the users interact
strategically through the electricity market. In particular, suppose
that the utility determines the unit price of grid energy based on
the aggregate demand, and denote this price by $p(a^t,s^t)$. If
$u_i(a_i^t)$ denotes the benefit that user $i$ obtains from consuming
$a_i^t$ units of energy, its instantaneous reward can be written as $r_i(a^t,s^t)=u_i(a_i^t)-p(a^t,s^t)\left(a_i^t-s_i^t\right)^+$. Hence, while player $i$ controls only its own storage dynamics, its
reward depends on the states and consumption decisions of the entire
population through the market price (see Figure~\ref{Fig:applications}--a).

This model fits the independent-chain framework particularly well when
the users are sufficiently geographically separated that their local
renewable-generation processes can be modeled as independent.
Conditional on $(s_i^t,a_i^t)$, the next storage state of
player $i$ is then governed by its own renewable-generation process
and does not depend on the states or actions of other users.
Nevertheless, the players remain strategically coupled because the
price $p(a^t,s^t)$ depends on their collective demand. Consequently,
each user seeks to learn a long-run consumption and storage policy
from its own observations without knowing the stochastic model
governing its renewable generation or observing the states and actions
of other users. This provides a decentralized energy-management
application of the stochastic game studied in this paper
\citep{etesami2018stochastic}.

\begin{figure}[!tbp]
    \centering
    \includegraphics[width=\textwidth]{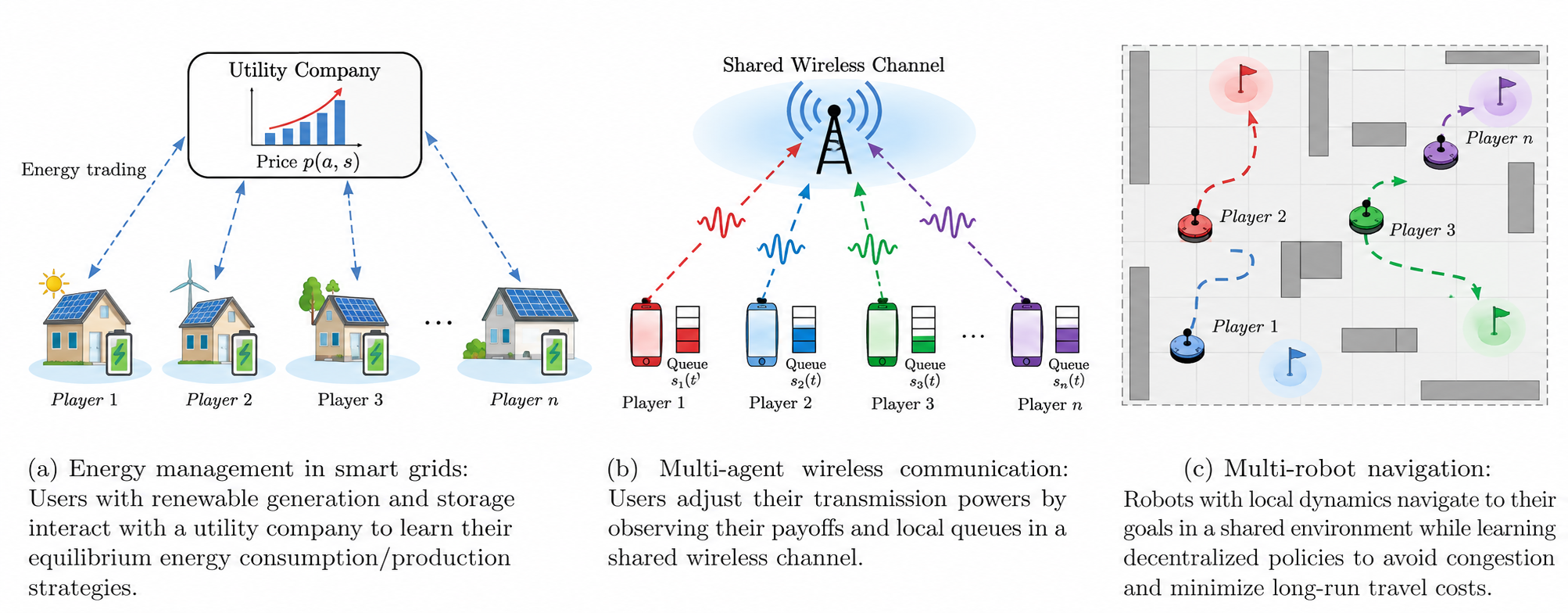}
    \caption{\footnotesize Representative applications of stochastic
    games with independent controlled chains. (a) Energy management:
    prosumers with local renewable generation and storage interact
    through a utility company and an aggregate-demand-dependent price.
    (b) Wireless communication: users with independent local queues
    adjust transmission decisions while competing over a shared
    channel. (c) Robotic navigation and coverage: robots with local
    controlled dynamics navigate through a shared environment while
    balancing individual navigation objectives with collective
    criteria such as congestion avoidance and spatial coverage.}
    \label{Fig:applications}
\end{figure}

\subsection{Power and Bandwidth Allocation in Wireless Networks}

A second example arises in decentralized wireless communication
systems in which several transmitters compete for access to a common
channel \citep{altman2007constrained}. Consider $n$ users transmitting packets to one or
more receivers over a shared wireless medium. For transmitter $i$,
let $s_i^t$ represent its local queue state at time $t$, describing,
for example, the number of packets waiting to be transmitted. New
packets arrive according to a local stochastic arrival process that is
independent of the packet-generation processes of the other users. At each time $t$, transmitter $i$ observes its own queue and selects
an action $a_i^t$, which may specify a transmission-power level,
bandwidth allocation, transmission rate, or channel-access decision.
Its queue then evolves according to the packets that arrive locally
and the packets served as a consequence of its transmission decision.
Thus, the action of each transmitter controls the evolution of its
own queue state, while the underlying packet-arrival uncertainty is
local to that user. The strategic interaction occurs through the shared communication
medium. For example, the probability that player $i$ successfully
transmits a packet may depend on the transmission powers of all users,
since simultaneous transmissions generate interference. More
generally, the reward $r_i(s^t,a^t)$ may account for successful throughput, transmission energy, queueing
delay, packet loss, or congestion, and may therefore depend on the
joint state--action profile. Channel conditions, such as background
noise and available capacity, may further influence the realized
payoff (see Figure~\ref{Fig:applications}--b).

The resulting system separates local dynamics from strategic
interaction in precisely the manner captured by the independent-chain
model: each user's queue evolves according to its own controlled local
dynamics, whereas the performance resulting from a transmission
decision depends on the simultaneous behavior of the other users.
A transmitter can therefore seek to learn a stationary power- or
bandwidth-allocation policy using only its local queue, selected
transmission action, observed next queue state, and realized
transmission payoff, without knowing its local transition law or
observing the states and actions of the competing transmitters.

\subsection{Decentralized Robotic Navigation and Coverage}

A third application arises in decentralized robotic navigation and
coverage, where a collection of autonomous agents moves through a common
environment while collectively monitoring, exploring, or covering a region
of interest. Such cooperative-control problems have been studied using
potential, state-based potential, and distributed welfare games
\citep{marden2009cooperative,marden2012state,marden2013distributed}.
To connect this setting to the independent-chain model, suppose that
robot $i$ has a quantized local state $s_i^t$ describing its position,
heading, velocity, battery level, sensing status, or other locally
available information relevant to its motion and operation. At time $t$,
the robot observes its current local state and selects a local control
action $a_i^t$, such as a movement direction, velocity adjustment, or
sensing decision. Its next state is then generated according to an
unknown controlled transition kernel
$P_i(s_i^{t+1}\mid s_i^t,a_i^t)$, which may capture uncertainty in
actuation, localization, terrain conditions, sensor measurements, or
other robot-specific disturbances encountered during operation
(see Figure~\ref{Fig:applications}--c). When these disturbances are
independent across robots, each robot's state evolves according to its
own locally controlled stochastic dynamics, and the resulting system
has the independent controlled-chain structure considered in this paper.

The robots nevertheless interact strategically through
their rewards, which may account for travel time, energy consumption,
congestion, collision avoidance, or progress toward a destination.
The reward may also incorporate a collective \emph{coverage} objective.
For example, the workspace can be partitioned into Voronoi cells, with
each robot responsible for monitoring its associated region, and the
collective reward can favor configurations that cover a larger fraction
of the workspace, reduce the aggregate distance between points and
their nearest robots, or place robots near the centroids of their
Voronoi regions
\citep{marden2009cooperative,marden2013distributed}. Accordingly, the
reward $r_i(s^t,a^t)$ can combine individual navigation and energy
costs with team-dependent criteria such as coverage, congestion, and
collision avoidance. Thus, although each robot controls only its local
dynamics, its reward can depend on the configuration and actions of the
entire robot population. The resulting equilibrium policies can
therefore represent decentralized motion strategies that balance local
navigation costs with collective objectives such as efficient spatial
coverage.

\section{Conclusion}\label{sec:conclusion}

We developed a fully online one-sample occupancy-measure learning algorithm for stochastic games with unknown independent controlled chains. The algorithm operates along a single continuing trajectory, using one local transition and realized payoff per time step, while players update their policies and learn their unknown transition kernels asynchronously using only local information. Under uniform ergodicity, each individual controlled chain has a finite worst-case expected cover time $H_i$, which enters the finite-time complexity as a local problem-dependent parameter. A frozen-policy block comparison, together with the slow movement of mirror descent, transfers fixed-policy coverage and mixing properties to the time-inhomogeneous online trajectory. The resulting high-probability fixed-comparator regret decreases at the rate $\widetilde{\mathcal O}(T^{-1/2})$ after a local-cover transient period. For arbitrary reward functions, this regret guarantee further implies that the empirical distribution of play approaches a coarse-correlated equilibrium. Because transition estimation and delayed moving-chain comparisons are performed separately on each local state space $\mathcal S_i$, neither the cover time nor the cardinality of the product state space $\prod_i\mathcal S_i$ enters the complexity. Under an additional variational-stability condition and diminishing stepsizes, the same fully online scheme further yields asymptotic last-iterate convergence to a stationary $\epsilon$-NE.

\bibliography{thesisrefs}

\paragraph{Acknowledgment.}
The core ideas and technical contributions of this work originated with the author and were inspired by his earlier collaborative work on episodic learning \citep{qin2024learning}, together with his recent work on developing the first fully online algorithm for Markov $\alpha$-potential games \citep{etesami2026markovpotential}. The author acknowledges the use of generative AI (GPT-5.6 Sol) for brainstorming and for technical and writing assistance, including refining and checking proofs, filling in standard technical gaps, improving exposition and grammar, and preparing figures. All AI-assisted material was independently verified and revised by the author.

\appendix
\renewcommand{\thesection}{\Alph{section}}
\renewcommand{\thesubsection}{\thesection.\arabic{subsection}}
\setcounter{section}{0}

\section{Omitted Proofs for Section \ref{sec:dual} }\label{subsec:appendixa}

\subsection{Proof of Lemma \ref{prop:dual_formulation}}

\begin{proof} 
It has been shown in \citep{etesami2022learning} that, under Assumptions~\ref{ass:indep} and~\ref{ass:ergodic}, if each player $i$ follows a stationary policy $\pi_i$, then
\begin{align}\label{eq:linear-occupation-form1}
    V_i(\pi_i,\pi_{-i})
    =V_i(\rho_i,\rho_{-i})
    =\sum_{s,a}\prod_{j=1}^n\rho_j(s_j,a_j)r_i(s,a)
    =\langle \rho_i,v_i(\rho_{-i})\rangle,
\end{align}
where $v_i(\rho_{-i})$ denotes player $i$'s payoff-gradient vector with respect to $\rho_i$; it is a vector of dimension $|\mathcal S_i||\mathcal A_i|$ whose $(s_i,a_i)$-th coordinate is given by
\begin{align}\label{eq:v-rho-coordinate}
    v_i(\rho_{-i})_{(s_i,a_i)}
    =\sum_{s_{-i},a_{-i}}
    \prod_{j\neq i}\rho_j(s_j,a_j)r_i(s,a).
\end{align}
Since
$\rho_j(s_j,a_j)=\sum_{s'_j}q_j(s_j,a_j,s'_j)$
for every player $j$, substituting this relation into \eqref{eq:linear-occupation-form1} gives
\begin{align*}
V_i(\rho_i,\rho_{-i})
&=
\sum_{s_i,a_i,s'_i}
q_i(s_i,a_i,s'_i)
\sum_{s_{-i},a_{-i}}
\prod_{j\neq i}
\sum_{s'_j}q_j(s_j,a_j,s'_j)r_i(s,a)=\langle q_i,g_i(q_{-i})\rangle,
\end{align*}
which establishes \eqref{eq:linear-occupation-form2} and \eqref{eq:l_def}. Now, comparing \eqref{eq:l_def} and \eqref{eq:v-rho-coordinate}, and using $\rho_j(s_j,a_j)=\sum_{s'_j}q_j(s_j,a_j,s'_j)$ shows that
\[
g_i(q_{-i})_{(s_i,a_i,s'_i)}
=
v_i(\rho_{-i})_{(s_i,a_i)},
\qquad
\forall\,s_i,a_i,s'_i.
\]
In particular, $g_i(q_{-i})_{(s_i,a_i,s'_i)}$ is independent of $s'_i$, with the same value repeated across all $s'_i\in\mathcal S_i$.
\end{proof}

\subsection{Proof of Lemma \ref{lem:confidence-convex}}

\begin{proof}
For $q_i\in\Delta_{i,\delta_i}$, since $\rho_i(s_i,a_i)\ge\delta_i>0$ where we note that $\rho_i(s_i,a_i)=\sum_{y_i}q_i(s_i,a_i,y_i)$, the transition kernel induced by
$q_i$ is well defined and satisfies
\[
P_i^{q_i}(s_i'\mid s_i,a_i)
=
\frac{q_i(s_i,a_i,s_i')}{\rho_i(s_i,a_i)}.
\]
Therefore, the condition
$P^{q_i}\in\mathcal P_i$ is equivalent to imposing the box constraints $L_i(s_i,a_i,s_i')
\le
P_i^{q_i}(s_i'\mid s_i,a_i)
\le
U_i(s_i,a_i,s_i')$, which are equivalent to the linear inequalities
\[
L_i(s_i,a_i,s_i')\sum_{y_i}q_i(s_i,a_i,y_i)
\le q_i(s_i,a_i,s_i')
\le
U_i(s_i,a_i,s_i')\sum_{y_i}q_i(s_i,a_i,y_i).
\]
Consequently, $\Delta_{i,\delta_i}(\mathcal P_i)$ admits the following explicit representation as the set of all occupancies $q_i$ satisfying the following linear constraints:
\[
\begin{array}{ll}
q_i(s_i,a_i,s_i')\ge0,
& \forall s_i,a_i,s_i',\\[1mm]
\displaystyle
\sum_{s_i,a_i,s_i'}q_i(s_i,a_i,s_i')=1,
& \\[2mm]
\displaystyle
\sum_{s_i',a_i}q_i(s_i',a_i,s_i)
=
\sum_{s_i',a_i}q_i(s_i,a_i,s_i'),
& \forall s_i,\\[2mm]
\displaystyle
\sum_{s_i'}q_i(s_i,a_i,s_i')\ge\delta_i,
& \forall s_i,a_i,\\[2mm]
\displaystyle
L_i(s_i,a_i,s_i')\sum_{y_i}q_i(s_i,a_i,y_i)
\!\le\! q_i(s_i,a_i,s_i')
\!\le\!
U_i(s_i,a_i,s_i')\sum_{y_i}q_i(s_i,a_i,y_i),
& \forall s_i,a_i,s_i'.
\end{array}
\]
All the displayed constraints in $\Delta_{i,\delta_i}(\mathcal P_i)$ are linear in $q_i$ with
$|\mathcal S_i|^2|\mathcal A_i|$ variables and
$O(|\mathcal S_i|^2|\mathcal A_i|)$ constraints. Moreover, the set is
closed and bounded by nonnegativity and normalization. Hence
$\Delta_{i,\delta_i}(\mathcal P_i)$ is a closed convex polytope with an
explicit polynomial-size linear description.
\end{proof}

\section{Omitted Proofs for Section \ref{sec:alg-design}}

\subsection{Proof of Lemma \ref{lem:anytime-confidence}}

\begin{proof}
Fix $i\in[n]$, $(s_i,a_i)\in\mathcal S_i\times\mathcal A_i$, and
$s_i'\in\mathcal S_i$. All random variables below are defined on the
common probability space supporting the entire game trajectory. For
each $\ell\ge1$, let $S_{i,\ell}^+$ denote the next-state observation
following the $\ell$th occurrence of the state--action pair
$(s_i,a_i)$. Let $\mathcal F_{i,0}$ contain the entire game history
immediately after the first occurrence of $(s_i,a_i)$ and before
observing $S_{i,1}^+$. More generally, for $\ell\ge1$, let
$\mathcal F_{i,\ell}$ contain the entire game history immediately
after the $(\ell+1)$st occurrence of $(s_i,a_i)$ and before observing
$S_{i,\ell+1}^+$. Thus,
$\{\mathcal F_{i,\ell}\}_{\ell\ge0}$ is the filtration obtained by
indexing the entire game history by successive occurrences of
$(s_i,a_i)$. In particular, $\mathcal F_{i,\ell-1}$ contains the
entire history preceding $S_{i,\ell}^+$, including the fact that the
$\ell$th occurrence of $(s_i,a_i)$ has taken place, but does not
contain $S_{i,\ell}^+$ itself. For $m\ge1$, define
\[
\bar P_{i,m}(s_i'\mid s_i,a_i)
:=
\frac1m\sum_{\ell=1}^m
{\bf1}\{S_{i,\ell}^+=s_i'\}.
\]
By the controlled Markov property, conditional on
$\mathcal F_{i,\ell-1}$, the next state following the $\ell$th
occurrence of $(s_i,a_i)$ is distributed according to
$P_i(\cdot\mid s_i,a_i)$. Therefore,
\[
\mathbb E\left[
{\bf1}\{S_{i,\ell}^+=s_i'\}
\,\middle|\,
\mathcal F_{i,\ell-1}
\right]
=
P_i(s_i'\mid s_i,a_i).
\]
Consequently,
\[
X_{i,\ell}
:=
{\bf1}\{S_{i,\ell}^+=s_i'\}
-
P_i(s_i'\mid s_i,a_i),
\qquad \ell\ge1,
\]
is a martingale-difference sequence with respect to
$\{\mathcal F_{i,\ell}\}_{\ell\ge0}$. Moreover,
$X_{i,\ell}$ takes values in an interval of length one. Hence, by
Hoeffding--Azuma, for every $m\ge1$ and $x>0$,
\[
\mathbb P\left(
\left|
\bar P_{i,m}(s_i'\mid s_i,a_i)
-
P_i(s_i'\mid s_i,a_i)
\right|
>
\sqrt{\frac{2x}{m}}
\right)
\le 2e^{-x}.
\]

Now consider any time $t$ such that
$N_i^t(s_i,a_i)=m\ge1$. By the definitions of $N_i^t$,
$\bar P_i^t$, and $\bar P_{i,m}$,
\[
\bar P_i^t(s_i'\mid s_i,a_i)
=
\bar P_{i,m}(s_i'\mid s_i,a_i).
\]
Moreover, $m\le t$, and $\Lambda_t$ is nondecreasing, so
\[
\Lambda_t\ge\Lambda_m
=
\log\left(
\frac{64nA_{\max}S_{\max}^2(m+2)^2}{\gamma}
\right).
\]
Therefore, if for some $t$ with $N_i^t(s_i,a_i)=m\ge1$,
\[
\left|
\bar P_i^t(s_i'\mid s_i,a_i)
-
P_i(s_i'\mid s_i,a_i)
\right|
>
\beta_i^t(s_i,a_i),
\]
then necessarily
\[
\left|
\bar P_{i,m}(s_i'\mid s_i,a_i)
-
P_i(s_i'\mid s_i,a_i)
\right|
>
\sqrt{\frac{2\Lambda_m}{m}}.
\]
Thus, taking a union bound over all possible visit counts $m\ge1$, \footnote{If $N_i^t(s_i,a_i)=0$, then
$\bar P_i^t(s_i'\mid s_i,a_i)=0$, while
$P_i(s_i'\mid s_i,a_i)\le1$. Moreover,
$\beta_i^t(s_i,a_i)=\sqrt{2\Lambda_t}\ge1$. Hence, $\left|
\bar P_i^t(s_i'\mid s_i,a_i)
-
P_i(s_i'\mid s_i,a_i)
\right|
\le
\beta_i^t(s_i,a_i)$,
so the confidence bound holds automatically.}
\begin{align*}
&\mathbb P\left(
\exists\,t\ge1:
N_i^t(s_i,a_i)\ge1,\ 
\left|
\bar P_i^t(s_i'\mid s_i,a_i)
-
P_i(s_i'\mid s_i,a_i)
\right|
>
\beta_i^t(s_i,a_i)
\right)
\\
&\qquad\le
\sum_{m=1}^{\infty}
\mathbb P\left(
\left|
\bar P_{i,m}(s_i'\mid s_i,a_i)
-
P_i(s_i'\mid s_i,a_i)
\right|
>
\sqrt{\frac{2\Lambda_m}{m}}
\right)\le
2\sum_{m=1}^{\infty}e^{-\Lambda_m}
\\
&\qquad=
2\sum_{m=1}^{\infty}
\left(
\frac{\gamma}
{64nA_{\max}S_{\max}^2(m+2)^2}
\right).
\end{align*}

By taking a union bound over all $i,s_i,a_i,s_i'$ and using $\sum_{i=1}^n
|\mathcal A_i||\mathcal S_i|^2
\le
nA_{\max}S_{\max}^2$, we obtain
\begin{align*}
&\mathbb P\left(
\exists\,i,t,s_i,a_i,s_i':
\left|
\bar P_i^t(s_i'\mid s_i,a_i)
-
P_i(s_i'\mid s_i,a_i)
\right|
>
\beta_i^t(s_i,a_i)
\right)
\\
&\qquad\le
2nA_{\max}S_{\max}^2
\sum_{m=1}^{\infty}
\left(
\frac{\gamma}
{64nA_{\max}S_{\max}^2(m+2)^2}
\right)
<\frac{\gamma}{4}.
\end{align*}
Thus, with probability at least $1-\gamma/4$, every confidence region
constructed by every player at every confidence-update time contains
the corresponding true transition kernel $P_i$. Since
$\mathcal P_i^0$ contains all transition kernels and
Algorithm~\ref{alg:main} updates $\mathcal P_i^t$ only by intersecting
the previous confidence set with such confidence regions, it follows
inductively that
\[
P_i\in\mathcal P_i^t,
\qquad
\forall i\in[n],\ \forall t\ge0.
\]
Hence, $\mathbb P(\mathcal E_{\rm conf})\ge1-\gamma/4$. Finally, on $\mathcal E_{\rm conf}$, if
$q_i\in\Delta_{i,\delta_i}(P_i)$, then its induced transition kernel
satisfies $P^{q_i}=P_i\in\mathcal P_i^t$. Therefore, $q_i\in\Delta_{i,\delta_i}(\mathcal P_i^t)$, and consequently
\[
\Delta_{i,\delta_i}(P_i)
\subseteq
\Delta_{i,\delta_i}(\mathcal P_i^t),
\qquad
\forall i\in[n],\ \forall t\ge0.
\]
\end{proof}

\section{Omitted Proofs for Section \ref{sec:finite_analysis}}

\subsection{Proof of Lemma \ref{lem:finite-md}}

\begin{proof}
On the event $\mathcal E_{\rm conf}$, $P_i\in\mathcal P_i^t$ for every $t$, hence the fixed comparator $q_i'$ belongs to every feasible set $\Delta_{i,\delta_i}(\mathcal P_i^t)$. The first-order optimality condition for the proximal occupancy update in Algorithm~\ref{alg:main} therefore gives
\begin{align}\label{eq:optimality-euclidean}
\left\langle\hat q_i^{t+1}-\hat q_i^t-\eta\widehat R_i^t,q_i'-\hat q_i^{t+1}\right\rangle\ge0.
\end{align}
Now, we can write
\begin{align}\label{eq:optimality-quadratic-VI}
\eta\langle q_i'-\hat q_i^t,\widehat R_i^t\rangle
&=
\eta\langle q_i'-\hat q_i^{t+1},\widehat R_i^t\rangle
+\eta\langle\hat q_i^{t+1}-\hat q_i^t,\widehat R_i^t\rangle\cr
&\le
\langle\hat q_i^{t+1}-\hat q_i^t,
q_i'-\hat q_i^{t+1}\rangle
+\eta\langle\hat q_i^{t+1}-\hat q_i^t,\widehat R_i^t\rangle\cr
&=
\frac12\|q_i'-\hat q_i^t\|_2^2
-\frac12\|q_i'-\hat q_i^{t+1}\|_2^2
-\frac12\|\hat q_i^{t+1}-\hat q_i^t\|_2^2+\eta\langle\hat q_i^{t+1}-\hat q_i^t,\widehat R_i^t\rangle\cr
&\le
\frac12\|q_i'-\hat q_i^t\|_2^2
-\frac12\|q_i'-\hat q_i^{t+1}\|_2^2
+\frac{\eta^2}{2}\|\widehat R_i^t\|_2^2,
\end{align}
where the first inequality follows from the first-order optimality condition \eqref{eq:optimality-euclidean}, the second equality uses the three-point identity $2\langle x-y,z-x\rangle
=\|z-y\|_2^2-\|z-x\|_2^2-\|x-y\|_2^2$, and the last inequality follows from Young's inequality, which cancels the term
$-\frac12\|\hat q_i^{t+1}-\hat q_i^t\|_2^2$.

Summing the above relation over $t$ telescopes globally. Importantly, a confidence update does not create a restart term in this telescope because the true-kernel comparator remains feasible after every confidence-set intersection. Since both occupancy vectors have unit mass, $\|q_i'-\hat q_i^0\|_2^2\le2$. Moreover, by~\eqref{eq:online-estimator}, $\widehat R_i^t$ has exactly $|\mathcal S_i|$ possibly nonzero next-state coordinates, all corresponding to the observed pair $(S_i^t,A_i^t)$, and every such coordinate is at most $1/\delta_i$. Hence $\|\widehat R_i^t\|_2^2\le\frac{|\mathcal S_i|}{\delta_i^2}$. Substitution proves~\eqref{eq:online-md-regret}.
\end{proof}

\subsection{Proof of Lemma \ref{lem:movement}}

\begin{proof}
We first establish the policy-movement bound. Suppose that player $i$ does not update its confidence set at step $t$. Then the feasible set
in the proximal step is the same set that contains $\hat q_i^t$.
Hence the Euclidean update is the projection of
$\hat q_i^t+\eta\widehat R_i^t$ onto this fixed convex set, and
nonexpansiveness of projection gives
\[
\|\hat q_i^{t+1}-\hat q_i^t\|_2
\le
\eta\|\widehat R_i^t\|_2
\le
\frac{\eta\sqrt{|\mathcal S_i|}}{\delta_i}.
\]
Since the occupancy vector has dimension
$|\mathcal A_i||\mathcal S_i|^2$,
\begin{align}\label{eq:q-q+}
\|\hat q_i^{t+1}-\hat q_i^t\|_1
\le
|\mathcal S_i|\sqrt{|\mathcal A_i|}
\,\|\hat q_i^{t+1}-\hat q_i^t\|_2
\le
\frac{L_i\eta}{\delta_i}.
\end{align}
By triangle inequality, marginalization cannot increase the $\ell_1$ norm, so
\[
\|\hat\rho_i^{t+1}-\hat\rho_i^t\|_1
\le
\frac{L_i\eta}{\delta_i}.
\]
For a fixed $s_i$, since every state--action marginal is at least $\delta_i$, $\hat{\nu}^t_i(s_i)=\sum_{a_i}\hat\rho_i^t(s_i,a_i)\ge |\mathcal A_i|\delta_i\ge\delta_i$ and similarly $\hat{\nu}^{t+1}_i(s_i)\ge \delta_i$. Therefore,
\[
\begin{aligned}
\|\pi_i^{t}(\cdot\mid s_i)-\pi_i^{t+1}(\cdot\mid s_i)\|_1=\left\|
\frac{\hat\rho_i^t(s_i,\cdot)}
{\hat{\nu}^{t}(s_i)}
-
\frac{\hat\rho_i^{t+1}(s_i,\cdot)}
{\hat{\nu}^{t+1}(s_i)}
\right\|_1\le
\frac{2\|\hat\rho_i^{t+1}-\hat\rho_i^t\|_1}{\delta_i}
\le
\frac{2L_i\eta}{\delta_i^2}.
\end{aligned}
\]
This proves~\eqref{eq:pi-movement}. In particular, since the controlled kernel is affine in the policy, for every such step,
\begin{align}\label{eq:kernel-movement}
\|P_i^{\pi_i^{t+1}}(\cdot\mid s_i)
-P_i^{\pi_i^t}(\cdot\mid s_i)\|_1
&=
\Big\|
\sum_{a_i}
\bigl(\pi_i^{t+1}(a_i\mid s_i)-\pi_i^t(a_i\mid s_i)\bigr)
P_i(\cdot\mid s_i,a_i)
\Big\|_1\cr
&\le
\sum_{a_i}
\left|\pi_i^{t+1}(a_i\mid s_i)-\pi_i^t(a_i\mid s_i)\right|\cr 
&=
\|\pi_i^{t+1}(\cdot\mid s_i)-\pi_i^t(\cdot\mid s_i)\|_1\leq \frac{2L_i\eta}{\delta_i^2} \ \ \ \ \ \forall s_i\in \mathcal{S}_i.
\end{align}

We now prove the occupancy-estimation bound. The argument proceeds in
three steps. First, we show that every local state--action pair is
visited sufficiently often despite the slowly changing policy. This
controls the confidence radii. Second, we use the confidence sets to
bound the error between the represented and true transition kernels.
Finally, the contraction assumption converts the kernel error into an
occupancy error.

\paragraph{Step 1: Sufficient visitation.}
We first show that the slowly varying policy does not destroy the
local-cover property. Fix player $i$ and an interval $[u,v]$ containing
no confidence update of player $i$. Conditional on $\mathcal F_u$,
consider two local state processes starting from the same state at
time $u$: $S_i^u=\bar S_i^u$. The process $(S_i^t)_{t=u}^v$ is the actual local state trajectory and, at
time $t$, uses the current controlled kernel $P_i^{\pi_i^t}$. In contrast, $(\bar S_i^t)_{t=u}^v$ is the frozen
trajectory: throughout the entire interval it uses the single kernel
$P_i^{\pi_i^u}$. Thus, $S_i^{t+1}\sim P_i^{\pi_i^t}(\cdot\mid S_i^t)$ and $\bar S_i^{t+1}\sim P_i^{\pi_i^u}(\cdot\mid \bar S_i^t)$.

Couple the two chains maximally at each step as long as they have not
separated. Conditional on $S_i^t=\bar S_i^t=s_i$, by maximal coupling their probability
of separating at the next transition is at most
\[
\frac12
\Big\|P_i^{\pi_i^t}(\cdot\mid s_i)
-P_i^{\pi_i^u}(\cdot\mid s_i)\Big\|_1.
\]
Therefore, by a union bound over the transitions, we obtain
\begin{align}\label{eq:local-path-coupling}
\mathbb{P}\Big(
\exists\, t\in[u, v]\text{ such that }S_i^t\neq\bar S_i^t
\,\Big|\,\mathcal F_u
\Big)
\le
\frac12\sum_{t=u}^{v-1}
\max_{s_i}
\Big\|P_i^{\pi_i^t}(\cdot\mid s_i)
-P_i^{\pi_i^u}(\cdot\mid s_i)\Big\|_1.
\end{align}
As long as no confidence update occurs, \eqref{eq:kernel-movement}
holds at every step. Hence, for $t\ge u$,
\[
\max_{s_i}
\Big\|P_i^{\pi_i^t}(\cdot\mid s_i)
-P_i^{\pi_i^u}(\cdot\mid s_i)\Big\|_1
\le
\sum_{k=u}^{t-1}
\max_{s_i}
\Big\|P_i^{\pi_i^{k+1}}(\cdot\mid s_i)
-P_i^{\pi_i^k}(\cdot\mid s_i)\Big\|_1
\le
\frac{2L_i(t-u)\eta}{\delta_i^2}.
\]
Substituting into \eqref{eq:local-path-coupling} gives
\begin{align}\label{eq:local-path-coupling-movement}
\mathbb{P}\Big(
\exists\, t\in[u, v]\text{ such that }S_i^t\neq\bar S_i^t
\,\Big|\,\mathcal F_u
\Big)
\le
\frac{L_i(v-u)^2\eta}{\delta_i^2}.
\end{align}

Next, we apply this comparison on blocks of length $2H_i+1$. Let
\[
\ell_m:=m(2H_i+1),
\qquad
B_m:=\{\ell_m,\ldots,\ell_m+2H_i\},
\]
and let $\tau_m$ be the first confidence-update time of player $i$ at
or after $\ell_m$. Thus, $\tau_m\in B_m$ if and only if a confidence
update occurs in $B_m$. Couple the actual chain with the chain frozen at $\pi_i^{\ell_m}$
until the first confidence update or the end of $B_m$, whichever comes
first. On the event $\{\tau_m\notin B_m\}$, this comparison therefore
runs throughout the entire block. By the same maximal-coupling argument
as in \eqref{eq:local-path-coupling-movement} and
\eqref{eq:T-cover-threshold}, we obtain
\begin{align}\label{eq:coupling-bound}
\mathbb{P}\Big(
\{\tau_m\notin B_m\}
\cap
\{\exists\,t\in B_m\text{ such that }S_i^t\neq\bar S_i^t\}
\,\Big|\,\mathcal F_{\ell_m}
\Big)
\le
\frac{4L_iH_i^2\eta}{\delta_i^2}
\le\frac14.
\end{align}

Conditional on $\mathcal F_{\ell_m}$, the frozen process starts from
$S_i^{\ell_m}$ and evolves according to the homogeneous Markov chain
$P_i^{\pi_i^{\ell_m}}$ induced by the fixed policy $\pi_i^{\ell_m}$.
Let $\tau_{{\rm cov},i}^{\pi_i^{\ell_m}}$ denote its cover time
starting from $S_i^{\ell_m}$. By the definition of $H_i$ and Markov's
inequality,
\[
\mathbb{P}\!\left(
\tau_{{\rm cov},i}^{\pi_i^{\ell_m}}\le 2H_i
\,\middle|\,\mathcal F_{\ell_m}
\right)
=
1-
\mathbb{P}\!\left(
\tau_{{\rm cov},i}^{\pi_i^{\ell_m}}>2H_i
\,\middle|\,\mathcal F_{\ell_m}
\right)
\ge
1-\frac{
\E[\tau_{{\rm cov},i}^{\pi_i^{\ell_m}}
\mid\mathcal F_{\ell_m}]
}{2H_i}
\ge\frac12.
\]
Thus, conditional on $\mathcal F_{\ell_m}$, the frozen chain visits
every local state, and hence any prescribed $s_i$, during $B_m$ with
probability at least $1/2$. Combining this with the coupling bound
\eqref{eq:coupling-bound}, the event that either a confidence update
occurs in $B_m$ or the actual chain visits $s_i$ has conditional
probability at least $1/2-1/4=1/4$. Therefore, for every
$s_i\in\mathcal S_i$,
\begin{align}\label{eq:hit-or-update}
\mathbb{P}\!\Big(
\{\exists\,t\in B_m:S_i^t=s_i\}
\cup
\{\tau_m\in B_m\}
\,\Big|\,\mathcal F_{\ell_m}
\Big)
\ge\frac14.
\end{align}

We now lower bound the visit count $N_i^t(s_i,a_i)$. Fix a prescribed
$(s_i,a_i)$. For each block, we count it as successful if it either
produces an observation of $(s_i,a_i)$ or contains a confidence update;
the latter blocks will later be charged to the total number of confidence
updates. Accordingly, define
\[
Y_m^i(s_i,a_i)
:=
{\bf1}\Big\{
\text{$(s_i,a_i)$ is observed in $B_m$, or $\tau_m\in B_m$}
\Big\}.
\]
Thus, $Y_m^i(s_i,a_i)$ is a ``visit-or-update'' indicator: every block
with $Y_m^i(s_i,a_i)=1$ either contributes at least one visit to
$(s_i,a_i)$ or can be charged to a confidence update. More precisely, fix any $t\le T$ and let $M_t:=\lfloor\frac{t}{2H_i+1}\rfloor$ be the number of complete blocks before time $t$, and $J_i(t)$ be the number of confidence updates before time $t$. Since the blocks are disjoint, for these $M_t$ complete blocks,
\begin{align}\label{eq:Y-to-count}
N_i^{t}(s_i,a_i)
\ge
\sum_{m=0}^{M_t-1}Y_m^i(s_i,a_i)-J_i(t)\geq \sum_{m=0}^{M_t-1}Y_m^i(s_i,a_i)-|\mathcal S_i||\mathcal A_i|
\bigl(1+\lfloor\log_2 t\rfloor\bigr),
\end{align}
where the second inequality follows by the dyadic update rule for confidence updates.

It remains to show that sufficiently many blocks have
$Y_m^i(s_i,a_i)=1$. By~\eqref{eq:hit-or-update} and the fact that the action floor~\eqref{eq:action-floor} gives conditional probability at least $\delta_i$ of selecting $a_i$, we have
\[
\E\!\left[
Y_m^i(s_i,a_i)
\,\middle|\,
\mathcal F_{\ell_m}
\right]
\ge\frac{\delta_i}{4}.
\]

Applying Freedman's inequality\footnote{We use the standard Freedman
inequality for martingale differences $X_m$ satisfying $|X_m|\le1$:
if $M_k=\sum_{m<k}X_m$ and
$V_k=\sum_{m<k}\E[X_m^2\mid\mathcal F_{\ell_m}]$, then
$\mathbb P\!\left(M_k\le-2\sqrt{V_kx}-\frac{2x}{3}\right)\le e^{-x}$
for every fixed $k$.}
to
\[
X_m
:=
Y_m^i(s_i,a_i)
-\E[Y_m^i(s_i,a_i)\mid\mathcal F_{\ell_m}],
\]
and observing that
$\sum_{m<M_t}\E[X_m^2\mid\mathcal F_{\ell_m}]\le M_t$, we obtain, for every fixed $t\le T$,
\[
\mathbb P\!\left(
\sum_{m=0}^{M_t-1}Y_m^i(s_i,a_i)
<
\frac{\delta_i}{4}M_t
-2\sqrt{M_t x}
-\frac{2x}{3}
\right)
\le e^{-x}.
\]
Using $2\sqrt{M_t x}
\le
\frac{\delta_i}{16}M_t+\frac{16x}{\delta_i}$, choosing
$x=c\Lambda_T$ with a sufficiently large universal constant $c>0$, and taking a union bound over all players, state--action pairs, and $t\le T$ gives
\[
\mathbb P\!\left(
\exists\,i,\,(s_i,a_i),\,t\le T:
\sum_{m=0}^{M_t-1}Y_m^i(s_i,a_i)
<
\frac{3\delta_i}{16}M_t
-c\frac{\Lambda_T}{\delta_i}
\right)
\le\frac{\gamma}{4}.
\]
Hence, with probability at least $1-\gamma/4$, simultaneously for all
$i$, $(s_i,a_i)$, and $t\le T$, combining this bound with
\eqref{eq:Y-to-count} gives
\begin{align}\label{eq:count-lower-proof}
N_i^t(s_i,a_i)
&\ge
\frac{3\delta_i}{16}M_t
-c\frac{\Lambda_T}{\delta_i}
-
|\mathcal S_i||\mathcal A_i|
\bigl(1+\lfloor\log_2 t\rfloor\bigr)\cr 
&\ge \frac{\delta_i}{32H_i}t
-
c\left(
\frac{\Lambda_T}{\delta_i}
+
|\mathcal S_i||\mathcal A_i|
\bigl(1+\lfloor\log_2 t\rfloor\bigr)
+1
\right),
\end{align}
where the second inequality is obtained using $M_t
=\lfloor\frac{t}{2H_i+1}\rfloor\ge
\frac{t}{3H_i}-1$ and by adjusting the universal constant. Therefore, by the definition of $K_i(T)$, for every $t\ge K_i(T)$, the number of visits to $(s_i,a_i)$ grows linearly,
\begin{align}\label{eq:linear-counter-growth}
N_i^t(s_i,a_i)\ge c\frac{\delta_i t}{H_i},
\end{align}
and consequently
\begin{align}\label{eq:beta-cover-proof}
\max_{s_i,a_i}\beta_i^t(s_i,a_i)
\le
c\sqrt{\frac{H_i\Lambda_t}{\delta_i t}}.
\end{align}

\paragraph{Step 2: Kernel approximation error.}
On the anytime confidence event $\mathcal E_{\rm conf}$ of
Lemma~\ref{lem:anytime-confidence}, $P_i\in\mathcal P_i^t$ for every
$i$ and $t$. Moreover, by construction,
\[
\hat P_i^t:=P^{\hat q_i^t}\in\mathcal P_i^t.
\]
Fix $t\ge K_i(T)$ and $(s_i,a_i)$, and let $\theta_i^t(s_i,a_i)\le t$ denote its most
recent dyadic confidence-update time. Since no new constraint for
$(s_i,a_i)$ is imposed between $\theta_i^t(s_i,a_i)$ and $t$,
\[
N_i^t(s_i,a_i)
<
2N_i^{\theta_i^t(s_i,a_i)}(s_i,a_i)
\quad \Rightarrow \quad
\beta_i^{\theta_i^t(s_i,a_i)}(s_i,a_i)
\le
\sqrt{2}\,\beta_i^t(s_i,a_i).
\]
On the anytime confidence event, both $P_i$ and
$\hat P_i^t$ belong to the confidence set $\mathcal P_i^t$ centered at the empirical kernel $\bar P_i^t$ with radius $\beta^{\theta_i^t(s_i,a_i)}(s_i,a_i)$. Thus, by the triangle inequality,
\begin{align*}
|\hat P_i^t(s_i'\mid s_i,a_i)
\!-\!P_i(s_i'\mid s_i,a_i)|
&\!\le\! |\hat P_i^t(s_i'\mid s_i,a_i)
\!-\!\bar{P}^t_i(s_i'\mid s_i,a_i)|\!+\!| \bar{P}_i^t(s_i'\mid s_i,a_i)
\!-\!P_i(s_i'\mid s_i,a_i)|\cr 
&\!\leq\! 2\beta_i^{\theta_i^t(s_i,a_i)}(s_i,a_i)
\le
2\sqrt{2}\,\beta_i^t(s_i,a_i).
\end{align*}
Therefore,
\begin{align}\label{eq:kernel-error-proof}
\max_{s_i,a_i}
\|\hat P_i^t(\cdot\mid s_i,a_i)
-P_i(\cdot\mid s_i,a_i)\|_1
\le
3|\mathcal S_i|
\max_{s_i,a_i}\beta_i^t(s_i,a_i).
\end{align}

\paragraph{Step 3: Kernel to occupancy error.} For an arbitrary time $K_i(T)\le t<T$, denote the stationary distributions of $P_i^{\pi_i^t}$ and $(\hat P_i^t)^{\pi_i^t}$ by $\nu_i^t$ and $\hat\nu_i^t$, respectively. By stationarity and Assumption~\ref{ass:ergodic},
\[
\begin{aligned}
\|\hat\nu_i^t-\nu_i^t\|_1
=
\|\hat\nu_i^t (\hat P_i^t)^{\pi_i^t}-\nu_i^t P_i^{\pi_i^t}\|_1\le
\max_{s_i}
\|(\hat P_i^t)^{\pi_i^t}(\cdot\mid s_i)-P_i^{\pi_i^t}(\cdot \mid s_i)\|_1
+
(1-\kappa)\|\hat\nu_i^t-\nu_i^t\|_1.
\end{aligned}
\]
Therefore,
\begin{align*}
\|\hat\nu_i^t-\nu_i^t\|_1
\le
\frac{1}{\kappa}
\max_{s_i}
\|(\hat P_i^t)^{\pi_i^t}(\cdot\mid s_i)-P_i^{\pi_i^t}(\cdot \mid s_i)\|_1\leq \frac{1}{\kappa}
\max_{s_i,a_i}
\|\hat P_i^t(\cdot\mid s_i,a_i)-P_i(\cdot \mid s_i,a_i)\|_1,
\end{align*}
where the last inequality follows from the convexity of the $\ell_1$-norm and because $(\hat P_i^t)^{\pi_i^t}$ and $P_i^{\pi_i^t}$ are formed using the same policy $\pi_i^t$. Because the represented and true occupancies use the same policy
$\pi_i^t$, $\|\hat\rho_i^t-\rho_i^t\|_1
=
\|\hat\nu_i^t-\nu_i^t\|_1$. Combining this identity with
\eqref{eq:kernel-error-proof} and \eqref{eq:beta-cover-proof}, for
every $K_i(T)\le t<T$,
\[
\|\hat\rho_i^t-\rho_i^t\|_1
\le \frac{3|\mathcal{S}_i|}{\kappa}\max_{s_i,a_i}\beta^t(s_i,a_i)\leq 
c\frac{|\mathcal S_i|}{\kappa}
\sqrt{\frac{H_i\Lambda_t}{\delta_i t}},
\]
which proves \eqref{eq:pointwise-occ-error}. Finally, using 
$\|\hat\rho_i^t-\rho_i^t\|_1\le2$ for $t<K_i(T)$ and
$\sum_{t=1}^Tt^{-1/2}\le2\sqrt T$,
\begin{align}\label{eq:avg-occ-error-proof}
\frac1T\sum_{t<T}\|\hat\rho_i^t-\rho_i^t\|_1
\le
c\frac{K_i(T)}{T}
+
c\frac{|\mathcal S_i|}{\kappa}
\sqrt{\frac{H_i\Lambda_T}{\delta_iT}}.
\end{align}
Since $g_i^t=g_i(q_{-i}^t)$ is independent of $s_i'$ and
$\|g_i^t\|_\infty\le1$,
\[
\begin{aligned}
\Big|\sum_{t<T}\langle\hat q_i^t-q_i^t,g_i^t\rangle\Big|
&=
\Big|\sum_{t<T}\sum_{s_i,a_i,s_i'}
\bigl(\hat q_i^t(s_i,a_i,s_i')-q_i^t(s_i,a_i,s_i')\bigr)
g_i^t(s_i,a_i)\Big|\\
&=
\Big|\sum_{t<T}\sum_{s_i,a_i}
\bigl(\hat\rho_i^t(s_i,a_i)-\rho_i^t(s_i,a_i)\bigr)
g_i^t(s_i,a_i)\Big|\le
\sum_{t<T}\|\hat\rho_i^t-\rho_i^t\|_1.
\end{aligned}
\]
Dividing by $T$ and applying \eqref{eq:avg-occ-error-proof} proves
\eqref{eq:Q-bound}. Finally, the visitation event above has probability at
least $1-\gamma/4$, while the anytime transition-confidence event
$\mathcal E_{\rm conf}$ of Lemma~\ref{lem:anytime-confidence} has
probability at least $1-\gamma/4$. Hence, by a union bound, their
intersection has probability at least $1-\gamma/2$, completing the
proof.
\end{proof}

\subsection{Proof of Lemma \ref{lem:delay-window-mixing}}

\begin{proof}
Fix $t\ge D_T$ and suppose the delay window $[t-D_T,t]$ is
\emph{clean}, meaning that no player updates its confidence set during
that window, and condition on $\mathcal F_{t-D_T}$. Freeze each player's
policy at $\pi_j^{t-D_T}$ and, as in Step 1 of the proof of
Lemma~\ref{lem:movement}, couple its actual local trajectory over
the delay window with the corresponding frozen-policy trajectory,
evolving under the true transition kernel $P_j$ and starting from
the same state at time $t-D_T$.\footnote{As in the proof of
Lemma~\ref{lem:movement}, we denote the local state trajectories of
the actual process and the frozen-policy process for player $i$ by
$\{S_i^{\tau}\}_{\tau}$ and $\{\bar{S}_i^{\tau}\}_{\tau}$,
respectively. Similarly, we denote the corresponding joint state
trajectories by $\{S^{\tau}\}_{\tau}$ and
$\{\bar{S}^{\tau}\}_{\tau}$, respectively.}
More precisely, whenever $S_j^\tau=\bar S_j^\tau$, we use a maximal
coupling of the one-step transition laws of the actual and frozen
processes. Thus, the conditional probability that the two next states
differ is bounded by the total variation distance between these
one-step transition laws. Iterating this coupling over the delay
window gives the path-separation bound in
\eqref{eq:local-path-coupling-movement}. Using this frozen-policy
coupling argument,
\begin{align}\nonumber
&\bigl\|
\mathbb P((S^t,A^t)\in\cdot\mid\mathcal F_{t-D_T})
-
\mathbb P((\bar S^t,\bar A^t)\in\cdot\mid\mathcal F_{t-D_T})
\bigr\|_{\rm TV}\cr
&\quad\le
\sum_{j=1}^n
\mathbb P\!\left(
\exists\,\tau\in[t-D_T,t]:
S_j^\tau\neq\bar S_j^\tau
\,\middle|\,
\mathcal F_{t-D_T}
\right)
+
\sup_{s}
\bigg\|
\bigotimes_{j=1}^n\pi_j^t(\cdot\mid s_j)
-
\bigotimes_{j=1}^n\pi_j^{t-D_T}(\cdot\mid s_j)
\bigg\|_{\rm TV},
\end{align}
where in the above inequality the first term bounds the probability
that the coupled state trajectories separate at some time during the
delay window. On its complement, the two processes have the same
terminal state $S^t=\bar S^t=s$. Conditional on this common state, the
players choose their actions independently according to
$\{\pi_j^t(\cdot\mid s_j)\}_{j=1}^n$ in the actual process and
$\{\pi_j^{t-D_T}(\cdot\mid s_j)\}_{j=1}^n$ in the frozen process,
which gives the second term. By bounding each summand in the above
inequality using \eqref{eq:local-path-coupling-movement} with
$D_T^2\eta L_j/\delta_j^2$ while bounding the second term using the
product-TV bound
\[
\|\bigotimes_j\mu_j-\bigotimes_j\nu_j\|_{\rm TV}
\le\sum_j\|\mu_j-\nu_j\|_{\rm TV},
\]
we can write
\begin{align}\label{eq:joint-delay-coupling}
&\bigl\|
\mathbb P((S^t,A^t)\in\cdot\mid\mathcal F_{t-D_T})
-
\mathbb P((\bar S^t,\bar A^t)\in\cdot\mid\mathcal F_{t-D_T})
\bigr\|_{\rm TV}\cr
&\qquad\le
cD_T^2\eta\sum_{j=1}^n\frac{L_j}{\delta_j^2}
+
\sup_s\sum_{j=1}^n
\|\pi_j^t(\cdot\mid s_j)
-\pi_j^{t-D_T}(\cdot\mid s_j)\|_{\rm TV}\nonumber\\
&\qquad\le
cD_T^2\eta\sum_{j=1}^n\frac{L_j}{\delta_j^2}
+
cD_T\eta\sum_{j=1}^n\frac{L_j}{\delta_j^2}
\le
cD_T^2\eta\sum_{j=1}^n\frac{L_j}{\delta_j^2},
\end{align}
where the second inequality follows from \eqref{eq:pi-movement}, and
the last inequality is obtained by adjusting the constant $c$.

Since, conditional on $\mathcal F_{t-D_T}$, the initial local states
and the frozen policies $\{\pi_j^{t-D_T}\}_{j=1}^n$ are fixed, and the
local transition and action randomizations are independent across
players, the frozen local processes evolve independently. Therefore,
their conditional joint law factorizes and we have
\begin{align}\label{eq:frozen-product-mixing}
\bigg\|
\mathbb P((\bar S^t,\bar A^t)\in\cdot\mid\mathcal F_{t-D_T})
-
\bigotimes_{j=1}^n\rho_j^{t-D_T}
\bigg\|_{\rm TV}
&=
\bigg\|
\bigotimes_{j=1}^n
\mathbb P((\bar S_j^t,\bar A_j^t)\in\cdot
\mid\mathcal F_{t-D_T})
-
\bigotimes_{j=1}^n\rho_j^{t-D_T}
\bigg\|_{\rm TV}\cr
&\le
\sum_{j=1}^n
\bigg\|
\mathbb P((\bar S_j^t,\bar A_j^t)\in\cdot
\mid\mathcal F_{t-D_T})
-
\rho_j^{t-D_T}
\bigg\|_{\rm TV}\nonumber\\
&\le
\sum_{j=1}^n
\bigg\|
\mathbb P(\bar S_j^t\in\cdot\mid\mathcal F_{t-D_T})
-
\nu_j^{t-D_T}
\bigg\|_{\rm TV}\nonumber\\
&\le
n(1-\kappa)^{D_T},
\end{align}
where the first inequality follows from the product-TV inequality, the
second follows because applying the common frozen action kernel
$\pi_j^{t-D_T}(\cdot\mid s_j)$ cannot increase total variation
distance, while the third follows from
Assumption~\ref{ass:ergodic}. Finally, combining
\eqref{eq:joint-delay-coupling} and
\eqref{eq:frozen-product-mixing} by the triangle inequality gives the
desired bound \eqref{eq:actual-product-mixing}.
\end{proof}

\subsection{Proof of Lemma \ref{lem:delay-coupling}}

\begin{proof}
We work throughout on the event of Lemma~\ref{lem:movement}, which holds with probability at least $1-\gamma/2$. Moreover, throughout we assume $t\ge D_T$ as the first $D_T$ primitive times can be bounded directly and contribute at most $cD_T/(T\delta_i)$ after normalization. Now, fix $t\ge D_T$ and call a window $[t-D_T,t]$ \emph{clean} if no player updates its confidence set during that window, and \emph{contaminated} otherwise. Since player $j$ has at most $J_j(T)$ confidence updates up to time $T$, and each such update can contaminate the delay windows associated with at most $D_T+1$ primitive times, the number of primitive times $t\in\{D_T,\ldots,T-1\}$ whose delay window $[t-D_T,t]$ is contaminated satisfies
\begin{align}\label{eq:number-contaminated-delay-windows}
\#\{\text{primitive times $t$ with contaminated delay windows}\}
\lesssim D_T\sum_{j=1}^n J_j(T).
\end{align}

For $t\ge D_T$, we use the direct decomposition
\begin{align}\label{eq:gradient-decomposition-full}
\widehat R_i^t-g_i^t
={}&
\left(
\widehat R_i^t-
\E[\widehat R_i^t\mid\mathcal F_{t-D_T}]
\right)
+
\left(
\E[\widehat R_i^t\mid\mathcal F_{t-D_T}]
-g_i^{t-D_T}
\right)
+
\left(
g_i^{t-D_T}-g_i^t
\right).
\end{align}
The first term is a delayed sampling martingale fluctuation, the second is the delayed bias, and the last term captures the drift of the stationary payoff gradient over the delay window.

\paragraph{Step 1: Sampling term.}
We now control the delayed martingale term $\widehat R_i^t-\E[\widehat R_i^t\mid\mathcal F_{t-D_T}]$.
By construction, $\|\widehat R_i^t\|_\infty\le1/\delta_i$, and hence $\big\|
\widehat R_i^t-
\E[\widehat R_i^t\mid\mathcal F_{t-D_T}]
\big\|_\infty
\le 2/\delta_i$. Moreover,
\[
\E\left[
\widehat R_i^t-
\E[\widehat R_i^t\mid\mathcal F_{t-D_T}]
\,\middle|\,
\mathcal F_{t-D_T}
\right]
=0.
\]
Because the centering is with respect to $\mathcal F_{t-D_T}$ rather than $\mathcal F_t$, partition $D_T\le t<T$ into the $D_T+1$ residue classes modulo $D_T+1$. Let $\mathcal{N}_r=\{D_T\leq t<T: (t\  \mbox{mod}\ D_T+1)= r\}$ denote the $r$th residue class. Within each residue class, successive times are separated by $D_T+1$, so for every coordinate $(s_i,a_i,s_i')$ the corresponding centered terms form a martingale-difference sequence. Moreover, since $\hat q_i^{t-D_T}$ is $\mathcal F_{t-D_T}$-measurable and has unit mass, the inner product of $\widehat R_i^t-\E[\widehat R_i^t\mid\mathcal F_{t-D_T}]$ with $\hat q_i^{t-D_T}$ also forms a martingale-difference sequence on the same residue class. Thus, Freedman's inequality gives, for every $x>0$,
\begin{align*}\nonumber
&\mathbb P\left(
\bigg|
\sum_{t\in \mathcal{N}_r}
\left(
\widehat R_i^t-
\E[\widehat R_i^t\mid\mathcal F_{t-D_T}]
\right)(s_i,a_i,s_i')
\bigg|
\gtrsim
\frac{\sqrt{|\mathcal{N}_r|x}+x}{\delta_i}
\right)
\le2e^{-x},\cr 
&\mathbb P\left(
\bigg|
\sum_{t\in \mathcal{N}_r}
\left\langle
\hat q_i^{t-D_T},
\widehat R_i^t-
\E[\widehat R_i^t\mid\mathcal F_{t-D_T}]
\right\rangle
\bigg|
\gtrsim
\frac{\sqrt{|\mathcal{N}_r|x}+x}{\delta_i}
\right)
\le2e^{-x}.
\end{align*}
Taking a union bound over all players, coordinates, and residue classes, with $x=c\Lambda_T$ for a sufficiently large universal constant $c$, gives an event $\mathcal E_{\rm grad}$ satisfying
\[
\mathbb P(\mathcal E_{\rm grad})
\ge 
1-\frac{\gamma}{4}.
\]
Since there are $D_T+1$ residue classes and $\sum_{r=0}^{D_T}|\mathcal{N}_r|\le T$, Cauchy--Schwarz gives $\sum_{r=0}^{D_T}\sqrt{|\mathcal{N}_r|}
\le \sqrt{T(D_T+1)}$.
Thus, summing the residue-class bounds, on $\mathcal E_{\rm grad}$, for any $q_i'\in\Delta_{i,\delta_i}(P_i)$, we have
\begin{align}\label{eq:gradient-martingale}
\bigg|
\sum_{D_T\leq t<T}
\left\langle
q_i'-\hat q_i^{t-D_T},
\widehat R_i^t-
\E[\widehat R_i^t\mid\mathcal F_{t-D_T}]
\right\rangle
\bigg|
\lesssim
\frac{
\sqrt{TD_T\Lambda_T}
+D_T\Lambda_T
}{\delta_i}.
\end{align}

It remains to replace $\hat q_i^{t-D_T}$ by $\hat q_i^t$ in \eqref{eq:gradient-martingale}. Using the occupancy movement bound \eqref{eq:q-q+} established in the proof of Lemma~\ref{lem:movement}, we have $\|\hat q_i^{\ell+1}-\hat q_i^{\ell}\|_1
\le \frac{L_i\eta}{\delta_i}$.
Hence, on a clean delay window,
\begin{align}\label{eq:q-clean-movement}
\|\hat q_i^t-\hat q_i^{t-D_T}\|_1
\le
\frac{D_TL_i\eta}{\delta_i}.
\end{align}
Using $\|\widehat R_i^t-\E[\widehat R_i^t\mid\mathcal F_{t-D_T}]\|_\infty\le 2/\delta_i$ and the trivial bound $\|\hat q_i^t-\hat q_i^{t-D_T}\|_1\le2$ at primitive times $t$ with contaminated delay windows, whose total number is at most of order $D_T\sum_{j=1}^nJ_j(T)$ by \eqref{eq:number-contaminated-delay-windows},
\begin{align}\label{eq:gradient-coefficient-transfer}
\bigg|\sum_{D_T\leq t<T}\left\langle \hat{q}^{t-D_T}_i-\hat{q}_i^t, \widehat R_i^t-
\E[\widehat R_i^t\mid\mathcal F_{t-D_T}]  \right\rangle\bigg|\lesssim \frac{D_TL_i\eta}{\delta_i^2}T
+
\frac{D_T}{\delta_i}
\sum_{j=1}^nJ_j(T).
\end{align}
Finally, by summing \eqref{eq:gradient-martingale} and \eqref{eq:gradient-coefficient-transfer}, and normalizing by $T$, we obtain 
\begin{align}\label{eq:final-gradient-martingale}
\bigg|
\frac{1}{T}\!\!\!\sum_{D_T\leq t<T}
\!\!\!\left\langle
q_i'-\hat q_i^{t},
\widehat R_i^t-
\E[\widehat R_i^t\mid\mathcal F_{t-D_T}]
\right\rangle
\!\bigg|
\lesssim
\frac{
\sqrt{TD_T\Lambda_T}
+D_T\Lambda_T
}{\delta_i T}+\frac{D_TL_i\eta}{\delta_i^2}
+
\frac{D_T}{\delta_iT}
\sum_{j=1}^nJ_j(T).
\end{align}

\paragraph{Step 2: Delayed bias.}
Fix an arbitrary $(s_i,a_i,s'_i)$. Since $r_i(S,A){\bf1}\{(S_i,A_i)=(s_i,a_i)\}$ is a deterministic function
of $(S,A)$ bounded by one, Lemma~\ref{lem:delay-window-mixing} implies
that, on every clean window,
\begin{align}\label{eq:gradient-unnormalized-mixing}
&\left|
\E\left[
r_i^t{\bf1}\{(S_i^t,A_i^t)\!=\!(s_i,a_i)\}
\,\middle|\,
\mathcal F_{t-D_T}
\right]
-
\rho_i^{t-D_T}(s_i,a_i)
g_i^{t-D_T}(s_i,a_i,s_i')
\right|=\cr
&\left|
\mathbb{E}\left[
r_i(S^t,A^t){\bf1}\{(S_i^t,A_i^t)\!=\!(s_i,a_i)\}
\,\middle|\,
\mathcal F_{t-D_T}
\right]
-\mathbb{E}_{(S,A)\sim\bigotimes\rho_j^{t-D_T}}
\big[
r_i(S,A){\bf1}\{(S_i,A_i)\!=\!(s_i,a_i)\}
\big]
\right|\cr 
&\qquad\lesssim
D_T^2\eta
\sum_{j=1}^n\frac{L_j}{\delta_j^2}
+
n(1-\kappa)^{D_T}.
\end{align}
Since $\hat\rho_i^{t-D_T}(s_i,a_i)$ is
$\mathcal F_{t-D_T}$-measurable, we may divide
\eqref{eq:gradient-unnormalized-mixing} by
$\hat\rho_i^{t-D_T}(s_i,a_i)$ and bring this factor inside the
conditional expectation. Using
$\hat\rho_i^{t-D_T}(s_i,a_i)\ge\delta_i$, we obtain
\begin{align}\label{eq:gradient-delayed-normalized-mixing}
&\left|
\E\bigg[
\frac{
r_i^t{\bf1}\{(S_i^t,A_i^t)=(s_i,a_i)\}
}{
\hat\rho_i^{t-D_T}(s_i,a_i)
}
\,\bigg|\,
\mathcal F_{t-D_T}
\bigg]
-
\frac{\rho_i^{t-D_T}(s_i,a_i)}
{\hat\rho_i^{t-D_T}(s_i,a_i)}
g_i^{t-D_T}(s_i,a_i,s_i')
\right|\cr 
&\qquad\qquad\lesssim
\frac{D_T^2\eta}{\delta_i}
\sum_{j=1}^n\frac{L_j}{\delta_j^2}
+
\frac{n}{\delta_i}(1-\kappa)^{D_T}.
\end{align}
We now relate the delayed normalization in
\eqref{eq:gradient-delayed-normalized-mixing} to the current
normalization used in $\widehat R_i^t$. On a clean delay window, using
\eqref{eq:q-clean-movement} and the fact that marginalization does not
increase the $\ell_1$-norm, we have
$\|\hat\rho_i^t-\hat\rho_i^{t-D_T}\|_1
\le D_TL_i\eta/\delta_i$. Since both
$\hat\rho_i^t(s_i,a_i)$ and
$\hat\rho_i^{t-D_T}(s_i,a_i)$ are at least $\delta_i$,
\[
\Big|
\frac1{\hat\rho_i^t(s_i,a_i)}
-
\frac1{\hat\rho_i^{t-D_T}(s_i,a_i)}
\Big|
\le
\frac{D_TL_i\eta}{\delta_i^3}.
\]
Consequently, for any $(s_i,a_i,s'_i)$ we have
\begin{align}\label{eq:gradient-denominator-bias}
&\left|
\E[\widehat R_i^t(s_i,a_i,s_i')\mid\mathcal F_{t-D_T}]
-
\E\bigg[
\frac{
r_i^t{\bf1}\{(S_i^t,A_i^t)=(s_i,a_i)\}
}{
\hat\rho_i^{t-D_T}(s_i,a_i)
}
\,\bigg|\,
\mathcal F_{t-D_T}
\bigg]
\right|
\le
\frac{D_TL_i\eta}{\delta_i^3}.
\end{align}
Finally, since $\|g_i^{t-D_T}\|_\infty\le1$ and $\hat\rho_i^{t-D_T}(s_i,a_i)\ge\delta_i$,
\begin{align}\label{eq:G-exact}
\bigg\|
\frac{\rho_i^{t-D_T}}
{\hat\rho_i^{t-D_T}}
g_i^{t-D_T}
-
g_i^{t-D_T}
\bigg\|_\infty
&\le
\frac{
\|\rho_i^{t-D_T}-\hat\rho_i^{t-D_T}\|_1
}{\delta_i}\lesssim
\frac{|\mathcal S_i|}{\kappa\delta_i}
\sqrt{
\frac{H_i\Lambda_T}
{\delta_i(t-D_T)}
}.
\end{align}
where the second inequality follows by the pointwise occupancy bound
\eqref{eq:pointwise-occ-error} of Lemma~\ref{lem:movement}. Combining \eqref{eq:gradient-delayed-normalized-mixing},
\eqref{eq:gradient-denominator-bias}, and \eqref{eq:G-exact}, for
every $t$ with a clean delay window and $t-D_T\ge K_i(T)$, we obtain
\begin{align}\nonumber
\left\|
\E[\widehat R_i^t\mid\mathcal F_{t-D_T}]
-
g_i^{t-D_T}
\right\|_\infty
\lesssim{}
\frac{D_TL_i\eta}{\delta_i^3}
+
\frac{D_T^2\eta}{\delta_i}
\sum_{j=1}^n\frac{L_j}{\delta_j^2}
+
\frac{n}{\delta_i}(1-\kappa)^{D_T}+
\frac{|\mathcal S_i|}{\kappa\delta_i}
\sqrt{
\frac{H_i\Lambda_T}
{\delta_i(t-D_T)}
}.
\end{align}

For the remaining times, we use the trivial $c/\delta_i$ bound.
In particular, by \eqref{eq:number-contaminated-delay-windows}, there
are at most of order $D_T\sum_{j=1}^nJ_j(T)$ times whose associated delay
windows are contaminated, and there are at most $K_i(T)$ times in
the range $D_T\le t<T$ for which $t-D_T<K_i(T)$. Therefore, using
$\sum_{u=1}^T u^{-1/2}\le2\sqrt T$, we obtain
\begin{align}\label{eq:gradient-delayed-bias-sum}
\frac1T
\sum_{D_T\le t<T}
\left\|
\E[\widehat R_i^t\mid\mathcal F_{t-D_T}]
-g_i^{t-D_T}
\right\|_\infty
&\lesssim{}
\frac{D_TL_i\eta}{\delta_i^3}
+
\frac{D_T^2\eta}{\delta_i}
\sum_{j=1}^n\frac{L_j}{\delta_j^2}
+
\frac{n}{\delta_i}(1-\kappa)^{D_T}
\nonumber\\
&+
\frac{|\mathcal S_i|}{\kappa\delta_i}
\sqrt{
\frac{H_i\Lambda_T}{\delta_iT}
}
+
\frac{D_T}{T\delta_i}
\sum_{j=1}^nJ_j(T)
+
\frac{K_i(T)}{T\delta_i}.
\end{align}

\paragraph{Step 3: Target drift.}
We now establish a bound on the target drift
$\|g_i^t-g_i^{t-D_T}\|_{\infty}$.
Since no confidence update occurs during a clean delay window,
\eqref{eq:kernel-movement} gives, for every
$\ell\in[t-D_T,t-1]$,
\[
\max_{s_j}
\|P_j^{\pi_j^{\ell+1}}(\cdot\mid s_j)
-P_j^{\pi_j^{\ell}}(\cdot\mid s_j)\|_1
\le
\frac{2L_j\eta}{\delta_j^2}.
\]
By stationarity and Assumption~\ref{ass:ergodic},
\[
\begin{aligned}
\|\nu_j^{\ell+1}-\nu_j^{\ell}\|_1
\!=\!
\|\nu_j^{\ell+1}P_j^{\pi_j^{\ell+1}}
\!\!\!\!\!-\nu_j^{\ell}P_j^{\pi_j^{\ell}}\|_1\!\le\!
\max_{s_j}
\|P_j^{\pi_j^{\ell+1}}\!\!\!\!(\cdot\mid s_j)
-P_j^{\pi_j^{\ell}}(\cdot\mid s_j)\|_1
\!+\!(1\!-\!\kappa)\|\nu_j^{\ell+1}-\nu_j^{\ell}\|_1,
\end{aligned}
\]
and hence
\[
\|\nu_j^{\ell+1}-\nu_j^{\ell}\|_1
\le
\frac{2L_j\eta}{\kappa\delta_j^2}.
\]
Since
$\rho_j^{\ell}(s_j,a_j)=\nu_j^{\ell}(s_j)\pi_j^{\ell}(a_j\mid s_j)$,
using \eqref{eq:pi-movement} gives
\begin{align}\label{eq:occupancy-bound-difference}
\|\rho_j^{\ell+1}-\rho_j^{\ell}\|_1
&\le
\|\nu_j^{\ell+1}-\nu_j^{\ell}\|_1
+
\max_{s_j}
\|\pi_j^{\ell+1}(\cdot\mid s_j)-\pi_j^{\ell}(\cdot\mid s_j)\|_1\lesssim
\frac{L_j\eta}{\kappa\delta_j^2}.
\end{align}
Therefore, telescoping over the clean delay window and using the
multilinearity of $g_i$ in the opponents' state--action occupancies,
together with the fact that rewards lie in $[0,1]$, gives
\begin{align}\label{eq:target-drift-window}
\|g_i^t-g_i^{t-D_T}\|_\infty
&\le
\sum_{j\ne i}\|\rho_j^t-\rho_j^{t-D_T}\|_1
\le
\sum_{j\ne i}\sum_{\ell=t-D_T}^{t-1}
\|\rho_j^{\ell+1}-\rho_j^{\ell}\|_1\lesssim
D_T\eta
\sum_{j\ne i}\frac{L_j}{\kappa\delta_j^2}.
\end{align}
Here, the first inequality follows from the multilinearity of $g_i$ in
the opponents' state--action occupancies and the fact that rewards lie
in $[0,1]$, the second inequality is obtained by telescoping over the
clean delay window, while the last follows from
\eqref{eq:occupancy-bound-difference}. Now, using the bound \eqref{eq:occupancy-bound-difference} for clean windows and the trivial bound $\|g_i^{t-D_T}-g_i^t\|_\infty\le1$ on contaminated windows, we obtain
\begin{align}\label{eq:gradient-delay-sum}
\frac1T
\sum_{D_T\le t<T}
\|g_i^{t-D_T}-g_i^t\|_\infty
\lesssim
D_T\eta
\sum_{j\ne i}\frac{L_j}{\kappa\delta_j^2}
+
\frac{D_T}{T}
\sum_{j=1}^nJ_j(T).
\end{align}

\noindent 
{\bf Putting all together:} Since $\|q_i'-\hat q_i^t\|_1\le2$, combining
\eqref{eq:gradient-decomposition-full},
\eqref{eq:final-gradient-martingale},
\eqref{eq:gradient-delayed-bias-sum}, and 
\eqref{eq:gradient-delay-sum}, and absorbing the first $D_T$
primitive times and the smaller contaminated-window terms, gives
\begin{align*}
\frac1T\left|
\sum_{t<T}
\langle q_i'-\hat q_i^t,\widehat R_i^t-g_i^t\rangle
\right|
\lesssim{}&
\frac{
\sqrt{TD_T\Lambda_T}
+D_T\Lambda_T
}{T\delta_i}
+\frac{D_T\eta L_i}{\delta_i^3}+
\frac{D_T^2\eta}{\delta_i}
\sum_{j=1}^n\frac{L_j}{\delta_j^2}
+
\frac{|\mathcal S_i|}{\kappa\delta_i}
\sqrt{
\frac{H_i\Lambda_T}{\delta_iT}
}\\
&+
D_T\eta
\sum_{j\ne i}\frac{L_j}{\kappa\delta_j^2}
+
\frac{D_T}{T\delta_i}
\sum_{j=1}^nJ_j(T)
+
\frac{D_T+K_i(T)}{T\delta_i},
\end{align*}
where since $(1-\kappa)^{D_T}
\le
e^{-\kappa D_T}
\le
\frac{\gamma}{8nT^3}$, the exponentially small mixing term in
\eqref{eq:gradient-delayed-bias-sum} is absorbed into the displayed
bound. Since $\mathbb P(\mathcal E_{\rm grad})\ge1-\gamma/4$, the
result holds on the event of Lemma~\ref{lem:movement} and an
additional event of probability at least $1-\gamma/4$, uniformly over
$q_i'\in\Delta_{i,\delta_i}(P_i)$. This proves
\eqref{eq:cumulative-gradient-error}.
\end{proof}

\section{Omitted Proofs for Section \ref{sec:asymptotic}}

\subsection{Proof of Lemma \ref{lem:asymptotic-visitation}}

\begin{proof}
Partition primitive time into deterministic blocks
$B_m=\{\ell_m,\ldots,\ell_m+2H_i\}$ with
$\ell_m=m(2H_i+1)$, as in the proof of Lemma~\ref{lem:movement}.
Because $\eta_t\to0$, for every sufficiently late block the
small-movement condition~\eqref{eq:T-cover-threshold}, with $\eta$
replaced by the stepsize at the beginning of the block, is satisfied.
Thus, conditional on the filtration at the deterministic left endpoint,
the event that the prescribed pair $(s_i,a_i)$ is observed in the
block, or that a confidence update occurs in the block, has probability
at least $\delta_i/4$. We may therefore apply the same Freedman and
union-bound argument leading to \eqref{eq:linear-counter-growth}.
To extend this finite-horizon bound to all sufficiently large times,
we apply it on the nested dyadic horizons $[0,T_k]$, $T_k=2^k$, and use
Borel--Cantelli to ensure that all sufficiently large such horizons are
simultaneously good. More precisely, apply the argument on horizon $T_k$
with failure probability $\gamma_k=c_0/(k+1)^2$, where $c_0>0$ is
fixed. Since $\sum_{k\ge0}\gamma_k<\infty$, the Borel--Cantelli lemma
implies that, almost surely, there exists a finite random $k_0$ such
that the corresponding finite-horizon event holds for every $k\ge k_0$,
simultaneously over the finitely many players and local pairs.
Consequently, almost surely, for all sufficiently large $k$, the bound
\eqref{eq:linear-counter-growth} holds simultaneously for all
$K_i(T_k)\le t\le T_k$. For every sufficiently large $t$, let
$k=\lceil\log_2 t\rceil$. Then $t\le T_k<2t$ and
$K_i(T_k)=O(\log(t+2))\le t$.
Hence \eqref{eq:linear-counter-growth} holds for all sufficiently large
$t$, which gives \eqref{eq:asymptotic-count}. Thus, almost surely, we
may choose a single finite random time $T_0$ large enough that
$k=\lceil\log_2 t\rceil\ge k_0$ and $K_i(T_k)\le t$ for every player
$i$ and every $t\ge T_0$, as claimed.

By substituting
\eqref{eq:asymptotic-count} into the definition of the confidence
radius $\beta_i^t(s_i,a_i)$ we get \eqref{eq:asymptotic-beta}. Moreover, 
\[
q_i^t(s_i,a_i,s_i')
=
\rho_i^t(s_i,a_i)P_i(s_i'\mid s_i,a_i),
\qquad
\hat q_i^t(s_i,a_i,s_i')
=
\hat\rho_i^t(s_i,a_i)
\hat P_i^t(s_i'\mid s_i,a_i).
\]
Therefore,
\begin{align}
\|\hat q_i^t-q_i^t\|_1
&\le
\sum_{s_i,a_i}
|\hat\rho_i^t(s_i,a_i)-\rho_i^t(s_i,a_i)|
\sum_{s_i'}\hat P_i^t(s_i'\mid s_i,a_i)
\notag\\
&\quad+
\sum_{s_i,a_i}
\rho_i^t(s_i,a_i)
\sum_{s_i'}
|\hat P_i^t(s_i'\mid s_i,a_i)
-P_i(s_i'\mid s_i,a_i)|
\notag\\
&\le
\|\hat\rho_i^t-\rho_i^t\|_1
+
\max_{s_i,a_i}
\|\hat P_i^t(\cdot\mid s_i,a_i)
-P_i(\cdot\mid s_i,a_i)\|_1\cr 
&\leq c\frac{|\mathcal{S}_i|}{\kappa}\max_{s_i,a_i}\beta_i^t(s_i,a_i)
+3|\mathcal{S}_i|\max_{s_i,a_i}\beta_i^t(s_i,a_i)
=O\bigg(\!
\sqrt{\frac{\log(t+2)}{t+1}}
\bigg)\to 0,
\label{eq:qhat-q-decomposition}
\end{align}
where the third inequality uses \eqref{eq:pointwise-occ-error} and \eqref{eq:kernel-error-proof}, and the last equality is obtained by \eqref{eq:asymptotic-beta}.   
\end{proof}

\vspace{-0.7cm}

\subsection{Proof of Lemma \ref{lem:restart-jumps}}

\begin{proof}
Fix $x\in\Delta_{i,\delta_i}(\mathcal P_i^t)$ and let
$P^x$ denote its induced transition kernel. By definition,
$P^x\in\mathcal P_i^t$. On the anytime confidence event $\mathcal E_{\rm conf}$,
$P_i\in\mathcal P_i^t$ as well. Therefore, the same dyadic-confidence
argument used to establish \eqref{eq:kernel-error-proof}, with
$\hat P_i^t$ replaced by $P^x$, gives
\begin{align}\label{eq:kernel-violation-polytope}
\max_{s_i,a_i}
\|P^x(\cdot\mid s_i,a_i)-P_i(\cdot\mid s_i,a_i)\|_1
\le
3|\mathcal S_i|
\max_{s_i,a_i}\beta_i^t(s_i,a_i).
\end{align}
Now $x$ satisfies the normalization, flow, nonnegativity, and
shrinkage constraints defining $\Delta_{i,\delta_i}(P_i)$. Its only
possible violation of this polytope is therefore in the true-kernel
constraints
\[
x(s_i,a_i,s_i')
=
\rho_x(s_i,a_i)P_i(s_i'\mid s_i,a_i).
\]
Since
$x(s_i,a_i,s_i')
=\rho_x(s_i,a_i)P^x(s_i'\mid s_i,a_i)$ and
$\rho_x(s_i,a_i)\le1$, the kernel bound \eqref{eq:kernel-violation-polytope} controls the
violation of these linear constraints. Hoffman's error bound\footnote{Hoffman's error bound states that, for a
nonempty polyhedron defined by a fixed finite system of linear equalities
and inequalities, the distance of any point to the polyhedron is bounded
by a constant times the magnitude of its constraint violations. The
constant depends only on the coefficients of the fixed constraint
system.} for the
fixed nonempty polytope $\Delta_{i,\delta_i}(P_i)$ consequently gives $\operatorname{dist}_1\!\left(
x,\Delta_{i,\delta_i}(P_i)\right)
\le O\big(|\mathcal S_i|
\max_{s_i,a_i}\beta_i^t(s_i,a_i)\big).$ Taking the supremum over
$x\in\Delta_{i,\delta_i}(\mathcal P_i^t)$ and applying
\eqref{eq:asymptotic-beta} proves \eqref{eq:set-convergence}.

We next control the change of the iterate at a confidence update 
$t\in\mathcal R_i$. Since the confidence sets are nested and contain
the true kernel $P_i$,  $\Delta_{i,\delta_i}(P_i)
\subseteq
\Delta_{i,\delta_i}(\mathcal P_i^{t+1})
\subseteq
\Delta_{i,\delta_i}(\mathcal P_i^t)$. Thus,
\begin{align}\label{eq:dis-beta-o}
\operatorname{dist}_1\!\left(
\hat q_i^t,\Delta_{i,\delta_i}(\mathcal P_i^{t+1})\right)
\le
\operatorname{dist}_1\!\left(
\hat q_i^t,\Delta_{i,\delta_i}(P_i)\right)
\le
O\big(|\mathcal S_i|
\max_{s_i,a_i}\beta_i^t(s_i,a_i)\big),
\end{align}
where the second inequality follows from \eqref{eq:set-convergence} since $\hat q_i^t\in\Delta_{i,\delta_i}(\mathcal P_i^t)$. At a confidence update time $t$, using the Euclidean proximal update rule in the algorithm, we can write
\begin{align}\nonumber
\|\hat q_i^{t+1}-\hat q_i^t\|_2
&\!=\!
\left\|
\operatorname{proj}_{\Delta_{i,\delta_i}(\mathcal P_i^{t+1})}
\!\left(\hat q_i^t\!+\!\eta_t\widehat R_i^t\right)
-\hat q_i^t
\right\|_2 \cr
&\!\le\!
\left\|
\operatorname{proj}_{\Delta_{i,\delta_i}(\mathcal P_i^{t+1})}
\!\left(\hat q_i^t\!+\!\eta_t\widehat R_i^t\right)
-\operatorname{proj}_{\Delta_{i,\delta_i}(\mathcal P_i^{t+1})}
\!\left(\hat q_i^t\right)
\right\|_2\!\!+
\left\|
\operatorname{proj}_{\Delta_{i,\delta_i}(\mathcal P_i^{t+1})}
\!\left(\hat q_i^t\right)
-\hat q_i^t
\right\|_2 \cr
&\!\le
\eta_t\|\widehat R_i^t\|_2
+
\operatorname{dist}_2\!\left(
\hat q_i^t,\Delta_{i,\delta_i}(\mathcal P_i^{t+1})\right)\cr 
&\!\leq\! \eta_t \frac{\sqrt{|\mathcal S_i|}}{\delta_i}+\operatorname{dist}_1\!\left(
\hat q_i^t,\Delta_{i,\delta_i}(\mathcal P_i^{t+1})\right)\cr
&\!\leq\! \eta_t \frac{\sqrt{|\mathcal S_i|}}{\delta_i}+O\big(|\mathcal{S}_i|\max_{s_i,a_i}\beta_i^t(s_i,a_i)\big),
\end{align}
where the second inequality uses
nonexpansiveness of projection, the third inequality holds because $\operatorname{dist}_2\le\operatorname{dist}_1$ and 
$\|\widehat R_i^t\|_2\le\sqrt{|\mathcal S_i|}/\delta_i$, and the last inequality follows from \eqref{eq:dis-beta-o}. Therefore, using the equivalence of norms in the fixed finite-dimensional occupancy space,
\[
\|\hat q_i^{t+1}-\hat q_i^t\|_1
\le
\theta_i\left(
\max_{s_i,a_i}\beta_i^t(s_i,a_i)+\eta_t
\right),
\]
where $\theta_i<\infty$ depends only on the fixed problem parameters such as $\delta_i$ and $|\mathcal{S}_i|$.

It remains to sum this bound over the confidence-update times $t\in \mathcal{R}_i$.
Fix a local pair $(s_i,a_i)$ and let $t_k$ be the time at which its
counter reaches the $k$th dyadic threshold, i.e., $N^{t_k}_i(s_i,a_i)=2^k$. Since the counter
can increase by at most one at each primitive time, $t_k\ge2^k$. Moreover, \eqref{eq:asymptotic-count} implies that, for all sufficiently large $k$, $t_k\leq c_i2^k$ for some constant $c_i<\infty$, and hence $t_k\asymp2^k$.
Thus, by \eqref{eq:asymptotic-beta} and
$\eta_t=\eta_0(t+1)^{-a}$, we obtain $\max_{s_i,a_i}\beta_i^{t_k}(s_i,a_i)
=O(\sqrt{k}\,2^{-k/2})$ and $\eta_{t_k}=O(2^{-ak})$, where we note that both these sequences are summable. Since player $i$ has only finitely many
local state--action pairs $(s_i,a_i)$, summing over their dyadic confidence
updates gives
\[
\sum_{t\in\mathcal R_i}
\|\hat q_i^{t+1}-\hat q_i^t\|_1\leq \theta_i |\mathcal{S}_i||\mathcal{A}_i|O\Big(\sum_k\big(\sqrt{k}\,2^{-k/2}+2^{-ak}\big)\Big)<\infty,
\]
which proves \eqref{eq:restart-jump-summable}. 
\end{proof}

\end{document}